\documentclass{article}
\usepackage[main, preprint]{neurips_2026}

\usepackage[utf8]{inputenc}
\usepackage[T1]{fontenc}
\usepackage{hyperref}
\usepackage{url}
\usepackage{booktabs}
\usepackage{amsfonts}
\usepackage{nicefrac}
\usepackage{microtype}
\usepackage{xcolor}
\usepackage{amsmath}
\usepackage{amssymb}
\usepackage{amsthm}
\usepackage{algorithm}
\usepackage{algorithmic}
\usepackage{graphicx}
\usepackage{subcaption}
\usepackage{multirow}

\newtheorem{assumption}{Assumption}[section]
\newtheorem{proposition}[assumption]{Proposition}
\newtheorem{theorem}[assumption]{Theorem}
\newtheorem{corollary}[assumption]{Corollary}
\newtheorem{lemma}[assumption]{Lemma}

\theoremstyle{remark}

\makeatletter
\newcommand{\blfootnote}[1]{%
  \begingroup
  \renewcommand\thefootnote{}\footnotetext{#1}%
  \addtocounter{footnote}{-1}%
  \endgroup
}
\makeatother

\title{Learning Deployable Causal Action Geometry under Temporal Non-Stationarity}

\author{
\textbf{Changjian Liu}$^{1,2}$ \thanks{Equal contribution.} \    \thanks{Work done during an internship at Taobao, Alibaba Group.}
\quad
\textbf{Tianyu Wang}$^{3}$\footnotemark[1]
\quad
\textbf{Yuwei Xu}$^{4}$\footnotemark[1] \ \footnotemark[2]
\quad
\textbf{Xiaoxuan Deng}$^{3}$
\quad
\textbf{Zhilin Zhang}$^{5}$
\\
\textbf{Yong Gao}$^{1,2}$\thanks{Corresponding authors: Yong Gao (\texttt{gaoyong@pku.edu.cn}) and Jian Xu (\texttt{xiyu.xj@taobao.com}).}
\quad
\textbf{Chuan Yu}$^{3}$
\quad
\textbf{Jian Xu}$^{3}$\footnotemark[3]
\quad
\textbf{Bo Zheng}$^{3}$
\\[0.5em]
$^{1}$ Institute of Remote Sensing and Geographic Information System, \\ School of Earth and Space Sciences, Peking University, Beijing, China \\
$^{2}$ Beijing Key Laboratory of Spatio-temporal Perception and Urban Resilience \\
$^{3}$ Taobao \& Tmall Group of Alibaba, Beijing, China \\ 
$^{4}$ The Chinese University of Hong Kong, Shenzhen, China 
$^{5}$ Independent Researcher 
}

\begin{document}
\maketitle
\blfootnote{Accepted at the 40th Conference on Neural Information Processing Systems (NeurIPS 2026).}
\vspace{-0.5em}

\begin{abstract}
Learning a response law for continuous decisions from observational time series is central to many real-world decision systems, yet temporal non-stationarity poses a fundamental challenge: as outcomes drift over time, naively pooled models can conflate action effects with shifting baselines and even reverse action rankings. Even when the one-step response is identifiable, unrestricted cross-time generalization remains fundamentally ill-posed. We therefore seek the part of the response that remains stable for decisions even when outcome levels do not. We propose deployable causal action geometry: the action-response shape that remains after anchoring away action-independent level shifts on a selected link scale and profiling out positive amplitude variation. Building on this target, we develop a two-stage orthogonal estimator that first obtains cross-fitted pilot estimates of anchored responses and then performs honest profiled selection over link, representation, and shape classes. The resulting estimator admits oracle guarantees for the restricted target and, under approximate persistence, produces decision rules that remain near optimal in future periods. Experiments on synthetic data and Dominick's support the proposed target and its transfer behavior. In a randomized Taobao A/B test, the deployed system improves GMV and purchase counts by \(1.16\%\) and \(1.70\%\).
\end{abstract}

\section{Introduction}
\label{sec:introduction}
Continuous decision making from observational logs is central to many large-scale platforms. A prominent example motivating this work is the budget decision in online shopping platforms. The system repeatedly adjusts spending under evolving traffic, policy, and demand conditions. Similar challenges arise in advertising, pricing, and other resource allocation systems, where actions are continuous and decisions must be made from logged time-series data rather than repeated randomized experimentation \citep{hua2021markdowns, multichanneluplift}. In such settings, the goal is not merely to predict realized outcomes, but to learn how outcomes would change under counterfactual action levels, so that future actions can be optimized \citep{athey2021policy}.

This setting raises two distinct challenges that are often conflated. The first is confounding: logged actions are chosen adaptively from histories that also affect outcomes, a problem extensively studied in continuous treatment effect estimation \citep{kennedy2017continuous,nie2021vcnet,hirano2005propensity} and counterfactual sequence modeling \citep{lim2018rmsn,bica2020crn,frauen2025temporalmetalearners,melnychuk2022causaltransformer}. The second, and the focus of this paper, is temporal non-stationarity. Demand, traffic, policies, and market conditions change over time, causing outcome levels to drift even when the causal effect of an action remains structurally similar. As a result, a pooled predictive model can fit historical outcomes well while absorbing time-varying baselines into the learned action surface. This creates a target mismatch: predictive accuracy on logged data need not imply correct action rankings for future deployment \citep{crasson2024causalforecasting,schultz2023causalpricing}. Existing invariant and robust prediction frameworks \citep{peters2016icp,arjovsky2019irm,rothenhausler2021anchor,krueger2021out} provide important tools for cross-environment stability, but they do not directly define a causal response target for confounded continuous actions under temporal drift.

This distinction leads to two separate questions. The first is identification within an observed period: under standard one-step exchangeability and overlap conditions, the period-specific response is identifiable. The second is transfer across periods: which part of these responses can be used to rank actions in future environments? We show that pooling observations across time can mix action-sensitive variation with drifting baseline structure and can even reverse the action ranking relevant for deployment. More fundamentally, past periods alone do not identify an unrestricted future response law. Thus the main obstacle is not only estimation under distribution shift, but the absence of a well-defined deployable target.

We address this by introducing a restricted target, which we call deployable causal action geometry. Rather than fitting raw outcome levels, we compare actions through anchored contrasts on a family of monotone link scales. Anchoring removes baseline shifts that do not depend on the action. The link scale and a profiled calibration factor isolate the shared action shape that determines intervention rankings. Identity and log links play a canonical role: they remove additive and multiplicative calibration. The resulting target allows period-specific levels and amplitudes to vary while requiring only the action-response shape relevant for decision making to persist. We estimate this target with an honest two-stage procedure. The estimator follows the orthogonal or debiased learning principle \citep{chernozhukov2018double,foster2019orthogonal}. The first stage constructs cross-fitted orthogonal pilots for anchored responses under standard confounding assumptions. The second stage performs profiled selection over the link function, representation, and shape class, yielding a decision rule over continuous actions. We prove an oracle inequality over a finite candidate library and show that, under approximate persistence of the learned geometry, the induced rule has controlled regret in future periods.

Our contributions are threefold. First, we formalize the target mismatch induced by pooled regression under temporal drift and show that unrestricted future response learning is generally ill posed. SSecond, we define a deployable causal action geometry based on anchored link-scale contrasts and develop an honest orthogonal estimator with an oracle inequality over a finite candidate library. Third, we characterize deployability under approximate geometry persistence, separating the structural cost of persistence violations from the statistical error of learning the geometry. We validate these claims in synthetic experiments that isolate link selection, nuisance robustness, overlap stress, and persistence violations. We further evaluate the method on the Dominick's dataset and in a large-scale Taobao pricing deployment, including randomized online A/B testing.

\section{Problem Setup}
\label{sec:problem_setup}
\subsection{One-Step Response under Temporal Drift}
\paragraph{Observed longitudinal data.}
We consider continuous one-step decision making from temporally non-stationary observational logs. Suppressing the unit index, let \(H_t\) denote the pre-treatment history available before period \(t\), \(A_t\in\mathcal A\subseteq \mathbb R\) the continuous action, and \(Y_{t+1}\) the next-step outcome. In the pricing application, \(H_t\) contains item attributes and recent sales and price histories, \(A_t\) is the normalized price intervention, and \(Y_{t+1}\) is next-step demand. 

\paragraph{Temporal non-stationarity and estimand scope}
Temporal non-stationarity means that the joint law of \((H_t,A_t,Y_{t+1})\) changes with \(t\). We decompose this variation into three sources of drift: \(p_t(h,a,y)=p_t(y\mid a,h)\,p_t(a\mid h)\,p_t(h),\) corresponding to changes in the outcome mechanism \(p_t(y\mid a,h)\), the logging policy \(p_t(a\mid h)\), and the history distribution \(p_t(h)\). The target mismatch below is driven by the interaction of these components: different action values may be observed under different mixtures of temporal environments, so \(Y\mid A=a\) does not necessarily represent a pure intervention response.
Our target is a proximal one-step response, not a long-horizon dynamic treatment effect. The question is therefore not only within-period identification, but whether the induced action ranking remains meaningful under temporal drift.

\paragraph{Within-Time Causal Description}
For each \(a\in\mathcal A\), let \(Y_{t+1}(a)\) denote the one-step potential outcome, while leaving the realized pre-treatment history \(H_t\). The period-specific causal response is $m_t(h,a) := \mathbb E_t\!\left[Y_{t+1}(a)\mid H_t=h\right]$.

\begin{assumption}[One-Step Identification]
\label{ass:one_step_identifiability}
For each $t$, there exists $\mathcal T_t\subseteq\mathcal H\times\mathcal A$ such that $\forall (h,a)\in\mathcal T_t$: (i) \textbf{Consistency:} $A_{i,t}=a \implies Y_{i,t+1}=Y_{i,t+1}(a)$ a.s. (ii) \textbf{No Interference:} $Y_{i,t+1}(a_i,\mathbf a_{-i}) = Y_{i,t+1}(a_i)$ $\forall \mathbf a_{-i}$. (iii) \textbf{Overlap:} $\exists$ open $\mathcal N(a)\ni a$ s.t.\ $f_{A_t\mid H_t}(a'\mid h)>0$ $\forall a'\in\mathcal N(a)$. (iv) \textbf{One-Step Exchangeability:} $Y_{i,t+1}(a) \perp A_{i,t} \mid H_{i,t}$.
\end{assumption}

\begin{proposition}[Within-time Identification]
\label{prop:pointwise_identification_setup}
Under Assumption~\ref{ass:one_step_identifiability}, $\forall t$ and a.e.\ $(h,a)\in\mathcal T_t$,
$ m_t(h,a) = \mathbb E_t \!\left[Y_{i,t+1}\mid H_{i,t}=h,A_{i,t}=a\right]$
\end{proposition}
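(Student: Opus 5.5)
The plan is the standard g-formula argument, written as a chain of equalities for fixed $t$ and $(h,a)\in\mathcal T_t$.
\begin{align*}
m_t(h,a) &= \mathbb E_t[Y_{t+1}(a)\mid H_t=h] \\
&= \mathbb E_t[Y_{t+1}(a)\mid H_t=h,A_t=a] \\
&= \mathbb E_t[Y_{t+1}\mid H_t=h,A_t=a].
\end{align*}
The second equality uses one-step exchangeability (iv). The third uses consistency (i), applied on the event $\{A_t=a\}$. No interference (ii) is used only at the start: it makes $Y_{i,t+1}(a)$ a well-defined function of unit $i$'s own action alone. This is what lets us suppress the unit index and treat the units as exchangeable copies. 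Our first step is therefore to record that, under (ii), $m_t$ is well defined and the argument reduces to a single generic unit.

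The delicate step is the conditioning on $\{A_t=a\}$. This event has probability zero because $A_t$ is continuous, so "conditioning on $A_t=a$" must be read through a regular conditional distribution. Such a distribution is defined only up to null sets, which is why the statement says "a.e."

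To handle this, we would phrase exchangeability at the level of regular conditional laws. Under (iv), the conditional law of $Y_{t+1}(a)$ given $(H_t,A_t)=(h,a')$ coincides with its law given $H_t=h$, for $P_{A_t\mid H_t=h}$-a.e.\ $a'$. Overlap (iii) then does the needed work. Since $f_{A_t\mid H_t}(\cdot\mid h)>0$ on an open neighbourhood $\mathcal N(a)$, the conditional law of $A_t$ given $H_t=h$ is mutually absolutely continuous with Lebesgue measure on $\mathcal N(a)$. Hence "for $P_{A_t\mid H_t}$-a.e.\ $a'$" upgrades to "for Lebesgue-a.e.\ $a'$ in $\mathcal N(a)$." Taking the union over $(h,a)\in\mathcal T_t$ gives the a.e.\ statement on $\mathcal T_t$ with respect to the base measure there.

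Consistency is transferred the same way. The identity $Y_{t+1}=Y_{t+1}(A_t)$ holds almost surely on the joint space. Consequently, the conditional expectations of $Y_{t+1}$ and of $Y_{t+1}(a')$ given $(H_t,A_t)=(h,a')$ agree for a.e.\ $a'$.

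The main obstacle is exactly this measure-theoretic bookkeeping: making the pointwise equalities rigorous for a continuous treatment. We would first try the disintegration argument above. If the version issue becomes cumbersome, the fallback is an integrated form of the identity. For any bounded measurable $g$ supported in $\mathcal N(a)$, we would show
\[
\mathbb E_t\bigl[g(A_t)\,Y_{t+1}\mid H_t=h\bigr]=\int g(a')\,m_t(h,a')\,f_{A_t\mid H_t}(a'\mid h)\,da'.
\]
This follows from consistency and exchangeability via the tower property. Comparing it with the defining property of the regression $\mathbb E_t[Y_{t+1}\mid H_t,A_t]$ and using positivity of the density yields equality of the two functions Lebesgue-a.e. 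We assume integrability of $Y_{t+1}(a)$ throughout so that all expectations exist.
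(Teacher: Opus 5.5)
Your three-line chain is exactly the paper's proof. The paper runs it from $\mu_t(h,a)=\mathbb E_t[Y_{t+1}\mid H_t=h,A_t=a]$, applies consistency on $\{A_t=a\}$ and then exchangeability, and simply asserts the result for a.e.\ $(h,a)$. The measure-theoretic layer you add, however, does not do what you claim, and it breaks exactly where the paper's chain is also silent. For fixed $a$, exchangeability (iv) gives $\mathbb E_t[Y_{t+1}(a)\mid H_t=h,A_t=a']=m_t(h,a)$ only for a.e.\ $a'$, whereas you need it at the single point $a'=a$. Overlap only changes the reference measure of the exceptional set. Taking a union over $a$ cannot reach the diagonal $\{a'=a\}$, which is itself null in the product. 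The same diagonal evaluation is hidden in your consistency transfer (``the conditional expectations of $Y_{t+1}$ and $Y_{t+1}(a')$ given $A_t=a'$ agree''). Likewise, your fallback identity $\mathbb E_t[g(A_t)Y_{t+1}\mid H_t=h]=\int g(a')m_t(h,a')f_{A_t\mid H_t}(a'\mid h)\,da'$ does not follow from the tower property plus per-action independence.

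Concretely, take $H$ trivial, $A\sim\mathrm{Unif}(0,1)$, $Y(a):=\mathbf 1\{A=a\}$ and $Y:=Y(A)=1$. Each $Y(a)$ is a.s.\ zero, so $A$ is independent of every $Y(a)$. It is even independent of the whole process $\{Y(a)\}_a$, whose $\sigma$-algebra contains only null and full events. Consistency, no interference and overlap also hold, yet $\mathbb E[Y\mid A=a]=1\ne 0=m(a)$ for every $a$. So (i)--(iv) as literally stated do not imply the identity. Neither your chain nor the paper's is complete without a joint condition under which $(a,y)\mapsto y(a)$ is measurable. Two examples: $Y_{t+1}(a)=F_t(a,H_t,\zeta)$ with $F_t$ jointly measurable and $\zeta\perp A_t\mid H_t$; or continuous paths $a\mapsto Y_{t+1}(a)$ with $A_t\perp Y_{t+1}(\cdot)\mid H_t$ as random continuous functions. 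Under such a condition your fallback is correct by Fubini. Compare it with $\int g(a')\mu_t(h,a')f_{A_t\mid H_t}(a'\mid h)\,da'$ over a countable determining class of $g$, and use $f_{A_t\mid H_t}>0$ on $\mathcal N(a)$; this yields $\mu_t(h,\cdot)=m_t(h,\cdot)$ Lebesgue-a.e.\ on $\mathcal N(a)$. That is a genuinely rigorous version of the paper's argument, but you should state the strengthened assumption explicitly rather than attribute the step to (iv).
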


The proof is standard and deferred to Appendix~\ref{app:proof_pointwise_identification}. Thus, temporal drift does not by itself preclude within-period identification. The remaining issue is deployment: a response learned from pooled historical
periods may target the wrong object.

\begin{figure}[!t]
    \centering
    \includegraphics[width=0.8\linewidth]{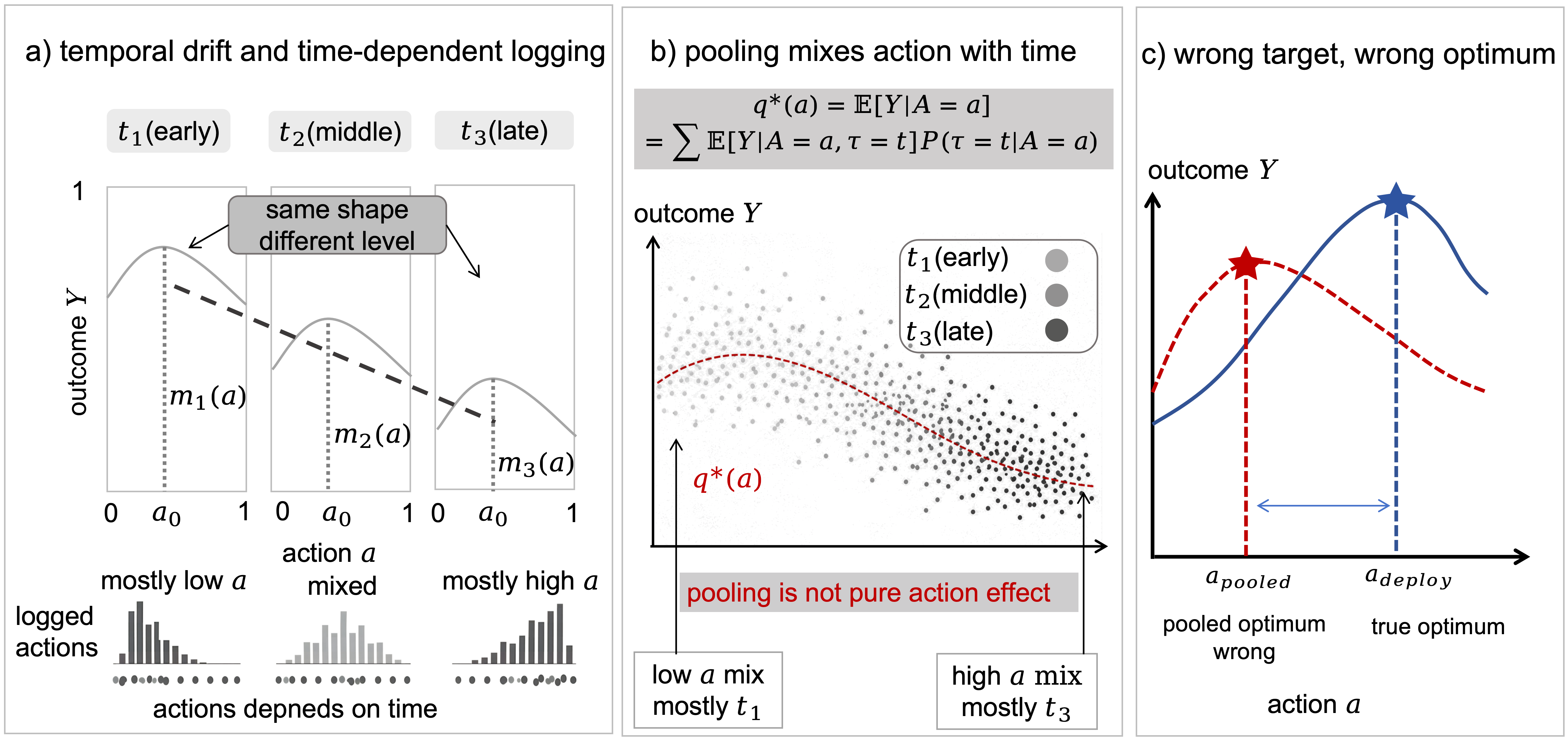}
    \caption{
\textbf{Target mismatch under drift.}
Pooled data under non-stationary environments compares confounded temporal mixtures, leading to action ranking reversal despite good historical fit.
}
    \label{fig:target_mismatch}
\end{figure}

\subsection{Target mismatch of pooled regression}
To formalize pooled learning, let \(\tau\) denote the period label of a randomly sampled training observation, with \(\mathbb P(\tau=t)=\omega_t\). Conditional on \(\tau=t\), the observation \((H,A,Y)\) follows the period-\(t\) law. We write \(\mathbb E_{\rm pool}\) for expectation under this mixture. A standard pooled predictive learner ignores \(\tau\) and targets
\(q^\star(x,a):=\mathbb E_{\rm pool}\!\left[Y\mid X=x,A=a\right],\) where \(X=\phi(H).\)
This is a valid factual prediction target under the pooled training distribution, but it need not be a deployable response surface. The mismatch arises because conditioning on \(A=a\) can change the mixture over
periods and histories; hence different action values may be evaluated under different temporal environments.

To expose the mismatch, consider an additive response decomposition \(m_t(h,a)=b_t(h)+r(h,a),\) where \(b_t(h)\) is an action-independent drifting baseline and \(r(h,a)\) is the action-sensitive component. Define \(s(x,a):=\mathbb E_{\rm pool}\!\left[r(H,a)\mid X=x\right],\) the action-sensitive response averaged within representation cell \(x\). This is the action ranking that pooled prediction would need to preserve for deployment.

\begin{proposition}[Pooling can reverse the deployable action ranking]
\label{prop:misattribution_pooled}
Under Assumption~\ref{ass:one_step_identifiability} and the decomposition above,
the pooled predictive target admits \(q^\star(x,a)=s(x,a)+\delta(x,a),\) where \(\delta(x,a)\) is a drift-induced distortion caused by conditioning on the logged action. If \(\delta(x,\cdot)\) varies across actions, the ranking induced by \(q^\star(x,\cdot)\) can differ from that induced by \(s(x,\cdot)\).
In particular, there exist data-generating processes such that
\[
s(x,a)>s(x,a')
\qquad\text{but}\qquad
q^\star(x,a)<q^\star(x,a').
\]
\end{proposition}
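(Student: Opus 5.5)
The plan is to first derive the decomposition \(q^\star=s+\delta\) by a short computation under the pooled mixture, and then exhibit an explicit two-period data-generating process in which \(\delta\) overturns the ranking.

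\textbf{Step 1 (decomposition).} Apply the tower property over \((\tau,H)\) given \((X,A)\):
\[
q^\star(x,a)=\mathbb E_{\rm pool}\!\left[\mathbb E_{\tau}[Y\mid H,A=a]\,\middle|\,X=x,A=a\right].
\]
By Proposition~\ref{prop:pointwise_identification_setup}, on \(\mathcal T_\tau\) the inner conditional expectation equals \(m_\tau(H,a)=b_\tau(H)+r(H,a)\). This gives
\[
q^\star(x,a)=\mathbb E_{\rm pool}[b_\tau(H)\mid X=x,A=a]+\mathbb E_{\rm pool}[r(H,a)\mid X=x,A=a].
\]
Adding and subtracting \(s(x,a)=\mathbb E_{\rm pool}[r(H,a)\mid X=x]\), define
\[
\delta(x,a):=\mathbb E_{\rm pool}[b_\tau(H)\mid X=x,A=a]+\Bigl(\mathbb E_{\rm pool}[r(H,a)\mid X=x,A=a]-\mathbb E_{\rm pool}[r(H,a)\mid X=x]\Bigr).
\]
The first term is baseline leakage and the second is selection on \(H\) within the cell. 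Both terms are nonzero only because conditioning on \(A=a\) shifts the conditional law of \((\tau,H)\) given \(X=x\). The claim that a non-constant \(\delta(x,\cdot)\) can change rankings is immediate: \(q^\star(x,a)-q^\star(x,a')=s(x,a)-s(x,a')+\delta(x,a)-\delta(x,a')\).

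\textbf{Step 2 (explicit counterexample).} To isolate pure temporal drift, take \(H\) degenerate (or \(X=H\) constant), two periods with \(\omega_1=\omega_2=1/2\), and actions \(\mathcal A=[0,1]\). Set \(r(h,a)=\beta a\) with \(\beta>0\), and baselines \(b_1=0\), \(b_2=-c\) with \(c>0\). This gives \(s(x,a)=\beta a\), so \(a=1\) ranks above \(a'=0\). Let the logging policy drift as well: in period 1, \(A\) has a density concentrated near \(1\); in period 2, near \(0\). For example, take densities \(p_1(a)\propto 1+\kappa a\) and \(p_2(a)\propto 1+\kappa(1-a)\), which keep full support so that overlap (iii) holds. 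Set \(Y=m_\tau(a)+\varepsilon\) with independent noise, so consistency and exchangeability hold trivially. Then
\[
q^\star(x,a)=\beta a - c\,\mathbb P(\tau=2\mid A=a),
\]
and \(\mathbb P(\tau=2\mid A=a)=p_2(a)/(p_1(a)+p_2(a))\) is decreasing in \(a\). Compute \(q^\star(x,1)-q^\star(x,0)=\beta - c\,[\pi(1)-\pi(0)]\), where \(\pi(a)=\mathbb P(\tau=2\mid A=a)\). Since the policy makes \(\pi(1)<\pi(0)\), this sign is fixed in the wrong direction. I will therefore flip the construction: make the high-baseline period favor low actions. Set \(b_1=c\), \(b_2=0\), with period 1 logging near \(a=0\), so that
\[
q^\star(1)-q^\star(0)=\beta-c\,[\mathbb P(\tau=1\mid A=0)-\mathbb P(\tau=1\mid A=1)].
\]
With \(\kappa\) large, the bracket is a positive constant \(\gamma\). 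Choosing \(c>\beta/\gamma\) yields \(q^\star(x,1)<q^\star(x,0)\) while \(s(x,1)>s(x,0)\), which is the claimed reversal.

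\textbf{Main obstacle.} The hard part is not the algebra but keeping the counterexample compatible with Assumption~\ref{ass:one_step_identifiability}, in particular overlap with positive densities on a neighborhood of each action, while making the period mixture genuinely action-dependent. The full-support linear densities above handle this. I will also double-check the sign bookkeeping in \(\pi\), since that is where the construction can silently go the wrong way, as the first attempt in Step 2 did.
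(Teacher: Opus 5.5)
Your proof is correct and follows the paper's route. It uses the same tower-property derivation of \(q^\star=s+\delta\) with the identical \(\delta\), and then a two-period, degenerate-history construction in which the high-baseline period mostly logs the worse action, so that baseline leakage overwhelms the action effect. The paper's version of that construction uses a two-point action set with \(\epsilon\)-aligned logging. Your continuous version with densities \(\propto 1+\kappa(1-a)\) and \(\propto 1+\kappa a\) gives \(\gamma=\kappa/(\kappa+2)>0\) for every \(\kappa>0\), so ``\(\kappa\) large'' is unnecessary, and the final write-up should present only the corrected sign choice.
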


Appendix~\ref{app:misattribution_pooled} gives the full expression for
\(\delta\), the proof, and a concrete ranking-reversal construction. Thus,
within-period identification alone does not justify optimizing a pooled
predictive surface: pooled regression may fit historical associations while selecting the wrong action for deployment. 

\section{Deployable Action Geometry}
\label{sec:stable_reduction}

Section~\ref{sec:problem_setup} shows that pooled prediction can learn the wrong action ranking under temporal drift. A natural candidate is an invariant response law. Such a law is both too strong and unnecessary for deployment: outcome levels may drift with demand, traffic, or policy conditions even when the relative ordering of actions is stable, while a one-step decision rule only requires the geometry of action contrasts, not the fully calibrated outcome surface.

We therefore target \emph{deployable causal action geometry}: the action-response shape that remains after removing action-independent offsets and positive amplitude variation on a suitable monotone link scale. Concretely, we anchor the response at a reference action to remove the linked-scale offset and profile out a positive scale factor, leaving the shared shape that determines action rankings. Thus, outcome levels and response amplitudes may vary across periods while the decision-relevant geometry remains transportable. Without a persistence assumption, observed periods cannot identify an unrestricted future response law; Appendix~\ref{app:future_not_identified} formalizes this point. We therefore treat this geometry as a restricted, decision-sufficient target for deployment under approximate persistence. See Appendix~\ref{app:terminology} for a terminology map.

Let \(\mathcal A_\star\subseteq\mathcal A\) be a compact common-support action region and \(a_0\in\mathcal A_\star\) a reference action. Assume \(a\mapsto m_t(h,a)\) is continuous on \(\mathcal A_\star\) for every observed period \(t\) and almost every \(h\).

\begin{figure}
    \centering
    \includegraphics[width=0.7\linewidth]{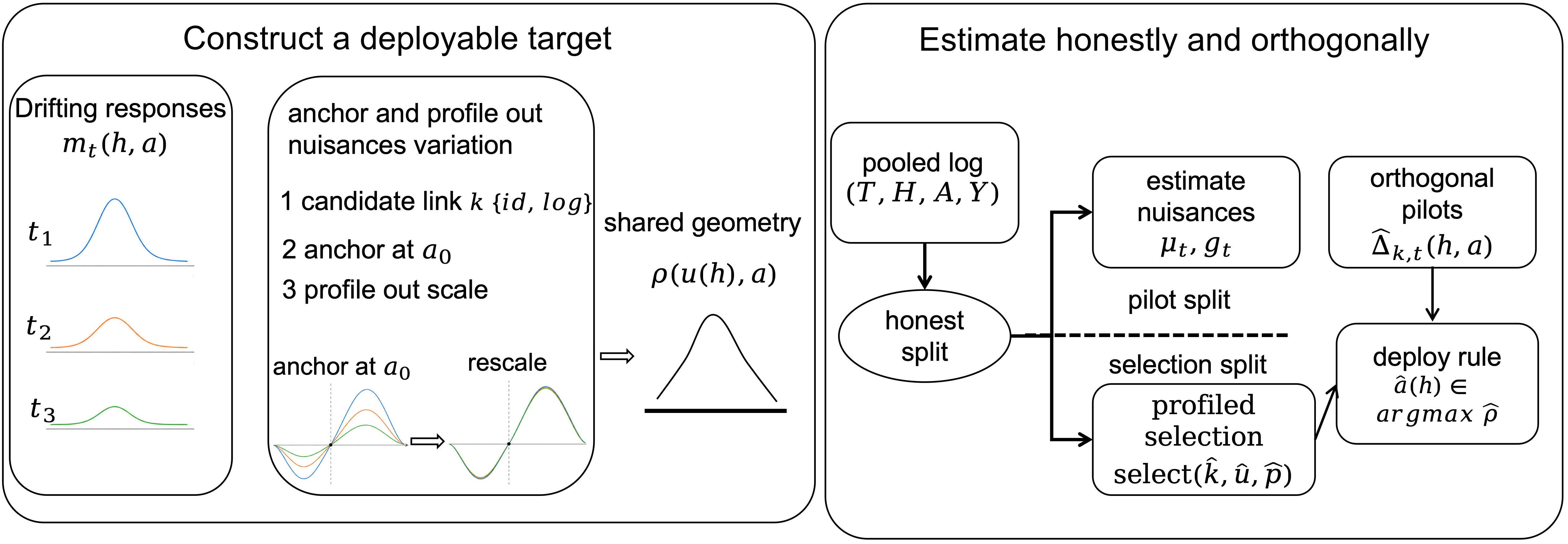}
    \caption{From drifting responses to deployable action geometry.}
    \label{fig:method}
\end{figure}

\subsection{Anchored link scales}
\label{subsec:linked_geometry}

For a strictly increasing link \(\ell_k\), define the anchored linked response \(\Delta_{k,t}(h,a):=\ell_k(m_t(h,a))-\ell_k(m_t(h,a_0)).\) Then \(\Delta_{k,t}(h,a_0)=0\), and anchoring removes an action-independent calibration term while preserving action rankings on the link scale.

We use a finite family of monotone links \(\{\ell_k\}_{k\in\mathcal K}\). Unless stated otherwise, \(\mathcal K=\{0,1\}\), with \(\ell_0(y)=y\) and \(\ell_1(y)=\log y\), where the log link is used only when \(m_t(h,a)>0\) on \(\mathcal A_\star\). The identity link removes additive calibration in anchored contrasts, while the log link removes multiplicative calibration. The following result formalizes their canonical role.

\begin{proposition}[Canonical exact nuisance-eliminating links]
\label{prop:canonical_links}
Let \(I\subseteq\mathbb R\) be an interval and let \(\ell:I\to\mathbb R\) be continuously differentiable and strictly increasing. 
(i) If \(\ell(y+b)-\ell(y_0+b)=\ell(y)-\ell(y_0)\) for all admissible \(y,y_0,b\), then \(\ell(y)=\alpha y+\beta\) on \(I\), with \(\alpha>0\). 
(ii) If \(I\subseteq(0,\infty)\) and \(\ell(cy)-\ell(cy_0)=\ell(y)-\ell(y_0)\) for all admissible
\(y,y_0,c>0\), then \(\ell(y)=\alpha\log y+\beta\) on \(I\), with \(\alpha>0\).
\end{proposition}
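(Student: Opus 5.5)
The plan is to differentiate the functional equations and reduce each to a statement that \(\ell'\) is constant, either directly or after a change of variables. Continuous differentiability is exactly what makes this work: it lets us pass from the difference identity to a pointwise identity on \(\ell'\) without any Cauchy-equation regularity arguments.

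\textbf{Part (i).} Fix admissible \(y_0\) and \(b\), and regard the identity \(\ell(y+b)-\ell(y_0+b)=\ell(y)-\ell(y_0)\) as a function of \(y\) on the set where both \(y\) and \(y+b\) lie in \(I\). Differentiating in \(y\) gives \(\ell'(y+b)=\ell'(y)\) for all \(y,y+b\in I\). Since \(I\) is an interval, any two points \(u,v\in I\) can be written as \(v=u+b\) with \(u,u+b\in I\), so \(\ell'(u)=\ell'(v)\) and \(\ell'\equiv\alpha\) is constant on \(I\). Integrating yields \(\ell(y)=\alpha y+\beta\). Strict monotonicity gives \(\alpha\ge 0\), and \(\alpha\neq0\) because \(\ell\) is not constant; hence \(\alpha>0\).

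\textbf{Part (ii).} The same argument applies after setting \(g(u):=\ell(e^u)\) on the interval \(J=\log I\). The multiplicative invariance \(\ell(cy)-\ell(cy_0)=\ell(y)-\ell(y_0)\) becomes the additive invariance \(g(u+\log c)-g(u_0+\log c)=g(u)-g(u_0)\). The function \(g\) is \(C^1\) and strictly increasing, so part (i) applies to \(g\) on \(J\) and gives \(g(u)=\alpha u+\beta\) with \(\alpha>0\), that is, \(\ell(y)=\alpha\log y+\beta\). Equivalently, one can differentiate directly in \(y\) to get \(c\,\ell'(cy)=\ell'(y)\), so \(y\ell'(y)\) is constant on \(I\).

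\textbf{Main obstacle.} The only delicate point is the phrase ``for all admissible'' \(y,y_0,b\). The differentiation step needs an open neighbourhood in \(y\) on which the identity holds, and the conclusion needs every pair of points of \(I\) to be connected by an admissible shift.
\begin{itemize}
\item If \(I\) is a nondegenerate interval, the admissible set \(\{(y,b): y\in I,\ y+b\in I\}\) has nonempty interior. Thus for any \(u<v\) in the interior of \(I\) we may take \(b=v-u\) and \(y\) near \(u\).
\item The constancy of \(\ell'\) then extends to the endpoints of \(I\) by continuity of \(\ell'\).
\end{itemize}
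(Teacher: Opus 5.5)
Your proposal is correct and matches the paper's proof: the paper likewise differentiates the additive invariance to conclude that \(\ell'\) is constant on the interior of \(I\), differentiating in \(b\) at \(b=0\) rather than in \(y\), which is an inessential difference. It then reduces (ii) to (i) via \(g(x)=\ell(e^x)\) on \(\log I\) exactly as you do, and your extra care about admissible shifts and extending to the endpoints by continuity is a harmless refinement of the same argument.
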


Thus identity and log are not merely preprocessing choices: if anchored contrasts must exactly remove additive or multiplicative calibration, they are canonical up to affine rescaling. Different links define different equivalence classes of response surfaces modulo time-varying calibration.

\subsection{Shared action geometry}
\label{subsec:shared_geometry}

The deployable object is not a period-specific anchored curve, but the part of that curve that persists after calibration. We seek a link \(k\), representation \(u\), and normalized shape \(\rho\) such that
\[
    \Delta_{k,t}(h,a)\approx s_t(h)\rho(u(h),a),\qquad s_t(h)\ge0.
    \label{eq:shared_geometry}
\]
Where \(s_t(h)\) absorbs period-specific calibration, while \(\rho(u(h),a)\) carries the action geometry for deployment.

Let \(\mathfrak U\) be a candidate class of representations \(u:\mathcal H\to\mathbb R^{d_u}\). For each \(k,u\), let \(\mathfrak R_k(u)\) be a normalized class of shapes \(\rho:u(\mathcal H)\times\mathcal A_\star\to\mathbb R\) with \(\rho(z,a_0)=0\); the scale/sign convention is in Appendix~\ref{app:normalization}. A candidate \((k,u,\rho)\) induces the action geometry
\[
    G_{k,u,\rho}(h):=\arg\max_{a\in\mathcal A_\star}\rho(u(h),a).
\]
When the maximizer is not unique, fix an arbitrary measurable selector. The latent tuple need not be unique; the deployment target is the induced action correspondence, or its decision-equivalence class (Appendix~\ref{app:representation_nonuniqueness}).

\subsection{Oracle target}
\label{subsec:oracle_target}
Let \(\varpi\) be a reference measure on \(\mathcal A_\star\) and write \(\|f\|_\varpi^2:=\int_{\mathcal A_\star} f(a)^2\,\varpi(da)\). For a candidate \((k,u,\rho)\), define the profiled geometry loss
\[
\mathcal A_k(u,\rho):=
\sum_{t=1}^T \pi_t\,
\mathbb E_t\!\left[
\inf_{s\ge 0}
\frac{
\left\|\Delta_{k,t}(H_t,\cdot)-s\rho(u(H_t),\cdot)\right\|_\varpi^2
}{
\left\|\Delta_{k,t}(H_t,\cdot)\right\|_\varpi^2
}
\right],
\label{eq:profiled_geometry_loss}
\]
with zero contribution when the denominator is zero. The infimum profiles out period-specific calibration, while the normalization removes amplitude, so candidates are compared by geometry rather than outcome level or link-scale magnitude. Appendix~\ref{app:profiled_calibration} gives the closed form of the profiled calibration, and Appendix~\ref{app:no_canonical_contrast} explains why fixing the anchored scale is necessary.

With regularization,
\(
    \mathcal L_k(u,\rho)
    :=
    \mathcal A_k(u,\rho)
    +
    \lambda_u\Omega_u(u)
    +
    \lambda_\rho\Omega_{k,\rho}(\rho).
\)
Any minimizer
\(
(k^\star,u^\star,\rho^\star)
\in
\arg\min_{k\in\mathcal K,\;u\in\mathfrak U,\;\rho\in\mathfrak R_k(u)}
\mathcal L_k(u,\rho)
\)
defines an oracle target within the admissible candidate family, with induced oracle geometry
\[
    G^\star(h):=
    \arg\max_{a\in\mathcal A_\star}\rho^\star(u^\star(h),a).
\]
Under exact calibrated-shape specification, \(\mathcal A_k(u,\rho)=0\) for a correct candidate; under misspecification, the target is the best regularized approximation within the admissible family. The object of interest is \(G^\star\), not a unique representation or a universal raw response surface.

\subsection{Deployability under approximate geometry persistence}
\label{subsec:approximate_deployability}

The oracle geometry is defined from observed periods. For deployment, we do not assume that the full future response surface is identified; we assume only that the anchored geometry persists approximately after calibration.

\begin{assumption}[Approximate geometry persistence]
\label{ass:approximate_persistence_main}
For deployment periods
\(\mathcal T_{\mathrm{dep}}\supseteq\{1,\ldots,T\}\), suppose that for every
\(t\in\mathcal T_{\mathrm{dep}}\), almost every \(h\), and every
\(a\in\mathcal A_\star\),
\[
    \Delta_{k^\star,t}(h,a)
    =
    s_t(h)\rho^\star(u^\star(h),a)+\xi_t(h,a),
    \qquad s_t(h)>0.
\]
\end{assumption}

The residual \(\xi_t\) measures violation of shared-geometry persistence. Small violation is sufficient for near-optimal action selection.

\begin{theorem}[Near-optimality under geometry persistence]
\label{thm:approximate_deployability_main}
Suppose Assumption~\ref{ass:approximate_persistence_main} holds. Let
\(a^\rho(h)\in G^\star(h)\) and
\(a_t^{\mathrm{opt}}(h)\in\arg\max_{a\in\mathcal A_\star}m_t(h,a)\). Then for
every \(t\in\mathcal T_{\mathrm{dep}}\) and almost every \(h\),
\[0\le
\Delta_{k^\star,t}\!\left(h,a_t^{\mathrm{opt}}(h)\right)
-
\Delta_{k^\star,t}\!\left(h,a^\rho(h)\right)
\le
2\sup_{a\in\mathcal A_\star}|\xi_t(h,a)|.
\]
If \(\xi_t\equiv0\), every selector from \(G^\star\) is optimal in every deployment period; otherwise, linked-response suboptimality is controlled by the persistence residual.
\end{theorem}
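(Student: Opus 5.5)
The argument is the standard "perturbed objective" comparison, carried out on the link scale at a fixed period \(t\in\mathcal T_{\mathrm{dep}}\) and a fixed \(h\) outside the null set where Assumption~\ref{ass:approximate_persistence_main} might fail. Write \(\varepsilon:=\sup_{a\in\mathcal A_\star}|\xi_t(h,a)|\); if \(\varepsilon=\infty\) there is nothing to prove.

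The first step is the lower bound. Since \(\ell_{k^\star}\) is strictly increasing and \(\Delta_{k^\star,t}(h,a)=\ell_{k^\star}(m_t(h,a))-\ell_{k^\star}(m_t(h,a_0))\), a maximizer of \(m_t(h,\cdot)\) over \(\mathcal A_\star\) also maximizes \(\Delta_{k^\star,t}(h,\cdot)\). Hence \(\Delta_{k^\star,t}(h,a_t^{\mathrm{opt}}(h))\ge\Delta_{k^\star,t}(h,a)\) for every \(a\in\mathcal A_\star\), and in particular for \(a=a^\rho(h)\). Existence of the maximizers is not an issue: the statement assumes selectors, and continuity of \(m_t(h,\cdot)\) on the compact set \(\mathcal A_\star\) guarantees they exist in any case.

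The second step is the upper bound via the decomposition in Assumption~\ref{ass:approximate_persistence_main}. Write \(a^\star=a_t^{\mathrm{opt}}(h)\) and \(\hat a=a^\rho(h)\). Then
\[
\Delta_{k^\star,t}(h,a^\star)-\Delta_{k^\star,t}(h,\hat a)
= s_t(h)\bigl[\rho^\star(u^\star(h),a^\star)-\rho^\star(u^\star(h),\hat a)\bigr]+\xi_t(h,a^\star)-\xi_t(h,\hat a).
\]
The bracket is \(\le 0\) because \(\hat a\in G^\star(h)=\arg\max_{a}\rho^\star(u^\star(h),a)\), and \(s_t(h)>0\), so the first term is nonpositive. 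The remaining difference of residuals is at most \(2\varepsilon\).

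For the final claim, take \(\xi_t\equiv0\). The bound then forces \(\Delta_{k^\star,t}(h,\hat a)=\max_a\Delta_{k^\star,t}(h,a)\). Strict monotonicity of \(\ell_{k^\star}\) transfers this back to \(m_t(h,\hat a)=\max_a m_t(h,a)\), so every selector from \(G^\star\) is optimal in every deployment period.

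There is no genuine obstacle; the points needing care are measure-theoretic and definitional. First, the a.e.\ quantifier must be handled uniformly over \(a\), which is fine because the assumption holds for every \(a\) at almost every \(h\). Second, when the log link is selected we need positivity of \(m_t\) on \(\mathcal A_\star\) for the linked response to be defined. Third, the sign convention from Appendix~\ref{app:normalization} must not flip the argmax; this is guaranteed because \(s_t(h)>0\) rather than merely \(s_t(h)\ge0\).
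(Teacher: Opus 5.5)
Your proof is correct and follows the paper's argument essentially step for step: the lower bound comes from the maximizers of \(m_t(h,\cdot)\) and \(\Delta_{k^\star,t}(h,\cdot)\) coinciding under a strictly increasing link, and the upper bound comes from the persistence decomposition, where the \(s_t(h)\)-weighted bracket is nonpositive and the residual difference is at most \(2\sup_{a}|\xi_t(h,a)|\). One minor aside: \(s_t(h)\ge 0\) would already suffice for the upper bound, so strict positivity is not actually what keeps the argmax from flipping in this argument.
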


Because \(\ell_{k^\star}\) is strictly increasing and anchoring subtracts an action-independent constant, the maximizers of \(\Delta_{k^\star,t}(h,\cdot)\), \(\ell_{k^\star}(m_t(h,\cdot))\), and \(m_t(h,\cdot)\) coincide on the link-valid region. Thus the theorem connects the calibration-quotiented geometry to the original action decision. Section~\ref{sec:orthogonal_estimation} develops an orthogonal procedure for estimating this oracle target.

\section{Orthogonal Estimation of Deployable Action Geometry}
\label{sec:orthogonal_estimation}

Section~\ref{sec:stable_reduction} defines the population target through a profiled criterion over link, representation, and shape classes. We estimate this target with an honest two-stage procedure. Stage I constructs cross-fitted orthogonal pilots of the anchored linked responses. Stage II uses an independent selection sample to choose the link, representation, and action geometry by minimizing an empirical profiled loss. The resulting decision rule is induced by the selected geometry rather than by a globally stable raw response surface. The estimator below is the canonical low-dimensional construction analyzed theoretically; Section~\ref{sec:real_world_application} describes the scalable realization used in the Taobao deployment.

Let \(O=(\tau,H,A,Y)\) denote a pooled observation, where \(\tau\) is the period label, \(\mathbb P(\tau=t)=\pi_t>0\), and \(Y=Y_{\tau+1}\) conditional on \(\tau\). We split the pooled sample into a pilot sample \(\mathcal I^{\mathrm{pil}}\) and an independent selection sample \(\mathcal I^{\mathrm{sel}}\), preserving temporal proportions. This separation prevents the criterion used for selection from reusing the data that constructed its response pilots.

\subsection{Estimator}
\label{subsec:orthogonal_estimator}

\paragraph{Orthogonal anchored-response pilots.}
For each period \(t\), let \(\mu_{0,t}(h,a)=\mathbb E[Y\mid H=h,A=a,\tau=t]\) and \(g_{0,t}(a\mid h)=f_{A\mid H,\tau=t}(a\mid h)\). For a candidate representation \(u\), representation kernel \(L_\lambda\), and action kernel \(K_b\), define the localized orthogonal score
\begin{equation}
\label{eq:orthogonal_score}
\begin{aligned}
\phi_{t,\lambda,b}(O;u,h,a,\eta_t)
={}&
\frac{\mathbf 1\{\tau=t\}}{\pi_t}
L_\lambda\!\left(u(H)-u(h)\right)
\\[-2pt]
&\times
\Bigg[
\frac{K_b(A-a)}{g_t(A\mid H)}
\{Y-\mu_t(H,A)\}
+
\int_{\mathcal A_\star}
K_b(s-a)\mu_t(H,s)\,ds
\Bigg].
\end{aligned}
\end{equation}
where \(\eta_t=(\mu_t,g_t)\). The corresponding local mass score is \(\psi_{t,\lambda}(O;u,h)=\pi_t^{-1}\mathbf 1\{\tau=t\}L_\lambda(u(H)-u(h))\). The residualized term in~\eqref{eq:orthogonal_score} makes the pilot locally robust to first-order nuisance error, while the integral term targets the kernel-smoothed response. Identification and orthogonality are proved in Appendix~\ref{app:sec4_scores}; regularity conditions are stated in Appendix~\ref{app:sec4_stage1}.

Using cross-fitted nuisance estimates on \(\mathcal I^{\mathrm{pil}}\), let \(\hat\nu_{t,\lambda,b}(u,h,a)\) and \(\hat\delta_{t,\lambda}(u,h)\) be the empirical averages of \(\phi_{t,\lambda,b}\) and \(\psi_{t,\lambda}\), respectively. With a vanishing truncation threshold \(\epsilon_n>0\), define \(\hat m_{t,\lambda,b}(u,h,a)=\hat\nu_{t,\lambda,b}(u,h,a)/
\{\hat\delta_{t,\lambda}(u,h)\vee\epsilon_n\}\). For a link \(\ell_k\), the anchored pilot is \(\hat\Delta_{k,t}(u,h,a)= \ell_k^{(\epsilon_n)}\{\hat m_{t,\lambda,b}(u,h,a)\} -\ell_k^{(\epsilon_n)}\{\hat m_{t,\lambda,b}(u,h,a_0)\}\), where \(\ell_0^{(\epsilon_n)}(y)=y\) and \(\ell_1^{(\epsilon_n)}(y)=\log(y\vee\epsilon_n)\). The truncation stabilizes the estimated log pilot; the population log link remains defined on the link-valid positive region.

Only finite-grid control is needed for the selector. For the Stage-II action grid \(\{a^{(m)}\}_{m=1}^M\subset\mathcal A_\star\), Appendix~\ref{app:sec4_stage1} proves that, under the stated conditions,
\begin{equation}
\label{eq:stage1_grid_bound}
\sup_{k\in\mathcal K}\sup_{u\in\mathfrak U_n}\sup_{1\le t\le T}
\max_{i\in\mathcal I^{\mathrm{sel}}}\max_{1\le m\le M}
\left|
\hat\Delta_{k,t}(u,H_i,a^{(m)})
-
\Delta_{k,t}(H_i,a^{(m)})
\right|
=
O_p(\varepsilon_n^{\mathrm{pilot}}).
\end{equation}
For a fixed evaluation point, the corresponding rate is \((n_t\lambda^{d_u}b)^{-1/2}+\bar r_n+\lambda^{\alpha_u}+b^{\alpha_a}\), where \(\bar r_n\) is the second-order nuisance product error; the uniform sequence \(\varepsilon_n^{\mathrm{pilot}}\) additionally absorbs the finite-library and finite-grid union step.

\paragraph{Honest profiled selection.}
Let \(\mathfrak U_n=\{u_1,\ldots,u_{M_n}\}\) be a finite representation library. For each \(k\in\mathcal K\) and \(u\in\mathfrak U_n\), let \(\mathfrak R_{k,n}(u)\) be a normalized shape class. Given quadrature points and weights \(\{(a^{(m)},w_m)\}_{m=1}^M\), write \(\|f\|_{M,w}^2=\sum_{m=1}^M w_m f(a^{(m)})^2\). For \(i\in\mathcal I^{\mathrm{sel}}\), let \(\hat\Delta^u_{k,i}(\cdot)=\hat\Delta_{k,\tau_i}(u,H_i,\cdot)\) and \(\rho_i^u(\cdot)=\rho(u(H_i),\cdot)\). The empirical profiled geometry loss is
\begin{equation}
\label{eq:empirical_profiled_loss}
\hat{\mathcal A}^{\mathrm{sel}}_k(u,\rho)
=
\frac{1}{n_{\mathrm{sel}}}
\sum_{i\in\mathcal I^{\mathrm{sel}}}
\inf_{s\ge0}
\frac{
\|\hat\Delta^u_{k,i}-s\rho_i^u\|_{M,w}^2
}{
\|\hat\Delta^u_{k,i}\|_{M,w}^2
},
\end{equation}
with the zero-energy convention of Section~\ref{subsec:oracle_target}. The closed-form profiler, denominator stability, and quadrature approximation are given in Appendix~\ref{app:sec4_profiled}. With penalties \(\Omega_u\) and
\(\Omega_{k,\rho}\), define
\(\hat{\mathcal L}^{\mathrm{sel}}_k(u,\rho)=
\hat{\mathcal A}^{\mathrm{sel}}_k(u,\rho)
+\lambda_{u,n}\Omega_u(u)+\lambda_{\rho,n}\Omega_{k,\rho}(\rho)\), and select
\[
(\hat k,\hat u,\hat\rho)
\in
\arg\min_{k\in\mathcal K,\;u\in\mathfrak U_n,\;\rho\in\mathfrak R_{k,n}(u)}
\hat{\mathcal L}^{\mathrm{sel}}_k(u,\rho).
\]
The deployed rule is \(\hat a(h)\in\arg\max_{a\in\mathcal A_\star}\hat\rho(\hat u(h),a)\). The normalization in~\eqref{eq:empirical_profiled_loss} is essential: it compares candidates by action geometry rather than by the numerical amplitude of their anchored contrasts.

\subsection{Finite-library guarantee}
\label{subsec:main_stat_result}

The main guarantee is criterion-level learnability within the admissible finite library. Let \(p_n\) be the largest finite-dimensional shape parameter dimension, let \(q_M\) denote quadrature error, and define \(r_n=\varepsilon_n^{\mathrm{pilot}}+ \sqrt{(\log|\mathfrak U_n|+p_n\log n)/n_{\mathrm{sel}}}+q_M\).

\begin{theorem}[Finite-library oracle inequality]
\label{thm:main_learnability}
Assume the pilot bound~\eqref{eq:stage1_grid_bound} holds, together with the finite-library entropy, quadrature, energy-separation, and penalty conditions in Appendix~\ref{app:sec4_stage2}. Then
\begin{equation}
\label{eq:finite_library_oracle}
\mathcal L_{\hat k}(\hat u,\hat\rho)
-
\inf_{k\in\mathcal K,\;u\in\mathfrak U_n,\;\rho\in\mathfrak R_{k,n}(u)}
\mathcal L_k(u,\rho)=O_p(r_n).
\end{equation}
\end{theorem}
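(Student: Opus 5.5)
The argument is the standard basic inequality for penalized empirical risk minimization over a finite library. Write \(\mathcal L^\dagger\) for the infimum in~\eqref{eq:finite_library_oracle} and fix a near-minimizer \((k^\dagger,u^\dagger,\rho^\dagger)\) with \(\mathcal L_{k^\dagger}(u^\dagger,\rho^\dagger)\le\mathcal L^\dagger+r_n\). The key intermediate object is the uniform deviation
\[
D_n:=\sup_{k,u,\rho}\bigl|\hat{\mathcal L}^{\mathrm{sel}}_k(u,\rho)-\mathcal L_k(u,\rho)\bigr|,
\]
where the supremum runs over the full admissible library. Since the penalties are identical in \(\hat{\mathcal L}^{\mathrm{sel}}\) and \(\mathcal L\), they cancel and \(D_n=\sup|\hat{\mathcal A}^{\mathrm{sel}}_k-\mathcal A_k|\). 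Optimality of \((\hat k,\hat u,\hat\rho)\) for \(\hat{\mathcal L}^{\mathrm{sel}}\) then gives
\[
\mathcal L_{\hat k}(\hat u,\hat\rho)\le \hat{\mathcal L}^{\mathrm{sel}}_{\hat k}(\hat u,\hat\rho)+D_n\le \hat{\mathcal L}^{\mathrm{sel}}_{k^\dagger}(u^\dagger,\rho^\dagger)+D_n\le \mathcal L^\dagger+r_n+2D_n.
\]
It therefore suffices to show \(D_n=O_p(r_n)\). The penalty condition in Appendix~\ref{app:sec4_stage2} is used only to keep the penalized minimizer inside a bounded sub-library where the entropy bound applies.

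I decompose \(D_n\) into three terms by introducing two intermediate criteria. The first, \(\tilde{\mathcal A}^{\mathrm{sel}}_k\), is the empirical criterion~\eqref{eq:empirical_profiled_loss} with the pilots \(\hat\Delta\) replaced by the true \(\Delta_{k,\tau_i}(H_i,\cdot)\) on the grid. The second, \(\mathcal A^M_k\), is its population counterpart with \(\|\cdot\|_\varpi\) replaced by \(\|\cdot\|_{M,w}\). This yields
\[
D_n\le \sup|\hat{\mathcal A}^{\mathrm{sel}}-\tilde{\mathcal A}^{\mathrm{sel}}|+\sup|\tilde{\mathcal A}^{\mathrm{sel}}-\mathcal A^M|+\sup|\mathcal A^M-\mathcal A|,
\]
and the three terms are handled as follows.

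\textbf{Quadrature term.} The third term is at most \(q_M\) by the quadrature condition.

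\textbf{Empirical-process term.} The second term is a supremum of centered averages of bounded summands: each profiled normalized ratio lies in \([0,1]\), since \(s=0\) is feasible. Honesty matters here, because \(\mathcal I^{\mathrm{sel}}\) is independent of the pilot construction, so conditionally these are i.i.d. averages. For fixed \((k,u)\), the map \(\rho\mapsto\) loss is Lipschitz in the \(p_n\)-dimensional shape parameter, which follows from the closed-form profiler of Appendix~\ref{app:profiled_calibration}. A \(1/n\)-net of size \(n^{O(p_n)}\), a union bound over \(|\mathcal K|\,|\mathfrak U_n|\) pairs, and Hoeffding's inequality then give \(O_p\bigl(\sqrt{(\log|\mathfrak U_n|+p_n\log n)/n_{\mathrm{sel}}}\bigr)\).

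\textbf{Pilot-perturbation term.} The first term is the main obstacle. The closed-form profile is
\[
\inf_{s\ge0}\frac{\|\Delta-s\rho\|^2}{\|\Delta\|^2}=1-\frac{(\langle\Delta,\rho\rangle_+)^2}{\|\Delta\|^2\|\rho\|^2},
\]
which is Lipschitz in \(\Delta\) with constant of order \(1/\|\Delta\|_{M,w}\). This constant blows up near zero energy. I would first split the selection units according to whether \(\|\Delta_{k,\tau_i}(H_i,\cdot)\|_{M,w}\ge c\).

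On units with energy at least \(c\), the grid bound~\eqref{eq:stage1_grid_bound} controls \(\|\hat\Delta-\Delta\|_{M,w}\le(\sum_m w_m)^{1/2}\,O_p(\varepsilon_n^{\mathrm{pilot}})\) uniformly in \(k,u,t,i\). This gives a perturbation of \(O_p(\varepsilon_n^{\mathrm{pilot}}/c)\). The bound holds uniformly over \(\rho\) because the profiled ratio is scale-invariant in \(\rho\) and the normalization of Appendix~\ref{app:normalization} fixes \(\|\rho\|\).

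On units with energy below \(c\), both ratios lie in \([0,1]\), and the zero-energy convention applies. The energy-separation condition of Appendix~\ref{app:sec4_stage2} is what makes the mass of this region negligible, of order \(r_n\). I would apply it directly if it is stated as a margin condition, \(\mathbb P(\|\Delta\|\le c)\lesssim c\), choosing \(c\asymp(\varepsilon_n^{\mathrm{pilot}})^{1/2}\) if necessary. Alternatively, if it is stated as a uniform lower bound \(\|\Delta\|\ge c_0\) off a null set, the bound on this region is immediate.

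Combining the three terms gives \(D_n=O_p(r_n)\), and substituting into the basic inequality proves~\eqref{eq:finite_library_oracle}.
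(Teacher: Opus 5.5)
Your proposal is correct and follows essentially the same route as the paper: the basic inequality $\mathcal L(\hat\theta)-\mathcal L(\theta_n^\star)\le 2\sup_{\Theta_n}|\hat{\mathcal L}^{\mathrm{sel}}-\mathcal L|$ with the penalties cancelling, and the uniform deviation split into pilot, empirical-process, and quadrature terms, where the pilot term is controlled by local Lipschitz continuity of the profiled ratio on the event that the grid energy is bounded below. The paper's energy-separation condition is the uniform lower bound $V_i(\theta)\ge\underline v$ with population zero-energy units removed, so your uniform-lower-bound branch is the one that applies; your margin-condition branch would only give a rate of order $(\varepsilon_n^{\mathrm{pilot}})^{1/2}$ rather than $r_n$, but it is not needed.
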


Theorem~\ref{thm:main_learnability} is deliberately criterion-level. It states that, within the chosen link, representation, and shape classes, the selected candidate approaches the best admissible population criterion at rate \(r_n\). It does not by itself imply recovery of a unique latent representation or policy regret in a future period.

Two conditional consequences connect criterion learnability to link recovery and deployment. First, if \(k^\star\) is the unique population link minimizer and the link selection gap \(\gamma_n=\min_{k\ne k^\star}\{V_n(k)-V_n(k^\star)\}\), with \(V_n(k)=\inf_{u,\rho}\mathcal L_k(u,\rho)\), satisfies \(r_n=o(\gamma_n)\), then \(\mathbb P(\hat k=k^\star)\to1\). Second, under the criterion-to-geometry bridge in Appendix~\ref{app:sec4_policy}, approximate geometry persistence in the deployment period, and a standard margin condition, the induced policy satisfies \(\mathrm{Regret}_t(\hat a)= O_p(\Psi_n(r_n)^{\kappa+1})+O(\mathrm{Pers}_t)\), where \(\mathrm{Pers}_t\) measures violation of shared geometry in period \(t\). Thus statistical error enters through the oracle-criterion rate \(r_n\), while irreducible deployment error is exactly the cost of persistence failure.

\section{Synthetic Experiments}
\label{sec:Synthetic_Experiments}
We use synthetic experiments to examine link recovery, future-period
decision quality, and the contributions of link selection and orthogonal
estimation. We consider four regimes. \(R1\) has additive temporal shifts,
for which the identity link is correct; \(R2\) has multiplicative shifts,
for which the log link is correct. \(R3\) introduces approximate action-dependent drift, and \(R4\) deliberately violates stable geometry through strong time-varying shape drift. Unless otherwise stated, we use \(T=20\) periods, a \(12/4/4\) train/validation/test split, a 51-point action grid, and 30 Monte Carlo repetitions. Our primary metrics are future-period stable IMSE (SIMSE) and regret; factual RMSE is reported only as a diagnostic. Full DGP details, implementation, and additional stress tests are in Appendix~\ref{app:sim_additional}.

\paragraph{Main evidence.}
Figure~\ref{fig:synthetic_main}(a,b) examines link recovery in the two
exact regimes, with recovery generally improving as sample size increases. Representation recovery is harder, but the deployable object is the induced action geometry rather than the latent representation itself. Figure~\ref{fig:synthetic_main}(c,d) and Table~\ref{tab:future_main} show lower future-period SIMSE and regret than pooled prediction and decision performance close to the regime-matched fixed-link reference. The close regret values indicate low adaptation cost when the generating link must be selected.
After fixing the selected link-representation pair, the orthogonal refit yields lower regret and action error than direct plug-in in the reported nuisance stress tests. For component attribution, an additional \(2\times2\) ablation varies orthogonal versus controlled plug-in pilots, which omit only the residual correction, at selection and refitting. Under outcome-model misspecification, most regret reduction occurs at refitting, while selection-stage effects on link recovery are regime-dependent. Full ablations are reported in Appendix~\ref{app:sim_orthogonal}.

\begin{figure}[t]
\centering
\begin{minipage}[t]{0.45\linewidth}
    \centering
    \includegraphics[width=\linewidth]{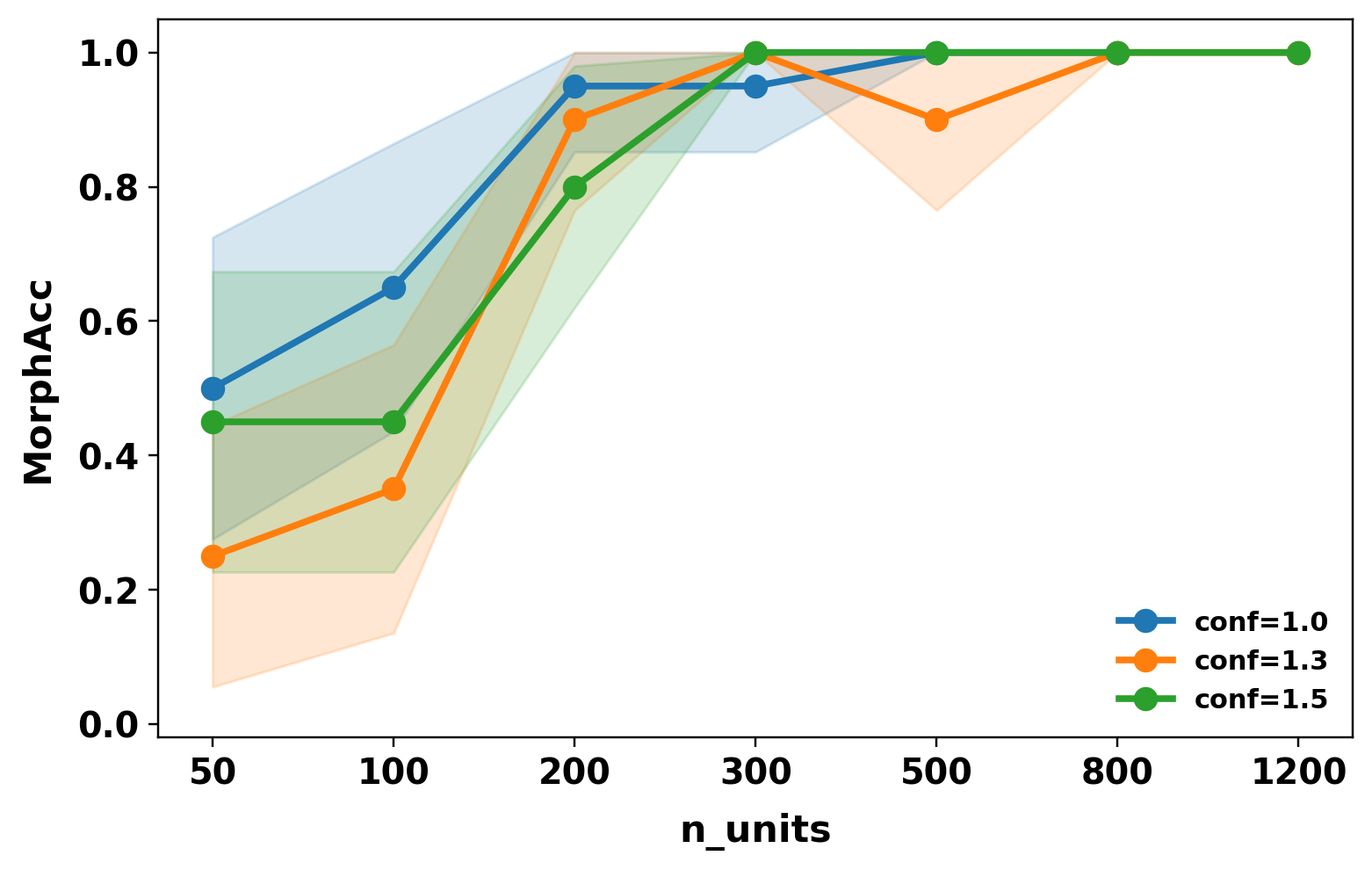}
    \vspace{-0.7em}
    {\footnotesize\textbf{(a)} Link recovery in R1.}
\end{minipage}
\hfill
\begin{minipage}[t]{0.45\linewidth}
    \centering
    \includegraphics[width=\linewidth]{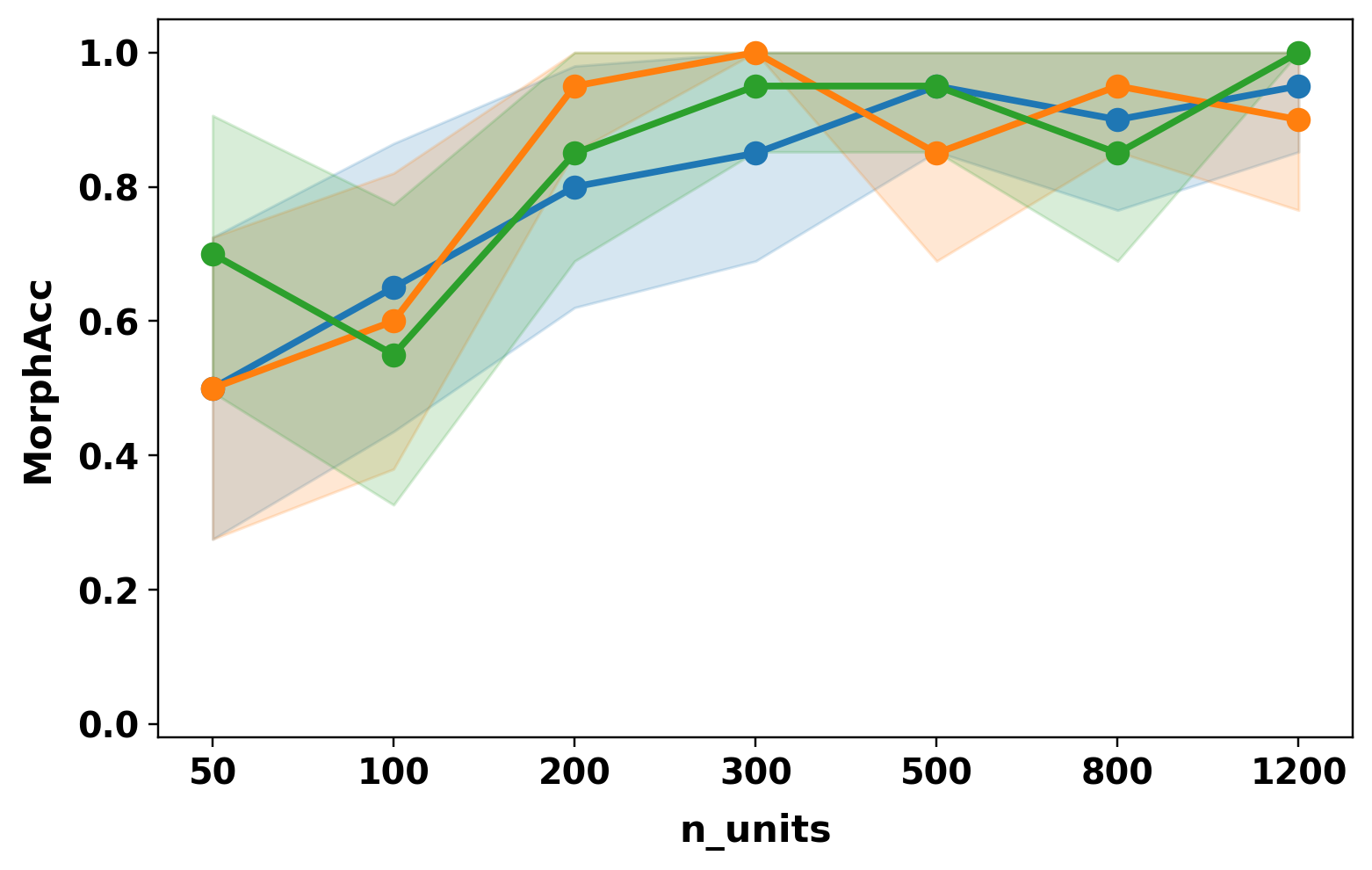}
    \vspace{-0.7em}

    {\footnotesize\textbf{(b)} link recovery in R2.}
\end{minipage}

\vspace{0.35em}

\begin{minipage}[t]{0.45\linewidth}
    \centering
    \includegraphics[width=\linewidth]{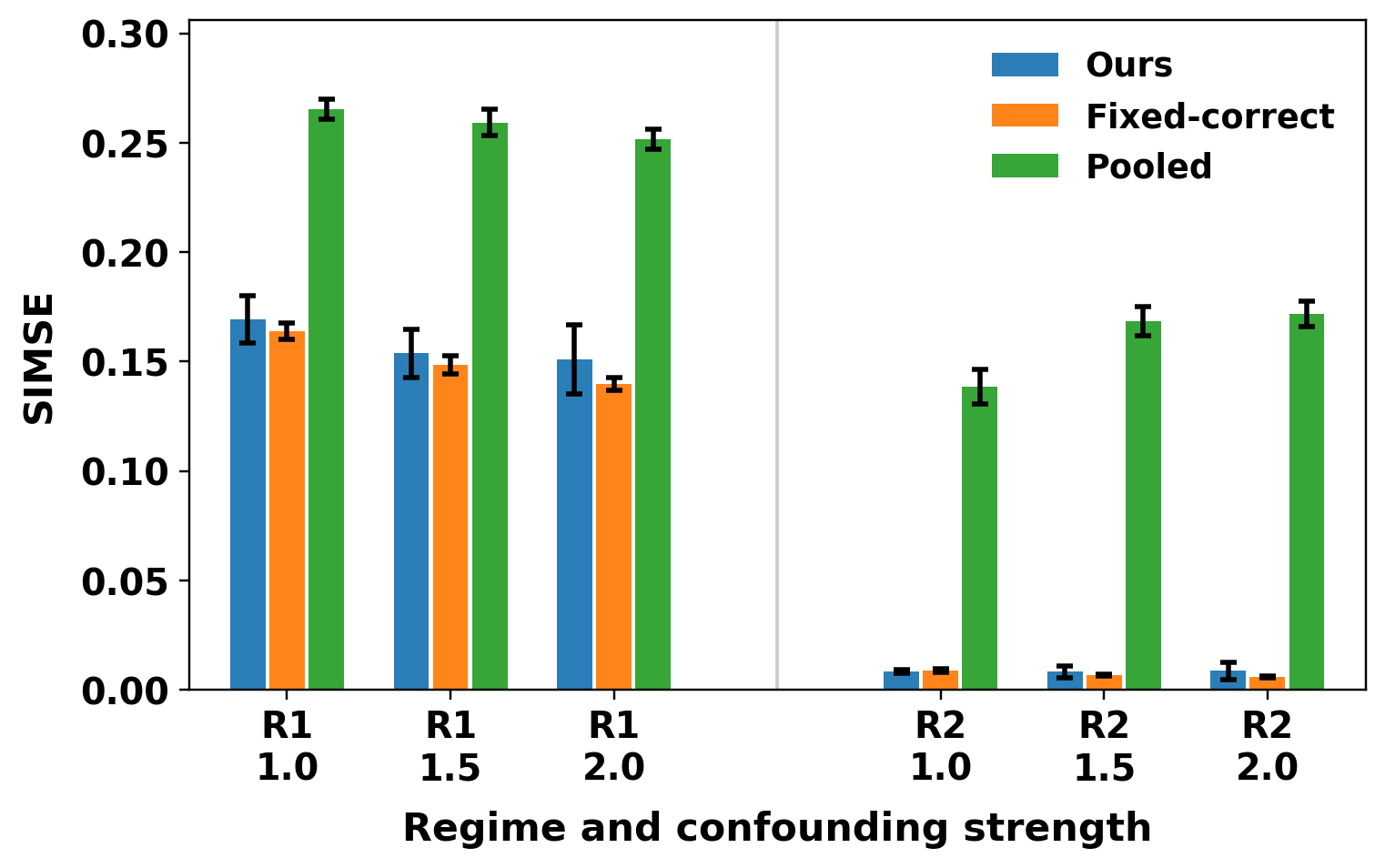}
    \vspace{-0.7em}

    {\footnotesize\textbf{(c)} Future-time SIMSE.}
\end{minipage}
\hfill
\begin{minipage}[t]{0.45\linewidth}
    \centering
    \includegraphics[width=\linewidth]{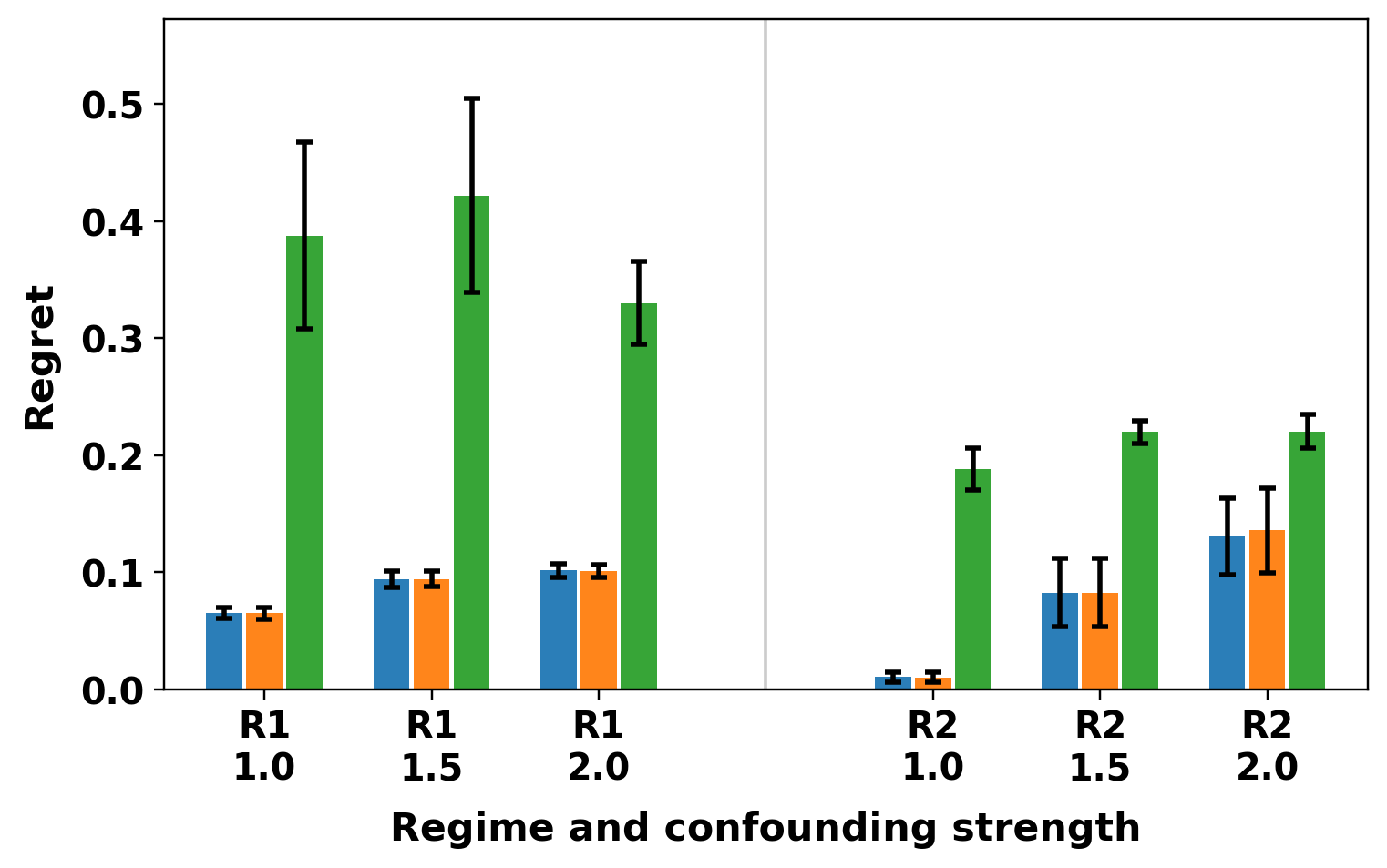}
    \vspace{-0.7em}

    {\footnotesize\textbf{(d)} Future-time regret.}
\end{minipage}

\vspace{-0.35em}
\caption{\textbf{Synthetic evidence.} Top: link recovery in the exact regimes. Bottom: future-period SIMSE and regret.}
\label{fig:synthetic_main}
\vspace{-0.6em}
\end{figure}

\begin{table}[t]
\centering
\setlength{\tabcolsep}{3.5pt}
\small
\caption{
\textbf{Future-period deployment.}
Each cell reports SIMSE / regret. Fixed-correct denotes the regime-matched fixed link: raw in \(R1\) and log in \(R2\).
}
\label{tab:future_main}
\begin{tabular}{llccc}
\toprule
Regime & Conf. & Ours & Fixed-correct & Pooled \\
\midrule
\multirow{3}{*}{\(R1\)}
& 1.0
& \(0.169{\pm}0.011\) / \(0.065{\pm}0.005\)
& \(0.164{\pm}0.004\) / \(0.065{\pm}0.005\)
& \(0.265{\pm}0.005\) / \(0.388{\pm}0.080\) \\
& 1.5
& \(0.154{\pm}0.011\) / \(0.094{\pm}0.007\)
& \(0.149{\pm}0.004\) / \(0.094{\pm}0.007\)
& \(0.259{\pm}0.006\) / \(0.422{\pm}0.083\) \\
& 2.0
& \(0.151{\pm}0.016\) / \(0.102{\pm}0.006\)
& \(0.140{\pm}0.003\) / \(0.101{\pm}0.006\)
& \(0.252{\pm}0.005\) / \(0.330{\pm}0.035\) \\
\midrule
\multirow{3}{*}{\(R2\)}
& 1.0
& \(0.008{\pm}0.001\) / \(0.011{\pm}0.004\)
& \(0.009{\pm}0.001\) / \(0.010{\pm}0.004\)
& \(0.139{\pm}0.008\) / \(0.189{\pm}0.018\) \\
& 1.5
& \(0.008{\pm}0.003\) / \(0.083{\pm}0.029\)
& \(0.007{\pm}0.001\) / \(0.083{\pm}0.029\)
& \(0.169{\pm}0.007\) / \(0.220{\pm}0.009\) \\
& 2.0
& \(0.009{\pm}0.004\) / \(0.131{\pm}0.033\)
& \(0.006{\pm}0.000\) / \(0.136{\pm}0.036\)
& \(0.172{\pm}0.006\) / \(0.220{\pm}0.014\) \\
\bottomrule
\end{tabular}
\end{table}

\paragraph{When link selection matters.}
We additionally shift the history distribution while preserving exact
action geometry, so that a wrong fixed link induces curvature outside
the shared shape class. Ours attains NRegret \(0.1385\), close to the
reference given the generating link (\(0.1384\)), and below plug-in
selection (\(0.1412\)), fixed identity (\(0.1416\)), and fixed log
(\(0.1422\)). This isolates the benefit of selecting the response scale
under transport shift;

\paragraph{Boundary and sensitivity diagnostics.}
Appendix~\ref{app:sim_additional} further probes the assumptions behind the target. In \(R3\), increasing action-dependent drift weakens link recovery; an additional controlled sweep shows increasing oracle-geometry regret as the persistence residual grows. In \(R4\), no shared geometry exists, so the goal is failure-boundary characterization rather than uniform superiority. We also report overlap stress tests and sensitivity to the action grid and reference action.

\section{Real-world Pricing Dataset and Application}
\label{sec:real_world_application}
We evaluate our framework on a large-scale observational pricing dataset from Taobao, one of China's lagest e-commerce platforms. Historical prices reflect adaptive merchant behavior under non-stationary demand driven by traffic, seasonality, and campaigns. We construct an item-day panel where the action is the price normalized near $1$. Offline evaluation is conducted via a rolling-origin chronological split on three months of data, containing $\sim 546.3M $ observations across $8.3M$ unique items (averaging $\sim 6.0M$ per day). Full construction details are provided in Appendix~\ref{app:real_world}.

\paragraph{Production implementation.}
At Taobao scale, we deploy a scalable target aligned neural realization that preserves the same anchored identity/log estimand and honest two-stage separation. A fixed 320-dimensional multimodal encoder maps histories to representations. Separate baseline and treatment nuisance models are trained with item-disjoint cross-fitting, and all out-of-fold predictions are frozen before response fitting. The action-response shape is modeled by an anchored three-layer MLP, trained with an R-loss-style residualized objective. Because the encoder is fixed before link selection, the representation library is a singleton; at each weekly retraining window, identity- and log-link candidates are compared using the profiled criterion on an independent chronological validation period, and the selected checkpoint is exactly the one used for deployment.

\subsection{Offline Evaluation}
\noindent\textbf{Metrics and baseline} We report three primary offline metrics: relative factual MAE $RMAE$ and $MAPE$ on observed outcomes; policy value $V_{DR}$ induced by models evaluated with a localized DR estimator for continuous actions \citep{kallus2018policy}; $\mathrm{StabLoss}$ measures future time transportability. We use baseline like Neural Parametric Model (NPM) which is a strong baseline in the business application, VcNet, Causal Transformer, Pooled MLP, and Time-aware pooled MLP.
\textbf{Main offline results.} Table~\ref{tab:real_baseline} reports factual accuracy, future-profile stability, and DR-based offline policy value. The baselines exhibit a clear trade-off: Causal Transformer gives the best factual fit, fixed raw-anchoring gives the best stability, and NPM gives the strongest policy value. Our method dominates this trade-off, achieving the best accuracy, the lowest stability loss, and the highest DR Policy Value Index. These results support our claim that future-time pricing decisions require stable response learning rather than same-period prediction alone. 
\textbf{Link-scale diagnosis.}
Figure~\ref{fig:realworld_combined}(b) compares a split level future-profile inconsistency under the raw-scale and log-linked anchored contrasts. The log-linked contrast attains a lower score in four of the six rolling splits and also yields a lower mean score across splits ($0.5566$ vs.\ $0.5718$). The evidence is strongest in Splits 3 and 5, while Splits 1 and 6 favor the raw-scale contrast.We therefore interpret the evidence as time-varying preference between the two response scales. As intended, this diagnosis is a deployment-oriented structural check based on future-period profile consistency rather than a claim of ground-truth link recovery.
\textbf{Temporal drift and deployment distance.}
We visualize the non-stationary demand environment underlying the real-world pricing task in Figure~\ref{fig:realworld_combined}(a). The figure plots normalized market-level and category-level demand indices over time, together with the rolling evaluation windows used in our chronological protocol. The demand trajectories vary substantially across periods, confirming that the deployment environment is far from stationary. This motivates evaluating pricing policies chronologically rather than with random splits. 

\begin{table}[t]
\centering
\small
\caption{Offline evaluation.}
\label{tab:real_baseline}
\setlength{\tabcolsep}{5pt}
\begin{tabular}{lcccc}
\toprule
Method & RMAE $\downarrow$ & MAPE $\downarrow$ & Stability $\downarrow$ & DR Policy Value Index $\uparrow$ \\
\midrule
Logged / behavior   & --    & --    & --    & 100.00 \\
NPM                 & 0.176 & 18.27 &  --   & 121.72 \\
VCNet               & 0.385 & 48.85 & 0.452 & 116.09 \\
Causal Transformer  & 0.174 & 17.35 & 0.566 & 107.18 \\
Pooled predictive   & 0.337 & 44.69 & 0.503 & 89.43 \\
Time-aware pooled   & 0.496 & 57.91 & 0.433 & 72.96 \\
Fixed identity  & 0.223 & 26.85 & 0.323 & 117.41 \\
Fixed log          & 0.138 & 16.33 & 0.282 & 122.36 \\
Ours                & 0.103 & 12.75 & 0.259 & 128.32  \\
\bottomrule
\end{tabular}
\end{table}

\subsection{Online A/B Result}
We finally test whether the offline gains translate into real deployment. We run an online A/B experiment on $\sim 280$k items over 26 days (2026/03/24--2026/04/18). As shown in Figure~\ref{fig:realworld_combined}(c), the treatment policy yields consistent positive lifts on all three core business metrics: GMV (+1.16\%, $p=0.037$), Buy counts (+1.70\%, $p=0.004$), and Commission (+1.51\%, $p<0.001$). These results provide direct online evidence that the learned price-response structure improves decision quality beyond retrospective offline evaluation.

\begin{figure}[t]
\centering
\begin{minipage}[t]{0.54\linewidth}
    \vspace{0pt}
    \centering
    \includegraphics[width=\linewidth]{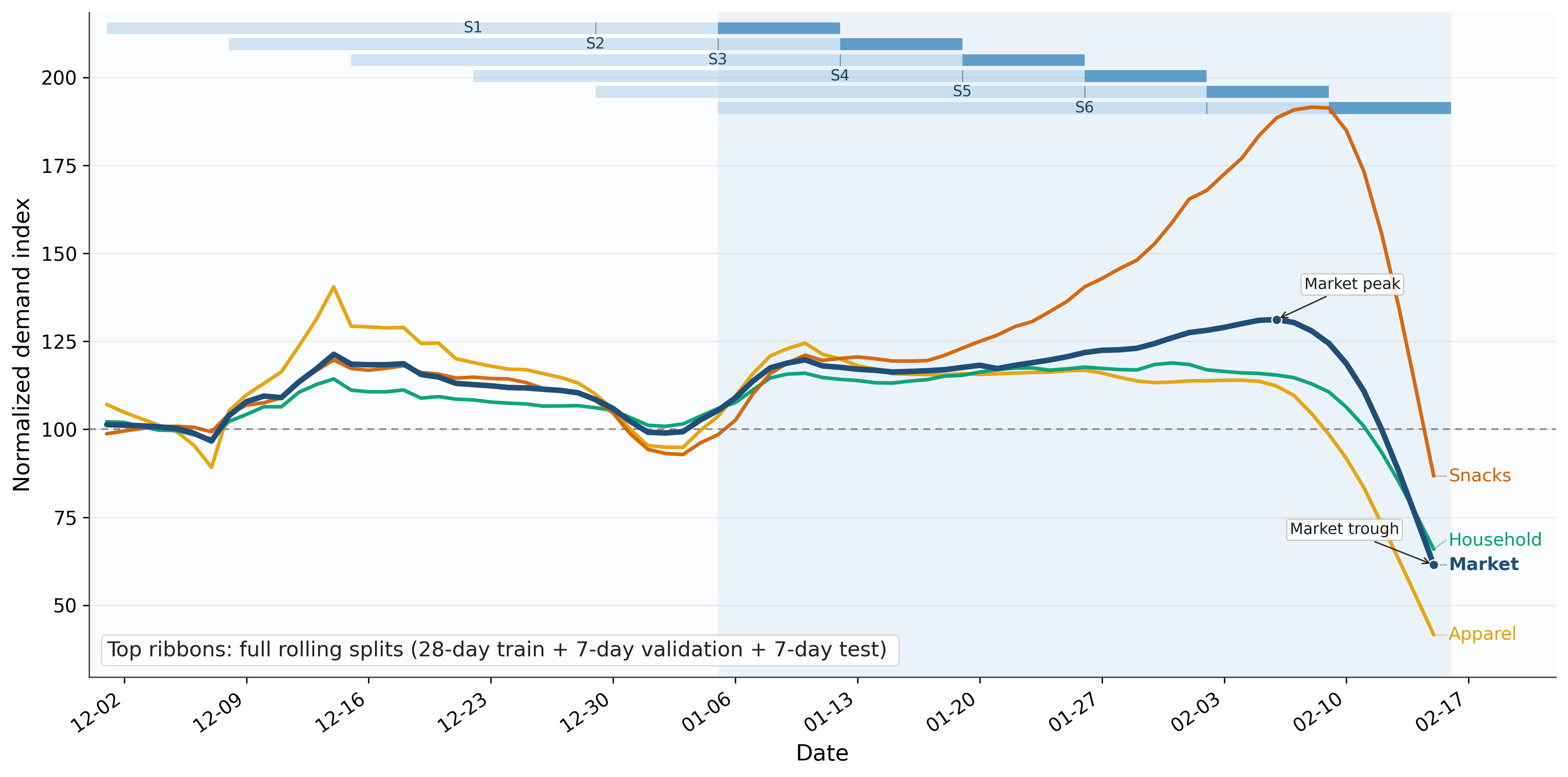}
    \vspace{-0.6em}

    {\footnotesize\textbf{(a)} Temporal drift and rolling windows.}
\end{minipage}
\hfill
\begin{minipage}[t]{0.43\linewidth}
    \vspace{0pt}
    \centering

    {\footnotesize\textbf{(b)} Link diagnosis.}
    \scriptsize
    \setlength{\tabcolsep}{2.3pt}
    \begin{tabular}{lrrrr}
    \toprule
    Split & Raw & Log & Pref. & Gap \\
    \midrule
    S1 & 0.6657 & 0.7311 & Raw & 8.95 \\
    S2 & 0.5946 & 0.5848 & Log & 1.64 \\
    S3 & 0.5183 & 0.4231 & Log & 18.36 \\
    S4 & 0.4704 & 0.4551 & Log & 3.26 \\
    S5 & 0.5087 & 0.4297 & Log & 15.54 \\
    S6 & 0.6733 & 0.7161 & Raw & 5.97 \\
    \midrule
    Mean & 0.5718 & 0.5566 & Log & 2.66 \\
    \bottomrule
    \end{tabular}

    \vspace{0.55em}

    {\footnotesize\textbf{(c)} Online A/B result.}
    \scriptsize
    \setlength{\tabcolsep}{4.2pt}
    \begin{tabular}{lcc}
    \toprule
    Metric & Lift (\%) & $p$ \\
    \midrule
    GMV & +1.16 & 0.037 \\
    Buy counts & +1.70 & 0.004 \\
    Commission & +1.51 & $<0.001$ \\
    \bottomrule
    \end{tabular}
\end{minipage}

\vspace{-0.4em}
\caption{\textbf{Real-world diagnostics and deployment.}
(a) temporal drift.
(b) future profile favors the log link while revealing heterogeneity.
(c) Online A/B testing shows positive lift.}
\label{fig:realworld_combined}
\vspace{-0.6em}
\end{figure}

\subsection{Public Dominic's Evaluation}
\label{sec:public_evaluation}
To complement the proprietary Taobao study, we evaluate our method on the public Dominick's retail dataset \citep{dominicks} using chronological splits. We assess factual prediction on the observed panel and counterfactual decision quality in semi-synthetic experiments that retain the real temporal and treatment-assignment design. These experiments cover additive and multiplicative persistence and controlled action-dependent shape drift. Under exact persistence, decision regret is close to that of a reference supplied with the generating link, indicating low adaptation cost. Appendix~\ref{app:dominicks} provides data preparation, evaluation protocols, outcome generators, implementation details, and full baseline comparisons.

\section{Related Work}
\label{sec:related_work}
Correcting confounding and identifying what can transfer are distinct problems. Continuous-treatment estimators and longitudinal counterfactual models address response estimation from observational histories \citep{kennedy2017continuous,nie2021vcnet,melnychuk2022causaltransformer}, while orthogonal learning reduces sensitivity to nuisance estimation \citep{chernozhukov2018double,foster2019orthogonal}. Neither causal identification nor nuisance robustness alone specifies which part of a historical response remains useful for future decisions.
Causal forecasting and invariant prediction make cross-environment transfer explicit \citep{schultz2023causalpricing,crasson2024causalforecasting,peters2016icp,arjovsky2019irm, liuopflow}. Our distinction is the object being transported: an anchored conditional action-response shape, rather than its period-specific level or amplitude. Under exact geometry persistence, environments may disagree about outcome magnitudes while agreeing on the preferred actions; approximate persistence quantifies departures from this structure. The proposed target therefore complements confounding adjustment by making the required transfer assumption explicit. Appendix~\ref{app:related_work} gives detailed comparisons.

\section{Conclusion and Limitations}
\label{sec:conclusion_limitations}
We studied continuous treatment-response learning under temporal non-stationarity, where pooled prediction can conflate action effects with drifting baselines and target an object that is unsuitable for deployment. We introduced deployable causal action geometry, a decision target that captures the action-response structure preserved after reference-action anchoring and link-scale recalibration. We developed an honest two-stage orthogonal estimator with profiled link and representation selection. Theory and experiments on synthetic data, the Dominick's dataset, and a large-scale pricing application support a central message: under temporal drift, decision learning should prioritize stable action geometry rather than raw prediction. Its validity relies on one-step exchangeability, overlap, a link family, and approximate geometry persistence; future work could relax the structure assumptions and extend the framework to long-horizon sequential decision making.

\section*{Acknowledgments}
This work was supported by the National Natural Science Foundation
of China (Grant No. 42571530) and by Alibaba Group through Alibaba Research Intern Program. We thank the anonymous reviewers and area chairs for their constructive feedback.

\medskip{
\small
\bibliographystyle{plainnat}
\bibliography{references}

@inproceedings{hua2021markdowns,
  title={Markdowns in e-commerce fresh retail: A counterfactual prediction and multi-period optimization approach},
  author={Hua, Junhao and Yan, Ling and Xu, Huan and Yang, Cheng},
  booktitle={Proceedings of the 27th ACM SIGKDD conference on knowledge discovery \& data mining},
  pages={3022--3031},
  year={2021},
  doi={10.1145/3447548.3467083}
}

@misc{multichanneluplift,
      title={Multi-channel Uplift Policy Learning}, 
      author={Changjian Liu and Tianyu Wang and Xiaoxuan Deng and WenTao Zhu and Yuwei Xu and Jungqi Jin and Yong Gao and Chuan Yu and Jian Xu and Bo Zheng},
      year={2026},
      eprint={2607.28182},
      archivePrefix={arXiv},
      primaryClass={cs.LG},
      url={https://arxiv.org/abs/2607.28182}, 
}

@misc{liuopflow,
      title={OpFlow: Learning Opportunity-Conditioned Choice Potentials for Robust OD Flow Prediction}, 
      author={Changjian Liu and Yong Gao and Yuqing Wang and Leyi Su and Honglei Guo and Zhiyang Wang and Xiaoyu Wang and Fan Zhang},
      year={2026},
      eprint={2607.03200},
      archivePrefix={arXiv},
      primaryClass={cs.LG},
      url={https://arxiv.org/abs/2607.03200}, 
}

@article{dominicks,
  title={Creating micro-marketing pricing strategies using supermarket scanner data},
  author={Montgomery, Alan L},
  journal={Marketing science},
  volume={16},
  number={4},
  pages={315--337},
  year={1997},
  publisher={INFORMS}
}

@article{athey2021policy,
  title   = {Policy learning with observational data},
  author  = {Athey, Susan and Wager, Stefan},
  journal = {Econometrica},
  volume  = {89},
  number  = {1},
  pages   = {133--161},
  year    = {2021},
  doi     = {10.3982/ECTA15732}
}

@article{kennedy2017continuous,
  title={Non-parametric methods for doubly robust estimation of continuous treatment effects},
  author={Kennedy, Edward H and Ma, Zongming and McHugh, Matthew D and Small, Dylan S},
  journal={Journal of the Royal Statistical Society Series B: Statistical Methodology},
  volume={79},
  number={4},
  pages={1229--1245},
  year={2017},
  doi={10.1111/rssb.12212}
}

@inproceedings{nie2021vcnet,
  title     = {VCNet and functional targeted regularization for learning causal effects of continuous treatments},
  author    = {Nie, Lizhen and Ye, Mao and Liu, Qiang and Nicolae, Dan},
  booktitle = {International Conference on Learning Representations},
  year      = {2021}
}

@incollection {hirano2005propensity,
    AUTHOR = {Hirano, Keisuke and Imbens, Guido W.},
     TITLE = {The propensity score with continuous treatments},
 BOOKTITLE = {Applied {B}ayesian modeling and causal inference from incomplete-data perspectives},
    SERIES = {Wiley Ser. Probab. Stat.},
     PAGES = {73--84},
 PUBLISHER = {Wiley, Chichester},
      YEAR = {2004},
   MRCLASS = {62-07},
  MRNUMBER = {2134803},
       DOI = {10.1002/0470090456.ch7},
}

@article{lim2018rmsn,
  title={Forecasting treatment responses over time using recurrent marginal structural networks},
  author={Lim, Bryan},
  journal={Advances in neural information processing systems},
  volume={31},
  year={2018}
}

@inproceedings{bica2020crn,
  title     = {Estimating counterfactual treatment outcomes over time through adversarially balanced representations},
  author    = {Bica, Ioana and Alaa, Ahmed M. and Jordon, James and van der Schaar, Mihaela},
  booktitle = {International Conference on Learning Representations},
  year      = {2020}
}

@inproceedings{frauen2025temporalmetalearners,
  title     = {Model-agnostic meta-learners for estimating heterogeneous treatment effects over time},
  author    = {Frauen, Dennis and Hess, Konstantin and Feuerriegel, Stefan},
  booktitle = {The Thirteenth International Conference on Learning Representations},
  year      = {2025}
}

@inproceedings{melnychuk2022causaltransformer,
  title     = {Causal transformer for estimating counterfactual outcomes},
  author    = {Melnychuk, Valentyn and Frauen, Dennis and Feuerriegel, Stefan},
  booktitle = {Proceedings of the 39th International Conference on Machine Learning},
  series    = {Proceedings of Machine Learning Research},
  volume    = {162},
  pages     = {15293--15329},
  publisher = {PMLR},
  year      = {2022}
}

@article{crasson2024causalforecasting,
  title   = {Training and evaluating causal forecasting models for time-series},
  author  = {Crasson, Thomas and Nabet, Yacine and L{\'e}cuyer, Mathias},
  journal = {arXiv preprint arXiv:2411.00126},
  year    = {2024},
  doi     = {10.48550/arXiv.2411.00126}
}

@article{schultz2023causalpricing,
  title   = {Causal forecasting for pricing},
  author  = {Schultz, Douglas and Stephan, Johannes and Sieber, Julian and Yeh, Trudie and Kunz, Manuel and Doupe, Patrick and Januschowski, Tim},
  journal = {arXiv preprint arXiv:2312.15282},
  year    = {2023},
  doi     = {10.48550/arXiv.2312.15282}
}

@article{peters2016icp,
  title   = {Causal inference by using invariant prediction: identification and confidence intervals},
  author  = {Peters, Jonas and B{\"u}hlmann, Peter and Meinshausen, Nicolai},
  journal = {Journal of the Royal Statistical Society Series B: Statistical Methodology},
  volume  = {78},
  number  = {5},
  pages   = {947--1012},
  year    = {2016},
  doi     = {10.1111/rssb.12167}
}

@article{arjovsky2019irm,
  title={Invariant risk minimization},
  author={Arjovsky, Martin and Bottou, L{\'e}on and Gulrajani, Ishaan and Lopez-Paz, David},
  journal={arXiv preprint arXiv:1907.02893},
  year={2019},
  doi={10.48550/arXiv.1907.02893}
}

@article{rothenhausler2021anchor,
  title={Anchor regression: Heterogeneous data meet causality},
  author={Rothenh{\"a}usler, Dominik and Meinshausen, Nicolai and B{\"u}hlmann, Peter and Peters, Jonas},
  journal={Journal of the Royal Statistical Society Series B: Statistical Methodology},
  volume={83},
  number={2},
  pages={215--246},
  year={2021},
  doi = {10.1111/rssb.12398}
}

@inproceedings{krueger2021out,
  title={Out-of-distribution generalization via risk extrapolation (rex)},
  author={Krueger, David and Caballero, Ethan and Jacobsen, Joern-Henrik and Zhang, Amy and Binas, Jonathan and Zhang, Dinghuai and Le Priol, Remi and Courville, Aaron},
  booktitle={International conference on machine learning},
  pages={5815--5826},
  year={2021},
  organization={PMLR},
}

@article{chernozhukov2018double,
    author = {Chernozhukov, Victor and Chetverikov, Denis and Demirer, Mert and Duflo, Esther and Hansen, Christian and Newey, Whitney and Robins, James},
    title = {Double/debiased machine learning for treatment and structural parameters},
    journal = {The Econometrics Journal},
    volume = {21},
    number = {1},
    pages = {C1-C68},
    year = {2018},
    doi = {10.1111/ectj.12097}
}

@article{foster2019orthogonal,
  title={Orthogonal statistical learning},
  author={Foster, Dylan J and Syrgkanis, Vasilis},
  journal={The Annals of Statistics},
  volume={51},
  number={3},
  pages={879--908},
  year={2023},
  publisher={Institute of Mathematical Statistics}
}

@inproceedings{kallus2018policy,
  title={Policy evaluation and optimization with continuous treatments},
  author={Kallus, Nathan and Zhou, Angela},
  booktitle={International conference on artificial intelligence and statistics},
  pages={1243--1251},
  year={2018},
  volume = {84},
  organization={PMLR}
}

@article{imai2004general,
  title={Causal inference with general treatment regimes: Generalizing the propensity score},
  author={Imai, Kosuke and Van Dyk, David A},
  journal={Journal of the American Statistical Association},
  volume={99},
  number={467},
  pages={854--866},
  year={2004},
  publisher={Taylor \& Francis},
  doi  = {10.1198/016214504000001187}
}

@inproceedings{zeng2021causal,
  title={Causal transfer random forest: Combining logged data and randomized experiments for robust prediction},
  author={Zeng, Shuxi and Bayir, Murat Ali and Pfeiffer III, Joseph J and Charles, Denis and Kiciman, Emre},
  booktitle={Proceedings of the 14th ACM International Conference on Web Search and Data Mining},
  pages={211--219},
  year={2021},
  doi={10.1145/3437963.3441722}
}

@inproceedings{bica2020deconfounder,
  title={Time series deconfounder: Estimating treatment effects over time in the presence of hidden confounders},
  author={Bica, Ioana and Alaa, Ahmed and Van Der Schaar, Mihaela},
  booktitle={International conference on machine learning},
  pages={884--895},
  volume    = {119},
  year={2020},
  organization={PMLR}
}

@article{frauen2025decision,
  title={Treatment effect estimation for optimal decision-making},
  author={Frauen, Dennis and Melnychuk, Valentyn and Schweisthal, Jonas and van der Schaar, Mihaela and Feuerriegel, Stefan},
  journal={arXiv preprint arXiv:2505.13092},
  year={2025}
}

@misc{robins2000marginal,
  title={Marginal structural models and causal inference in epidemiology},
  author={Robins, James M and Hernan, Miguel Angel and Brumback, Babette},
  journal={Epidemiology},
  volume={11},
  number={5},
  pages={550--560},
  year={2000},
  doi={10.1097/00001648-200009000-00011}
}

@article{ganin2016domain,
  title={Domain-adversarial training of neural networks},
  author={Ganin, Yaroslav and Ustinova, Evgeniya and Ajakan, Hana and Germain, Pascal and Larochelle, Hugo and Laviolette, Fran{\c{c}}ois and March, Mario and Lempitsky, Victor},
  journal={Journal of machine learning research},
  volume={17},
  number={59},
  pages={1--35},
  year={2016}
}

@article{magliacane2018domain,
  title={Domain adaptation by using causal inference to predict invariant conditional distributions},
  author={Magliacane, Sara and Van Ommen, Thijs and Claassen, Tom and Bongers, Stephan and Versteeg, Philip and Mooij, Joris M},
  journal={Advances in neural information processing systems},
  volume={31},
  year={2018}
}

@article{subbaswamy2020development,
  title={From development to deployment: dataset shift, causality, and shift-stable models in health AI},
  author={Subbaswamy, Adarsh and Saria, Suchi},
  journal={Biostatistics},
  volume={21},
  number={2},
  pages={345--352},
  year={2020},
  doi={10.1093/biostatistics/kxz041}
}

@inproceedings{kallus2020doubly,
  title={Doubly robust off-policy value and gradient estimation for deterministic policies},
  author={Kallus, Nathan and Uehara, Masatoshi},
  booktitle={Advances in Neural Information Processing Systems},
  volume={33},
  pages={10420--10430},
  year={2020}
}
}

\newpage
\appendix

This appendix collects related work, theoretical proofs and supplementary results for Sections~\ref{sec:problem_setup}--\ref{sec:orthogonal_estimation}, the implementation algorithm, and additional synthetic-experiment details.

\section{Related Work}
\label{app:related_work}

\paragraph{Continuous treatments, policy learning, and orthogonal estimation.}
Continuous-action causal learning builds on generalized propensity scores, dose-response estimation, and policy learning from observational data
\citep{imai2004general,hirano2005propensity,kennedy2017continuous,kallus2018policy,athey2021policy}.
Recent neural methods such as VCNet estimate smooth average dose-response curves for continuous treatments \citep{nie2021vcnet}, while continuous-action off-policy evaluation develops kernelized or doubly robust estimators for policy value and gradients \citep{kallus2018policy,kallus2020doubly}. Our estimator also uses the orthogonal/debiased learning principle: cross-fitted nuisance estimates enter through scores with second-order sensitivity to nuisance error \citep{chernozhukov2018double,foster2019orthogonal}. These works provide the estimation primitives for confounded continuous actions. Their target, however, is usually a static dose-response curve, a policy value, or a fixed causal parameter. They do not address the target-level question central to this paper: when outcome levels and logging policies drift over time, which part of a history-conditioned continuous response remains meaningful for future action ranking?

\paragraph{Longitudinal counterfactual modeling.}
Classical longitudinal causal inference studies time-varying treatments and confounding through marginal structural models and related weighting ideas \citep{robins2000marginal}. Modern sequence models extend this perspective with recurrent or transformer architectures: RMSN forecasts responses under planned treatment sequences, CRN balances treatment histories for counterfactual prediction, the Time Series Deconfounder targets multi-cause hidden confounding, and Causal Transformer improves sequential counterfactual modeling with attention-based architectures \citep{lim2018rmsn,bica2020crn,bica2020deconfounder,melnychuk2022causaltransformer}. These methods are highly relevant because they learn from observational time series. Their objective is nevertheless different: they aim to predict counterfactual outcomes over treatment paths or estimate heterogeneous effects over time. We instead study a one-step continuous decision problem under temporal non-stationarity, where the key issue is not trajectory prediction but the construction of an anchored action geometry that can be transported to future periods.

\paragraph{Invariant learning, causal transfer, and deployment shift.}
A separate literature asks what should remain stable across environments. Domain-adversarial training learns representations that obscure domain identity \citep{ganin2016domain}; invariant causal prediction, IRM, Anchor Regression, and REx seek stable predictors or risk profiles across environments and perturbations \citep{peters2016icp,arjovsky2019irm,rothenhausler2021anchor,krueger2021out}; causal domain adaptation identifies invariant conditional distributions under distribution shifts \citep{magliacane2018domain}; and deployment-oriented work formalizes dataset shift through a causal lens \citep{subbaswamy2020development}. In industrial prediction, Causal Transfer Random Forest combines logged data with a small randomized experiment to learn predictors robust to feature shifts \citep{zeng2021causal}. These works are conceptually close because they share the goal of stable transfer. The difference is the object being stabilized. Prior work typically stabilizes a predictor, conditional distribution, or robust risk. Our target is a causal response geometry for confounded continuous actions: invariance is imposed only after anchoring, link transformation, and profiling out time-varying calibration.

\paragraph{Causal forecasting and decision-oriented learning.}
Recent work argues that time-series models used for decisions should be evaluated by intervention- or policy-relevant criteria rather than only predictive accuracy. Causal forecasting for pricing combines causal estimation with forecasting architectures for off-policy price-response learning \citep{schultz2023causalpricing}, and recent causal-forecasting protocols emphasize training and evaluation under shifted decisions \citep{crasson2024causalforecasting}. More broadly, decision-oriented treatment-effect learning shows that estimators optimized for effect accuracy may be suboptimal for downstream policy quality \citep{frauen2025decision}. Our work shares this deployment-oriented motivation but focuses on an earlier target-definition problem: under temporal drift, the raw response surface itself may not be the right object. We therefore define a restricted action geometry that preserves the intervention rankings needed for deployment while allowing period-specific calibration to vary.

\paragraph{Summary.}
The closest prior ideas are continuous-treatment estimation, longitudinal confounding adjustment, orthogonal learning, and invariant prediction under shift. Our contribution is their combination at the level of the estimand: we define a deployable, restricted continuous-action response on anchored link scales, estimate it with orthogonal pilots and honest profiled selection, and require persistence only for the action-sensitive geometry rather than for the full raw outcome surface.

\section{Terminology}
\label{app:terminology}

\begin{table*}[t]
\centering
\small
\caption{Terminology and standard interpretations.}
\label{tab:terminology}
\begin{tabular}{@{}p{0.20\textwidth}p{0.27\textwidth}p{0.47\textwidth}@{}}
\toprule
\textbf{Term} & \textbf{Standard interpretation} & \textbf{Role in our framework} \\
\midrule

Link function $\ell_k$
&
Monotone response scale
&
Defines the scale on which action-independent temporal variation can be removed.
The main candidates are the identity and log links.
\\[2pt]

Anchored response
$\Delta_{k,t}(h,a)$
&
Reference-action contrast
&
$\ell_k\{m_t(h,a)\}-\ell_k\{m_t(h,a_0)\}$.
Anchoring removes an action-independent offset while preserving action rankings.
\\[2pt]

Positive amplitude
$s_t(h)$
&
Nuisance scale factor
&
Captures period- and history-specific changes in the magnitude of the anchored response.
It is profiled out when comparing candidate geometries.
\\[2pt]

Action geometry
$\rho(u(h),a)$
&
Normalized conditional action-response shape
&
The decision-relevant shape remaining after anchoring and positive-scale profiling;
it determines the action ranking through $\arg\max_a \rho(u(h),a)$.
\\[2pt]

Deployable causal action geometry
&
Decision-sufficient response equivalence class
&
An anchored action-response shape considered modulo linked-scale offsets
and positive rescaling, restricted to the chosen link, representation,
and shape classes.
\\[2pt]

Profiled geometry loss
&
Nuisance-amplitude-profiled discrepancy
&
Measures how well one shared action geometry explains period-specific
anchored responses after optimally rescaling their amplitudes.
\\[2pt]

Approximate geometry persistence
&
Cross-period transport condition
&
Requires future anchored responses to remain close to a positive rescaling
of the learned geometry; $\xi_t(h,a)$ measures the violation.
\\[2pt]

Decision equivalence
&
Equality of induced action rankings
&
Different latent representations or response parameterizations are equivalent
when they induce the same maximizing action.
\\

\bottomrule
\end{tabular}
\end{table*}

\section{Appendix for Section~\ref{sec:problem_setup}: Problem Setup}

\subsection{Proof of Within-Period Identification}
\label{app:proof_pointwise_identification}

\begin{proof}
Fix a period \(t\) and suppress the unit index. Let
\[
    \mu_t(h,a)
    :=
    \mathbb E\!\left[Y_{t+1}\mid H_t=h,A_t=a,\text{ period }t\right]
\]
be a version of the observed conditional mean. For any action \(a\) in the
overlap region, consistency gives \(Y_{t+1}=Y_{t+1}(a)\) on the event
\(\{A_t=a\}\). Therefore,
\[
\begin{aligned}
\mu_t(h,a)
&=
\mathbb E\!\left[Y_{t+1}(a)\mid H_t=h,A_t=a,\text{ period }t\right] \\
&=
\mathbb E\!\left[Y_{t+1}(a)\mid H_t=h,\text{ period }t\right] \\
&=
m_t(h,a),
\end{aligned}
\]
where the second equality uses one-step exchangeability and the last equality is
the definition of \(m_t\). Hence
\[
    m_t(h,a)
    =
    \mathbb E\!\left[Y_{t+1}\mid H_t=h,A_t=a,\text{ period }t\right]
\]
for \((H_t,A_t)\)-almost every \((h,a)\) in the relevant support.
\end{proof}

\subsection{Proof of Proposition~\ref{prop:misattribution_pooled}}
\label{app:misattribution_pooled}

\begin{proof}
Let \(\tau\) denote the period label under the pooled distribution. By consistency,
one-step exchangeability, and iterated expectation,
\[
\begin{aligned}
q^\star(x,a)
&=
\mathbb E_{\rm pool}[Y\mid X=x,A=a] \\
&=
\mathbb E_{\rm pool}[Y(a)\mid X=x,A=a] \\
&=
\mathbb E_{\rm pool}\!\left[
    \mathbb E\!\left[Y(a)\mid H,A,\tau\right]
    \mid X=x,A=a
\right] \\
&=
\mathbb E_{\rm pool}\!\left[
    \mathbb E\!\left[Y(a)\mid H,\tau\right]
    \mid X=x,A=a
\right] \\
&=
\mathbb E_{\rm pool}\!\left[
    m_\tau(H,a)
    \mid X=x,A=a
\right].
\end{aligned}
\]
Under the decomposition \(m_t(h,a)=b_t(h)+r(h,a)\), this becomes
\[
q^\star(x,a)
=
\mathbb E_{\rm pool}[b_\tau(H)\mid X=x,A=a]
+
\mathbb E_{\rm pool}[r(H,a)\mid X=x,A=a].
\]
Adding and subtracting
\[
s(x,a):=\mathbb E_{\rm pool}[r(H,a)\mid X=x]
\]
yields
\[
q^\star(x,a)=s(x,a)+\delta(x,a),
\]
where
\[
\delta(x,a)
:=
\mathbb E_{\rm pool}[b_\tau(H)\mid X=x,A=a]
+
\mathbb E_{\rm pool}[r(H,a)\mid X=x,A=a]
-
\mathbb E_{\rm pool}[r(H,a)\mid X=x].
\]
Therefore, for any \(a,a'\),
\[
q^\star(x,a)-q^\star(x,a')
=
\{s(x,a)-s(x,a')\}
+
\{\delta(x,a)-\delta(x,a')\}.
\]
Thus pooled prediction preserves all pairwise action rankings at \(x\) whenever
\(\delta(x,\cdot)\) is constant, and can change them when the distortion varies
with action. The concrete construction below shows that reversal can occur.
\end{proof}

\begin{corollary}[Ranking reversal under drift-aligned logging]
\label{cor:ranking_reversal}
Let \(\mathcal A=\{a_1,a_2\}\), and suppose \(H\equiv h\) and \(X\equiv x\).
Consider two periods with equal pooled weights and drifting baselines
\(b_1(h)=0\), \(b_2(h)=M\). Suppose the action-sensitive component ranks
\(a_1\) above \(a_2\):
\[
    r(h,a_1)>r(h,a_2).
\]
Let the logging policy be aligned with time:
\[
    \mathbb P(A=a_1\mid \tau=1)=1-\epsilon,
    \qquad
    \mathbb P(A=a_1\mid \tau=2)=\epsilon,
    \qquad
    0<\epsilon<\frac12,
\]
with \(\mathbb P(A=a_2\mid \tau=t)=1-\mathbb P(A=a_1\mid \tau=t)\). Then
\[
    q^\star(x,a_1)-q^\star(x,a_2)
    =
    \{r(h,a_1)-r(h,a_2)\}-(1-2\epsilon)M.
\]
Consequently, if
\[
    M>\frac{r(h,a_1)-r(h,a_2)}{1-2\epsilon},
\]
the pooled target ranks \(a_2\) above \(a_1\), although the deployable
action-sensitive component ranks \(a_1\) above \(a_2\).
\end{corollary}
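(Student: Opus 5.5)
The plan is to specialize the representation of \(q^\star\) obtained in the proof of Proposition~\ref{prop:misattribution_pooled} to this two-period, two-action design and compute each conditional expectation explicitly. Since \(H\equiv h\) and \(X\equiv x\), the terms \(\mathbb E_{\rm pool}[r(H,a)\mid X=x,A=a]\) and \(s(x,a)=\mathbb E_{\rm pool}[r(H,a)\mid X=x]\) both reduce to \(r(h,a)\). The identity from that proof therefore collapses to
\[
q^\star(x,a)=r(h,a)+\mathbb E_{\rm pool}[b_\tau(h)\mid A=a].
\]
All distortion is then carried by the baseline term, which depends on \(a\) only through the posterior of the period label \(\tau\) given \(A=a\).

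Next I compute that posterior by Bayes' rule with \(\omega_1=\omega_2=\tfrac12\). For \(A=a_1\), the marginal is \(\mathbb P(A=a_1)=\tfrac12(1-\epsilon)+\tfrac12\epsilon=\tfrac12\). This gives
\[
\mathbb P(\tau=2\mid A=a_1)=\epsilon.
\]
Symmetrically, \(\mathbb P(A=a_2)=\tfrac12\) and
\[
\mathbb P(\tau=2\mid A=a_2)=1-\epsilon.
\]
Since \(b_1=0\) and \(b_2=M\), the baseline terms are \(\epsilon M\) and \((1-\epsilon)M\). Subtracting the two expressions for \(q^\star\) yields the stated difference
\[
q^\star(x,a_1)-q^\star(x,a_2)=\{r(h,a_1)-r(h,a_2)\}-(1-2\epsilon)M.
\]
Both actions have positive probability because \(0<\epsilon<\tfrac12\), so the conditional expectations are well defined without any density issues.

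For the consequence, note that \(1-2\epsilon>0\) when \(\epsilon<\tfrac12\). The threshold on \(M\) is therefore equivalent to the difference above being strictly negative, so the pooled target ranks \(a_2\) above \(a_1\). Meanwhile \(s(x,a_1)-s(x,a_2)=r(h,a_1)-r(h,a_2)>0\), so the deployable component still ranks \(a_1\) first.

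One check is needed: the first step of the Proposition~\ref{prop:misattribution_pooled} derivation, \(\mathbb E[Y(a)\mid H,A,\tau]=m_\tau(H,a)\), uses Assumption~\ref{ass:one_step_identifiability} within each period. In this discrete design that condition holds trivially, provided we take \(m_t(h,a)=b_t(h)+r(h,a)\) as the period-\(t\) conditional mean of \(Y\) given \(A=a\). Beyond this, there is no real obstacle; the only care point is getting the Bayes posteriors right, which the symmetry \(\mathbb P(A=a_j)=\tfrac12\) makes clean.
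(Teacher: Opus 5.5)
Your proposal is correct and follows essentially the same route as the paper: specialize the decomposition from Proposition~\ref{prop:misattribution_pooled} so that only the baseline term $\mathbb E_{\rm pool}[b_\tau(h)\mid A=a]$ depends on the action, obtain $\mathbb P(\tau=2\mid A=a_1)=\epsilon$ and $\mathbb P(\tau=2\mid A=a_2)=1-\epsilon$ by Bayes' rule with equal weights, and subtract. Your remarks that $1-2\epsilon>0$ and that both actions have positive probability only make explicit what the paper leaves implicit.
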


\begin{proof}
Since the two periods have equal pooled weights, Bayes' rule gives
\[
    \mathbb P(\tau=2\mid A=a_1)=\epsilon,
    \qquad
    \mathbb P(\tau=2\mid A=a_2)=1-\epsilon .
\]
Because \(H\equiv h\) and \(X\equiv x\),
\[
\begin{aligned}
q^\star(x,a_1)
&= r(h,a_1)+\mathbb E[b_\tau(h)\mid A=a_1]
 = r(h,a_1)+\epsilon M,\\
q^\star(x,a_2)
&= r(h,a_2)+\mathbb E[b_\tau(h)\mid A=a_2]
 = r(h,a_2)+(1-\epsilon)M.
\end{aligned}
\]
Subtracting gives the stated expression, and the reversal condition follows
immediately.
\end{proof}

\section{Additional Results for Section~\ref{sec:stable_reduction}}
\label{app:sec3}

This appendix provides technical details for the deployable action geometry
defined in Section~\ref{sec:stable_reduction}. Appendix~\ref{app:future_not_identified}
shows why an unrestricted future response law is not identified from observed
periods alone. Appendix~\ref{app:anchored_scales} justifies the anchored link
family and the need to fix a contrast scale. Appendix~\ref{app:normalization}
and Appendix~\ref{app:profiled_calibration} give the normalization convention
and the closed form of the profiled calibration. Appendix~\ref{app:oracle_properties}
records basic properties of the oracle criterion.

\subsection{Unrestricted future deployment is not identified}
\label{app:future_not_identified}

\begin{proposition}[Observed periods do not identify the future optimum]
\label{prop:future_action_not_identified}
Assume the one-step causal conditions hold on periods \(t=1,\ldots,T\), so that
each observed-period response \(m_t\) is identified on \(\mathcal A_\star\).
Then the observed law on periods \(1,\ldots,T\) does not identify the future
response \(m_{T+1}\), and hence does not identify the future-optimal action.
More precisely, for any \(h\in\mathcal H\) and any \(a,a'\in\mathcal A_\star\),
there exist two data-generating processes \(\mathbb P^+\) and \(\mathbb P^-\)
that induce the same observed law on periods \(1,\ldots,T\) but satisfy
\[
    m^{+}_{T+1}(h,a)>m^{+}_{T+1}(h,a'),
    \qquad
    m^{-}_{T+1}(h,a)<m^{-}_{T+1}(h,a').
\]
\end{proposition}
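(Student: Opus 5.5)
The plan is a two-point non-identification construction. Two data-generating processes will share every ingredient that governs periods \(1,\ldots,T\) and differ only in the period-\(T+1\) outcome mechanism. The observed law on periods \(1,\ldots,T\) is the law of \((\tau,H_t,A_t,Y_{t+1})\) for \(t\le T\). By the factorization \(p_t(h,a,y)=p_t(y\mid a,h)\,p_t(a\mid h)\,p_t(h)\) of Section~\ref{sec:problem_setup}, it depends only on the period-\(t\) history laws, logging policies and outcome kernels for \(t\le T\). Nothing in the one-step setup links the period-\(T+1\) potential-outcome kernel to earlier periods, since no persistence assumption is imposed. So I can fix any common specification of \(p_t(h)\), \(p_t(a\mid h)\) and \(p_t(y(\cdot)\mid h)\) for \(t\le T\) that satisfies Assumption~\ref{ass:one_step_identifiability} (for instance, a logging density positive on a neighbourhood of \(\mathcal A_\star\), and potential outcomes independent of \(A_t\) given \(H_t\)). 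I then attach two different future mechanisms.

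Concretely, for the future period I take \(Y_{T+1+1}(b)=f^{\pm}(H_{T+1},b)+\varepsilon\) with \(\varepsilon\) independent mean-zero noise, so that \(m^{\pm}_{T+1}(h,b)=f^{\pm}(h,b)\). Given \(a\neq a'\), I choose a continuous bump \(\beta\) on \(\mathcal A_\star\) with \(\beta(a)=1\) and \(\beta(a')=0\), for example \(\beta(b)=\max\{0,1-|b-a|/|a-a'|\}\). I then set \(f^{\pm}(h,b)=c\pm\beta(b)\) for a constant \(c>1\), which keeps \(f^{\pm}\) positive so that the log-link region is also respected. This gives \(m^{+}_{T+1}(h,a)=c+1>c=m^{+}_{T+1}(h,a')\) and \(m^{-}_{T+1}(h,a)=c-1<c=m^{-}_{T+1}(h,a')\), which is exactly the claimed pair of inequalities. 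The future history law and logging policy are chosen identically under \(\mathbb P^{\pm}\) and satisfy overlap and exchangeability, so both processes are legitimate and each satisfies the one-step conditions in period \(T+1\) as well. By Proposition~\ref{prop:pointwise_identification_setup}, the earlier responses \(m_t\), \(t\le T\), are identical under both processes, consistent with their being identified.

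It then remains to note that the future-optimal action differs between \(\mathbb P^{+}\) and \(\mathbb P^{-}\). Take \(\mathcal A_\star\) to be the relevant region, so that \(a\) beats \(a'\) under \(\mathbb P^+\) and loses under \(\mathbb P^-\). No functional of the observed law can select the optimal action in both cases. For the stronger statement, one can make \(\arg\max_b f^{+}=\{a\}\) (a peak at \(a\)) and \(\arg\max_b f^{-}\not\ni a\) (a dip at \(a\)), so the future optimum itself is not identified.

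The only delicate step is verifying that the period-\(T+1\) mechanism truly does not enter the observed law for periods \(1,\ldots,T\). The subtlety is that \(Y_{T+1}\) is the outcome of period \(T\) and also enters \(H_{T+1}\). I would handle this by changing only the conditional law of \(Y_{T+2}(\cdot)\) given \(H_{T+1}\), while keeping the joint law of \((H_{T+1},A_{T+1})\), and hence of all earlier variables, identical. A Kolmogorov-style sequential construction of the joint law makes this rigorous.
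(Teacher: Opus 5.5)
Your construction is correct and is essentially the paper's: both processes coincide with a fixed admissible process on periods \(1,\ldots,T\), and they differ only through a future outcome mechanism \(Y_{T+2}(b)=\widetilde m^{\pm}(h,b)+\varepsilon_b\) whose means rank \(a\) and \(a'\) in opposite orders. The paper simplifies by setting \(H_{T+1}=h\) almost surely, which also makes \(m^{\pm}_{T+1}(h,\cdot)\) unambiguous at the given \(h\). Your continuous, strictly positive bump and your explicit handling of \(Y_{T+1}\) entering \(H_{T+1}\) are harmless refinements of the same argument.
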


\begin{proof}
Fix any observed-period process \(\mathbb P\) satisfying the one-step causal
assumptions on \(t=1,\ldots,T\). We construct two extensions beyond the observed
periods.

For \(t=1,\ldots,T\), let \(\mathbb P^+\) and \(\mathbb P^-\) coincide with
\(\mathbb P\). Hence the observed data law is identical under the two processes.
For period \(T+1\), set \(H_{T+1}=h\) almost surely under both extensions, and
choose any assignment kernel whose support contains \(\mathcal A_\star\). Define
future response functions \(\widetilde m^+\) and \(\widetilde m^-\) such that
\[
    \widetilde m^+(h,a)>\widetilde m^+(h,a'),
    \qquad
    \widetilde m^-(h,a)<\widetilde m^-(h,a').
\]
For example, take
\(\widetilde m^+(h,a)=1,\widetilde m^+(h,a')=0\) and
\(\widetilde m^-(h,a)=0,\widetilde m^-(h,a')=1\), with arbitrary admissible
values elsewhere.

Finally choose potential outcomes in period \(T+1\) so that
\[
    \mathbb E_{\mathbb P^\pm}
    \!\left[
        Y_{T+2}(b)\mid H_{T+1}=h
    \right]
    =
    \widetilde m^\pm(h,b),
    \qquad b\in\mathcal A_\star,
\]
for instance by setting
\(Y_{T+2}(b)=\widetilde m^\pm(h,b)+\varepsilon_b\) with mean-zero noise.
The two extensions agree on all observed periods but rank \(a\) and \(a'\)
oppositely in period \(T+1\). Thus the future response and the future optimum are
not identified from the observed-period law.
\end{proof}

\subsection{Anchored scales and canonical links}
\label{app:anchored_scales}

\subsubsection{Contrast-scale non-canonicity}
\label{app:no_canonical_contrast}

\begin{proposition}[Anchored factorizations are not canonical without fixing scale]
\label{prop:no_canonical_contrast}
Suppose an anchored contrast admits
\[
    \Delta_t(h,a)=s_t(h)\rho(h,a),
    \qquad s_t(h)\ge0,\qquad \rho(h,a_0)=0.
\]
For any \(p>0\), define
\[
    \Delta_t^{[p]}(h,a)
    :=
    \operatorname{sgn}\{\Delta_t(h,a)\}\,
    |\Delta_t(h,a)|^p .
\]
Then
\[
    \Delta_t^{[p]}(h,a)
    =
    s_t(h)^p \rho^{[p]}(h,a),
    \qquad
    \rho^{[p]}(h,a)
    :=
    \operatorname{sgn}\{\rho(h,a)\}\,|\rho(h,a)|^p,
\]
and \(\rho^{[p]}(h,a_0)=0\).
\end{proposition}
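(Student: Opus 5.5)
The identity follows from pointwise multiplicativity of the map \(x\mapsto\operatorname{sgn}(x)|x|^p\). I would fix \(t\), \(h\), \(a\), abbreviate \(s=s_t(h)\ge0\) and \(r=\rho(h,a)\), and prove \(\operatorname{sgn}(sr)|sr|^p=s^p\operatorname{sgn}(r)|r|^p\). No earlier result is needed; the argument only uses the factorization \(\Delta_t=s_t\rho\) from the hypothesis.

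\textbf{Case \(s>0\).} Here \(\operatorname{sgn}(sr)=\operatorname{sgn}(r)\) and \(|sr|^p=s^p|r|^p\), so the identity holds directly.

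\textbf{Case \(s=0\).} Then \(\Delta_t(h,a)=0\), so the left side is \(\operatorname{sgn}(0)\cdot 0^p=0\) for \(p>0\). The right side is \(0^p\rho^{[p]}(h,a)=0\), since \(\rho^{[p]}\) is finite. This case is the only place where \(p>0\) is needed, since it guarantees \(0^p=0\) with no ambiguity.

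The anchoring claim follows because \(\rho(h,a_0)=0\) gives \(\rho^{[p]}(h,a_0)=\operatorname{sgn}(0)\,0^p=0\).

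I would add one remark on the interpretation. \(s_t^p\ge0\) remains a valid nonnegative calibration, and \(\rho^{[p]}\) is anchored. The map \(x\mapsto\operatorname{sgn}(x)|x|^p\) is strictly increasing, so it preserves the argmax in \(a\). Hence \(\rho^{[p]}\) induces the same action geometry as \(\rho\), yet it is a different shape, so the factorization is not canonical unless the contrast scale is fixed. There is no real obstacle in this proof; the only point needing care is the \(s=0\) boundary case.
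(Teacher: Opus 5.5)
Your proof is correct and uses the same direct-substitution argument as the paper. Your explicit $s_t(h)=0$ case is more careful than the paper's one-line justification (``the sign is unchanged because $s_t(h)\ge0$''), which is literally false when $s_t(h)=0\neq\rho(h,a)$, although the identity still holds there since both sides vanish; one small correction to your aside is that $p>0$ is not needed only in the $s=0$ case, since it is also what makes $0^p=0$ well defined at zeros of $\rho$ when $s>0$ and in the anchoring claim $\rho^{[p]}(h,a_0)=0$.
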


\begin{proof}
By direct substitution,
\[
\Delta_t^{[p]}(h,a)
=
\operatorname{sgn}\{s_t(h)\rho(h,a)\}
|s_t(h)\rho(h,a)|^p
=
s_t(h)^p
\operatorname{sgn}\{\rho(h,a)\}|\rho(h,a)|^p,
\]
where the sign is unchanged because \(s_t(h)\ge0\). The anchoring condition
follows from \(\rho(h,a_0)=0\).
\end{proof}

This construction shows that a stable factorization is not uniquely defined
until the anchored contrast scale is fixed. The link family in the main text
therefore specifies the admissible link family rather than merely a
preprocessing convention.

\subsubsection{Canonical nuisance-eliminating links}
\label{app:canonical_links}

\begin{proof}[Proof of Proposition~\ref{prop:canonical_links}]
We prove the two statements in turn.

For the additive case, fix \(y,y_0\) in the interior of \(I\) and define
\[
    F(b):=\ell(y+b)-\ell(y_0+b)
\]
whenever the arguments lie in \(I\). By assumption \(F\) is constant, so
\[
    0=F'(b)=\ell'(y+b)-\ell'(y_0+b).
\]
Taking \(b=0\) gives \(\ell'(y)=\ell'(y_0)\). Since \(y,y_0\) are arbitrary
interior points, \(\ell'\) is constant on the interior of \(I\). Hence
\(\ell(y)=\alpha y+\beta\) on \(I\), with \(\alpha>0\) because \(\ell\) is
strictly increasing.

For the multiplicative case, let \(J:=\{\log y:y\in I\}\) and define
\(g(x):=\ell(e^x)\). Then \(g\) is continuously differentiable and strictly
increasing. The multiplicative invariance condition gives, whenever the
arguments are in \(J\),
\[
    g(x+c)-g(x_0+c)
    =
    \ell(e^{x+c})-\ell(e^{x_0+c})
    =
    \ell(e^x)-\ell(e^{x_0})
    =
    g(x)-g(x_0).
\]
Thus \(g\) satisfies the additive invariance condition on \(J\). By the first
part, \(g(x)=\alpha x+\beta\) with \(\alpha>0\), and therefore
\[
    \ell(y)=g(\log y)=\alpha\log y+\beta .
\]
\end{proof}

\subsection{Normalization and profiled calibration}
\label{app:normalization_and_profile}

\subsubsection{Normalization convention}
\label{app:normalization}

The factorization
\[
    \Delta_{k,t}(h,a)=s_{k,t}(h)\rho(u(h),a)
\]
is unchanged under the rescaling
\[
    \rho\mapsto c\rho,\qquad s_{k,t}\mapsto s_{k,t}/c,
    \qquad c>0.
\]
We therefore normalize the shape by imposing
\[
    \rho(z,a_0)=0,
    \qquad
    \|\rho(z,\cdot)\|_\varpi^2
    =
    \int_{\mathcal A_\star}\rho(z,a)^2\,\varpi(da)
    =
    1
    \quad \text{for all } z.
\]
To fix the remaining sign ambiguity, choose a nonzero loading
\(b:\mathcal A_\star\to\mathbb R\) satisfying
\[
    b(a_0)=0,
    \qquad
    \|b\|_\varpi=1,
\]
and require
\[
    \int_{\mathcal A_\star}\rho(z,a)b(a)\,\varpi(da)\ge0
    \quad \text{for all } z.
\]
This convention fixes the within-candidate scale and orientation of \(\rho\). The profiled loss in Section~\ref{subsec:oracle_target} additionally normalizes the anchored response energy, so cross-link comparisons are made at the level of geometry rather than contrast amplitude.

\subsubsection{Profiled calibration in closed form}
\label{app:profiled_calibration}

\begin{lemma}[Closed form of profiled calibration]
\label{lem:profiled_calibration}
Suppose \(\rho\) satisfies the normalization in Appendix~\ref{app:normalization}.
For fixed \(k,t,h,u,\rho\), define
\[
    \Delta(a):=\Delta_{k,t}(h,a),
    \qquad
    R(a):=\rho(u(h),a).
\]
If \(\|\Delta\|_\varpi>0\), then the minimizer of
\[
    \min_{s\ge0}\|\Delta-sR\|_\varpi^2
\]
is
\[
    s^\star_{k,t}(h;u,\rho)
    =
    \left[
        \langle \Delta, R\rangle_\varpi
    \right]_+
    =
    \left[
        \int_{\mathcal A_\star}
        \Delta_{k,t}(h,a)\rho(u(h),a)\,\varpi(da)
    \right]_+ .
\]
The same \(s^\star\) minimizes the normalized objective
\[
    \min_{s\ge0}
    \frac{\|\Delta-sR\|_\varpi^2}{\|\Delta\|_\varpi^2}.
\]
If \(\|\Delta\|_\varpi=0\), the normalized contribution is defined as \(0\), and
\(s=0\) is admissible.
\end{lemma}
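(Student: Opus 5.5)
The plan is to exploit the normalization \(\|R\|_\varpi=1\) from Appendix~\ref{app:normalization}, which reduces the problem to minimizing a one-dimensional convex quadratic over the half-line. For fixed \(k,t,h,u,\rho\), expand
\[
f(s):=\|\Delta-sR\|_\varpi^2=\|\Delta\|_\varpi^2-2s\langle\Delta,R\rangle_\varpi+s^2\|R\|_\varpi^2
=\|\Delta\|_\varpi^2-2s\langle\Delta,R\rangle_\varpi+s^2 .
\]
Each term is finite. \(\Delta\) is continuous on the compact set \(\mathcal A_\star\), hence square-integrable under the (finite) reference measure \(\varpi\). \(R\) has unit norm, and Cauchy--Schwarz bounds the cross term.

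The function \(f\) is a strictly convex parabola with unconstrained minimizer \(\bar s=\langle\Delta,R\rangle_\varpi\). I would then split into two cases according to the sign of \(\bar s\).

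\textbf{Case \(\bar s\ge0\).} The unconstrained minimizer is feasible, so it is the constrained minimizer.

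\textbf{Case \(\bar s<0\).} We have \(f'(s)=2(s-\bar s)>0\) for all \(s\ge0\). Hence \(f\) is strictly increasing on \([0,\infty)\), and the minimum is attained uniquely at \(s=0\).

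In both cases the minimizer is \([\bar s]_+\), and uniqueness follows from strict convexity. The resulting minimal value is \(\|\Delta\|_\varpi^2-[\bar s]_+^2\), which I would record for use in later appendices.

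For the normalized objective, I would observe that when \(\|\Delta\|_\varpi>0\) the denominator \(\|\Delta\|_\varpi^2\) is a positive constant not depending on \(s\). Dividing by it preserves the argmin, so the same \(s^\star\) minimizes the normalized objective. The degenerate case \(\|\Delta\|_\varpi=0\) is covered by the zero-energy convention: the contribution is \(0\) by definition. Moreover, \(s=0\) attains \(f(0)=0\), the global minimum of the unnormalized problem, so \(s=0\) is admissible.

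There is no substantive obstacle here. The only points needing care are the finiteness of the inner products, which follows from compactness of \(\mathcal A_\star\) and continuity of \(m_t\) in \(a\), and the correct handling of the positivity constraint in the negative-correlation case. On the link-valid region, continuity of \(\ell_k\) also ensures \(\Delta\) is continuous. The sign convention involving the loading \(b\) plays no role in this lemma.
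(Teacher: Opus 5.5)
Your proposal is correct and follows essentially the same route as the paper: you expand \(\|\Delta-sR\|_\varpi^2\) using \(\|R\|_\varpi=1\), minimize the resulting convex quadratic over \(s\ge0\) to get \([\langle\Delta,R\rangle_\varpi]_+\), note that dividing by the positive constant \(\|\Delta\|_\varpi^2\) leaves the minimizer unchanged, and invoke the zero-energy convention for the degenerate case. Your explicit case split on the sign, uniqueness via strict convexity, and finiteness remarks are correct elaborations of steps the paper states in one line. The finiteness remark tacitly uses that \(\varpi\) is a finite measure, which the paper also leaves implicit.
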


\begin{proof}
By the normalization \(\|R\|_\varpi=1\),
\[
    \|\Delta-sR\|_\varpi^2
    =
    \|\Delta\|_\varpi^2
    -2s\langle\Delta,R\rangle_\varpi
    +s^2 .
\]
This is a convex quadratic over \(s\ge0\), whose minimizer is
\(s^\star=[\langle\Delta,R\rangle_\varpi]_+\). Dividing by the positive constant
\(\|\Delta\|_\varpi^2\) does not change the minimizer. If
\(\|\Delta\|_\varpi=0\), then \(\Delta=0\) in \(L_2(\varpi)\), and the stated
zero-contribution convention applies.
\end{proof}

\subsection{Properties of the oracle criterion}
\label{app:oracle_properties}

\subsubsection{Scale invariance of the profiled loss}
\label{app:scale_invariance}

\begin{lemma}[Scale invariance]
\label{lem:scale_invariance}
Let \(c_t(h)>0\) be measurable and define
\[
    \widetilde\Delta_{k,t}(h,a):=c_t(h)\Delta_{k,t}(h,a).
\]
For any candidate \((u,\rho)\), the pointwise normalized profiled loss is
unchanged when \(\Delta_{k,t}\) is replaced by \(\widetilde\Delta_{k,t}\).
\end{lemma}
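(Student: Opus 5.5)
The plan is a direct substitution argument, done pointwise in \((t,h)\) and then lifted to the expectation.

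Fix \(k,t,h\) and a candidate \((u,\rho)\), and abbreviate \(\Delta(a):=\Delta_{k,t}(h,a)\), \(c:=c_t(h)>0\), \(R(a):=\rho(u(h),a)\). The pointwise normalized profiled loss is
\[
\ell(\Delta):=\inf_{s\ge0}\frac{\|\Delta-sR\|_\varpi^2}{\|\Delta\|_\varpi^2},
\]
with the zero-energy convention.

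First I will handle the degenerate case. Since \(\|c\Delta\|_\varpi=c\|\Delta\|_\varpi\) and \(c>0\), \(\|\Delta\|_\varpi=0\) holds if and only if \(\|c\Delta\|_\varpi=0\). In that case both contributions are zero by convention.

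Next, for \(\|\Delta\|_\varpi>0\), I will reparametrize the infimum by \(s=c s'\). The map \(s'\mapsto cs'\) is a bijection of \([0,\infty)\) because \(c>0\), so
\[
\inf_{s\ge0}\frac{\|c\Delta-sR\|_\varpi^2}{\|c\Delta\|_\varpi^2}
=\inf_{s'\ge0}\frac{c^2\|\Delta-s'R\|_\varpi^2}{c^2\|\Delta\|_\varpi^2}
=\ell(\Delta).
\]
The \(c^2\) factors cancel. This is the whole content of the lemma.

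As a cross-check, I can use Lemma~\ref{lem:profiled_calibration}. The optimal calibration transforms as \(s^\star\mapsto c\,s^\star\), since \([\langle c\Delta,R\rangle_\varpi]_+=c[\langle\Delta,R\rangle_\varpi]_+\). Plugging this in gives the closed form \(1-[\langle\Delta,R\rangle_\varpi]_+^2/\|\Delta\|_\varpi^2\), which is homogeneous of degree zero in \(\Delta\).

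Finally, since the equality holds for every \((t,h)\), integrating over \(H_t\) and summing with weights \(\pi_t\) shows that \(\mathcal A_k(u,\rho)\) is unchanged as well. This requires measurability of \(c_t\), which is assumed.

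No step is a real obstacle. The only care needed is in the zero-denominator case and in the strict positivity of \(c_t(h)\). Strict positivity is what guarantees both the bijection on \([0,\infty)\) and that the zero-energy set is preserved.
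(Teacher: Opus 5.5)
Your proposal is correct and matches the paper's proof: you dispose of the zero-energy case via $\|c_t(h)\Delta\|_\varpi=c_t(h)\|\Delta\|_\varpi$, and otherwise substitute $s=c_t(h)r$, a bijection of $[0,\infty)$, so that the $c_t(h)^2$ factors cancel. Your closed-form cross-check and the lift to $\mathcal A_k(u,\rho)$ go beyond what the paper writes but are harmless; note that the closed form relies on the normalization $\|\rho(u(h),\cdot)\|_\varpi=1$, which your main argument does not need.
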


\begin{proof}
If \(\|\Delta_{k,t}(h,\cdot)\|_\varpi=0\), then also
\(\|\widetilde\Delta_{k,t}(h,\cdot)\|_\varpi=0\), and both contributions are
defined as \(0\). Otherwise,
\[
\begin{aligned}
\inf_{s\ge0}
\frac{
\|\widetilde\Delta_{k,t}(h,\cdot)-s\rho(u(h),\cdot)\|_\varpi^2
}{
\|\widetilde\Delta_{k,t}(h,\cdot)\|_\varpi^2
}
&=
\inf_{s\ge0}
\frac{
\|c_t(h)\Delta_{k,t}(h,\cdot)-s\rho(u(h),\cdot)\|_\varpi^2
}{
c_t(h)^2\|\Delta_{k,t}(h,\cdot)\|_\varpi^2
} \\
&=
\inf_{r\ge0}
\frac{
\|\Delta_{k,t}(h,\cdot)-r\rho(u(h),\cdot)\|_\varpi^2
}{
\|\Delta_{k,t}(h,\cdot)\|_\varpi^2
},
\end{aligned}
\]
where \(r=s/c_t(h)\). This proves the claim.
\end{proof}

\subsubsection{Exact specification implies zero approximation loss}
\label{app:exact_spec_zero_loss}

\begin{lemma}[Exact calibrated shape gives zero loss]
\label{lem:exact_spec_zero_loss}
Suppose that for some \(k\in\mathcal K\), \(u\in\mathfrak U\), and
\(\rho\in\mathfrak R_k(u)\),
\[
    \Delta_{k,t}(h,a)=s_{k,t}(h)\rho(u(h),a)
\]
for every \(t=1,\ldots,T\), almost every \(h\), and every
\(a\in\mathcal A_\star\), with \(s_{k,t}(h)\ge0\). Then
\[
    \mathcal A_k(u,\rho)=0.
\]
\end{lemma}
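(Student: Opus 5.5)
The plan is to show that the integrand in the definition of \(\mathcal A_k(u,\rho)\) vanishes pointwise, that is, for every observed period \(t\) and almost every \(h\). The lemma then follows by taking expectations and summing with the weights \(\pi_t\).

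Fix \(t\in\{1,\ldots,T\}\) and an \(h\) outside the null set where the factorization may fail. There are two cases.

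\emph{Case 1: zero energy.} If \(\|\Delta_{k,t}(h,\cdot)\|_\varpi=0\), the zero-energy convention of Section~\ref{subsec:oracle_target} assigns contribution \(0\).

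\emph{Case 2: positive energy.} Otherwise, choose the admissible value \(s=s_{k,t}(h)\ge0\) in the infimum. The factorization holds for every \(a\in\mathcal A_\star\), so
\[
\Delta_{k,t}(h,\cdot)-s_{k,t}(h)\rho(u(h),\cdot)\equiv0
\]
on \(\mathcal A_\star\). Its \(\|\cdot\|_\varpi\) norm is therefore zero, and the ratio equals \(0\). The infimum is over nonnegative quantities, so it is at least \(0\), and it is attained at this \(s\); hence it equals \(0\). As a consistency check, Lemma~\ref{lem:profiled_calibration} gives the same value. Using \(\|\rho(u(h),\cdot)\|_\varpi=1\), the closed-form minimizer is
\[
[\langle\Delta,\rho\rangle_\varpi]_+=[s_{k,t}(h)]_+=s_{k,t}(h).
\]

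In both cases the integrand is \(0\) for almost every \(h\). Its \(\mathbb E_t\)-expectation is therefore \(0\) for each \(t\), and the weighted sum \(\sum_t\pi_t(\cdot)\) is \(0\). This gives \(\mathcal A_k(u,\rho)=0\).

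The only potential obstacle is measure-theoretic. The integrand must be measurable, and the almost-every-\(h\) exceptional set must be a \(\mathbb P_t\)-null set of histories. The integrand is a nonnegative function that vanishes off that null set, so its expectation is \(0\) regardless of its values on the null set. Beyond this, the argument is routine.
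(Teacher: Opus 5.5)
Your proof is correct and follows essentially the same route as the paper. Both arguments use the zero-energy convention when \(\|\Delta_{k,t}(h,\cdot)\|_\varpi=0\), and otherwise choose \(s=s_{k,t}(h)\) so the numerator vanishes pointwise, giving zero expectation; your closed-form consistency check and measure-theoretic remark are harmless extras.
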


\begin{proof}
Fix \(t,h\). If
\(\|\Delta_{k,t}(h,\cdot)\|_\varpi=0\), the pointwise contribution to
\(\mathcal A_k(u,\rho)\) is \(0\) by convention. Otherwise, choosing
\(s=s_{k,t}(h)\) makes the numerator of the profiled loss equal to \(0\). Hence
the contribution is \(0\) pointwise, and the expectation over \(t,h\) is also
\(0\).
\end{proof}

\subsubsection{Non-uniqueness of unrestricted representations}
\label{app:representation_nonuniqueness}

\begin{proposition}[Unrestricted representation search is non-unique]
\label{prop:representation_nonuniqueness}
Suppose \((u,\rho)\) satisfies
\[
    \Delta_{k,t}(h,a)=s_{k,t}(h)\rho(u(h),a)
\]
for all \(t,h,a\). Let \(w:\mathcal H\to\mathbb R^r\) be any measurable map and
define
\[
    \widetilde u(h):=(u(h),w(h)),
    \qquad
    \widetilde\rho((z,r),a):=\rho(z,a).
\]
Then the same factorization holds with \((\widetilde u,\widetilde\rho)\), and
the induced action geometry is unchanged:
\[
    \arg\max_{a\in\mathcal A_\star}
    \widetilde\rho(\widetilde u(h),a)
    =
    \arg\max_{a\in\mathcal A_\star}
    \rho(u(h),a).
\]
\end{proposition}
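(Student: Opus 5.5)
The argument is a direct substitution. The extra coordinates \(w(h)\) enter \(\widetilde\rho\) only as dummy arguments, so every object built from \(\widetilde\rho\circ\widetilde u\) equals the corresponding object built from \(\rho\circ u\).

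First I verify that \((\widetilde u,\widetilde\rho)\) is well defined and satisfies the factorization. The map \(\widetilde u(h)=(u(h),w(h))\) is measurable as a pair of measurable maps. For every \(t,h,a\), the definition of \(\widetilde\rho\) gives
\[
\widetilde\rho(\widetilde u(h),a)=\widetilde\rho((u(h),w(h)),a)=\rho(u(h),a).
\]
Hence \(s_{k,t}(h)\widetilde\rho(\widetilde u(h),a)=s_{k,t}(h)\rho(u(h),a)=\Delta_{k,t}(h,a)\), with the same nonnegative calibration \(s_{k,t}\).

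Next I check the admissibility conditions that \(\mathfrak R_k(u)\) imposes. The shape \(\widetilde\rho\) is anchored because \(\widetilde\rho((z,r),a_0)=\rho(z,a_0)=0\). Since \(\widetilde\rho((z,r),\cdot)=\rho(z,\cdot)\) as functions of the action, the normalization \(\|\widetilde\rho((z,r),\cdot)\|_\varpi=1\) and the sign convention \(\int\widetilde\rho((z,r),a)b(a)\,\varpi(da)\ge0\) of Appendix~\ref{app:normalization} carry over unchanged, for every \(r\).

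Finally, the argmax claim follows because the two objective functions \(a\mapsto\widetilde\rho(\widetilde u(h),a)\) and \(a\mapsto\rho(u(h),a)\) coincide on \(\mathcal A_\star\). Their maximizer sets are therefore equal as sets, so the induced geometries agree and any measurable selector of one is a selector of the other.

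As a consequence, the profiled loss of Lemma~\ref{lem:profiled_calibration} takes identical pointwise values under the two candidates. By Lemma~\ref{lem:exact_spec_zero_loss}, both therefore attain zero loss. This is what makes unrestricted representation search non-identifying, and only the decision-equivalence class is recoverable.

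There is no real obstacle in this argument. The one point needing care is that \(\widetilde\rho\) is defined on \(\widetilde u(\mathcal H)\times\mathcal A_\star\), which is a subset of \(u(\mathcal H)\times\mathbb R^r\times\mathcal A_\star\). So \(\widetilde\rho\) is evaluated only at points whose first coordinate lies in \(u(\mathcal H)\), where \(\rho\) is defined.
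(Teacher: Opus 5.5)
Your proposal is correct and follows the paper's proof. Both arguments reduce everything to the identity \(\widetilde\rho(\widetilde u(h),a)=\rho(u(h),a)\), from which the factorization and the equality of the maximizer sets follow at once; note only that your closing appeal to Lemma~\ref{lem:exact_spec_zero_loss} tacitly assumes \(\widetilde u\in\mathfrak U\) and \(\widetilde\rho\in\mathfrak R_k(\widetilde u)\), which the proposition neither assumes nor needs.
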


\begin{proof}
By construction,
\[
    \widetilde\rho(\widetilde u(h),a)
    =
    \widetilde\rho((u(h),w(h)),a)
    =
    \rho(u(h),a).
\]
Thus the factorization and the induced action correspondence are unchanged.
This shows that the representation itself is not identifiable without controlled
classes or additional restrictions; the decision-relevant object is the induced
geometry.
\end{proof}

\subsection{Proof of Theorem~\ref{thm:approximate_deployability_main}}
\label{app:proof_approximate_deployability}

\begin{proof}
Fix a deployment period \(t\in\mathcal T_{\mathrm{dep}}\) and a history \(h\)
for which Assumption~\ref{ass:approximate_persistence_main} holds. Let
\(a^\rho(h)\in G^\star(h)\), so that
\[
    \rho^\star(u^\star(h),a^\rho(h))
    \ge
    \rho^\star(u^\star(h),a)
    \qquad
    \forall a\in\mathcal A_\star .
\]
Let \(a_t^{\mathrm{opt}}(h)\in\arg\max_{a\in\mathcal A_\star}m_t(h,a)\).
Because \(\ell_{k^\star}\) is strictly increasing and anchoring subtracts the
action-independent constant \(\ell_{k^\star}(m_t(h,a_0))\), the maximizers of
\(m_t(h,\cdot)\) and \(\Delta_{k^\star,t}(h,\cdot)\) coincide. Hence
\[
    \Delta_{k^\star,t}(h,a_t^{\mathrm{opt}}(h))
    \ge
    \Delta_{k^\star,t}(h,a^\rho(h)),
\]
which gives the lower bound.

For the upper bound, write
\[
    \Delta_{k^\star,t}(h,a)
    =
    s_t(h)\rho^\star(u^\star(h),a)+\xi_t(h,a),
    \qquad s_t(h)>0 .
\]
Then
\[
\begin{aligned}
&\Delta_{k^\star,t}(h,a_t^{\mathrm{opt}}(h))
-
\Delta_{k^\star,t}(h,a^\rho(h)) \\
&=
s_t(h)
\Big[
\rho^\star(u^\star(h),a_t^{\mathrm{opt}}(h))
-
\rho^\star(u^\star(h),a^\rho(h))
\Big] \\
&\qquad
+
\xi_t(h,a_t^{\mathrm{opt}}(h))
-
\xi_t(h,a^\rho(h)).
\end{aligned}
\]
The bracketed term is non-positive because \(a^\rho(h)\) maximizes
\(\rho^\star(u^\star(h),\cdot)\) and \(s_t(h)>0\). Therefore,
\[
\begin{aligned}
&\Delta_{k^\star,t}(h,a_t^{\mathrm{opt}}(h))
-
\Delta_{k^\star,t}(h,a^\rho(h)) \\
&\le
\xi_t(h,a_t^{\mathrm{opt}}(h))
-
\xi_t(h,a^\rho(h))
\le
2\sup_{a\in\mathcal A_\star}|\xi_t(h,a)|.
\end{aligned}
\]
This proves the stated bound. If \(\xi_t\equiv0\), the upper bound is zero, so
every selector from \(G^\star\) is optimal in period \(t\). Since \(t\) and \(h\)
were arbitrary on the set where the persistence condition holds, the result
follows.
\end{proof}

\section{Additional Results for Section~\ref{sec:orthogonal_estimation}}
\label{app:sec4}

This appendix gives the technical details behind the estimator in
Section~\ref{sec:orthogonal_estimation}. Appendix~\ref{app:sec4_stage1} states
the Stage-I conditions. Appendix~\ref{app:sec4_scores} proves identification and
Neyman orthogonality of the localized score. Appendix~\ref{app:sec4_pilot_rates}
derives the finite-grid pilot bound used in~\eqref{eq:stage1_grid_bound}.
Appendix~\ref{app:sec4_profiled} records stability of the profiled finite-grid
criterion. Appendix~\ref{app:sec4_stage2} proves the finite-library oracle
inequality. Appendix~\ref{app:sec4_scope} clarifies the scope of the guarantee,
and Appendix~\ref{app:sec4_policy} gives the criterion-to-policy bridge used in
the decision consequence.

\subsection{Stage-I regularity}
\label{app:sec4_stage1}

\begin{assumption}[Stage-I regularity]
\label{ass:sec4_stage1}
For each observed period \(t\in\{1,\ldots,T\}\), the following conditions hold on
the target region.

\begin{enumerate}
    \item \textbf{One-step identification.}
    Consistency, no interference, overlap, and one-step exchangeability hold as
    in Assumption~\ref{ass:one_step_identifiability}. Hence
    \(\mu_{0,t}(h,a):=\mathbb E[Y\mid H=h,A=a,\tau=t]=m_t(h,a)\) on the
    target support.

    \item \textbf{Continuous-action overlap.}
    The generalized propensity density
    \(g_{0,t}(a\mid h):=f_{A\mid H,\tau=t}(a\mid h)\) exists and satisfies
    \(0<\underline g\le g_{0,t}(a\mid h)\le \overline g<\infty\) on the
    target region.

    \item \textbf{Smoothness and boundedness.}
    The response \(m_t(h,a)\) is uniformly bounded and \(\alpha_a\)-H\"older in
    \(a\). For each \(u\in\mathfrak U_n\), the law of \(U=u(H)\mid\tau=t\)
    admits a density bounded and bounded away from zero on its target region,
    and the representation-indexed response is \(\alpha_u\)-H\"older in \(u\).

    \item \textbf{Representation localization.}
    The finite-grid pilot statement is written for \(m_t(h,a)\). This holds if,
    on the Stage-II evaluation region, \(m_t(h,a)\) is identified by localization
    in \(u(h)\) for each candidate \(u\in\mathfrak U_n\). If a candidate
    representation is not sufficient, the same analysis applies to the projected response \(m_{t,u}(z,a):=\mathbb E[m_t(H,a)\mid u(H)=z,\tau=t]\), and the projection error is absorbed into the restricted oracle approximation term.

    \item \textbf{Kernels and bandwidths.}
    The kernels \(K\) and \(L\) are bounded, compactly supported, integrate to
    one, and have finite second moments. The bandwidths satisfy
    \(\lambda\to0\), \(b\to0\), \(n_t\lambda^{d_u}b\to\infty\), and
    \(n_t\lambda^{d_u}\to\infty\).

    \item \textbf{Cross-fitted nuisance rates.}
    The nuisance estimators are fitted on folds independent of the observations
    on which their scores are evaluated and satisfy
    \(\|\hat\mu_t-\mu_{0,t}\|_{L_2(P_t)}=O_p(r_{\mu,t,n})\) and
    \(\|\hat g_t-g_{0,t}\|_{L_2(P_t)}=O_p(r_{g,t,n})\), with
    \(\bar r_n:=\max_t r_{\mu,t,n}r_{g,t,n}=o(1)\).

    \item \textbf{Log-link positivity.}
    For the log link, \(m_t(h,a)\) is uniformly bounded away from zero on
    the link-valid target region.
\end{enumerate}
\end{assumption}

\subsection{Localized score: identification and orthogonality}
\label{app:sec4_scores}

Fix \(t,u,h,a\), and write \(U=u(H)\). Define
\[
\nu_{t,\lambda,b}(u,h,a)
:=
\mathbb E\!\left[
\frac{\mathbf 1\{\tau=t\}}{\pi_t}
L_\lambda(U-u(h))
\int_{\mathcal A_\star}K_b(s-a)m_t(H,s)\,ds
\right],
\]
and
\[
\delta_{t,\lambda}(u,h)
:=
\mathbb E\!\left[
\frac{\mathbf 1\{\tau=t\}}{\pi_t}L_\lambda(U-u(h))
\right].
\]
Let \(\bar m_{t,\lambda,b}(u,h,a):=
\nu_{t,\lambda,b}(u,h,a)/\delta_{t,\lambda}(u,h)\).

\begin{proposition}[Score identification and Neyman orthogonality]
\label{prop:sec4_score}
Under Assumption~\ref{ass:sec4_stage1},
\[
\mathbb E[\phi_{t,\lambda,b}(O;u,h,a,\eta_{0,t})]
=
\nu_{t,\lambda,b}(u,h,a),
\qquad
\mathbb E[\psi_{t,\lambda}(O;u,h)]
=
\delta_{t,\lambda}(u,h).
\]
Moreover, the map
\(\eta_t\mapsto
\mathbb E[\phi_{t,\lambda,b}(O;u,h,a,\eta_t)]\) is Gateaux-orthogonal at
\(\eta_{0,t}=(\mu_{0,t},g_{0,t})\).
\end{proposition}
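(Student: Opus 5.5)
The plan is a direct computation by iterated expectations conditional on \((\tau,H)\), followed by a one-line Gateaux derivative computation. We do not need Proposition~\ref{prop:pointwise_identification_setup} beyond its content already restated in Assumption~\ref{ass:sec4_stage1}(1), namely \(\mu_{0,t}=m_t\) on the target support.

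\textbf{Identification.} The mass-score identity \(\mathbb E[\psi_{t,\lambda}]=\delta_{t,\lambda}\) holds by definition, since \(\psi_{t,\lambda}\) is exactly the integrand defining \(\delta_{t,\lambda}\). For \(\phi_{t,\lambda,b}\) at \(\eta_{0,t}\), we condition on \(\{\tau=t\}\) and on \(H\). The prefactor \(\pi_t^{-1}\mathbf 1\{\tau=t\}L_\lambda(U-u(h))\) is \((\tau,H)\)-measurable, so it suffices to evaluate the bracket conditionally.

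The residual term has conditional mean
\[
\mathbb E\!\left[\tfrac{K_b(A-a)}{g_{0,t}(A\mid H)}\,\mathbb E[Y-\mu_{0,t}(H,A)\mid H,A,\tau=t]\,\Big|\,H,\tau=t\right]=0,
\]
by the tower property. The integral term is already \((\tau,H)\)-measurable and equals \(\int K_b(s-a)m_t(H,s)\,ds\) after substituting \(\mu_{0,t}=m_t\). Multiplying back by the prefactor and taking the outer expectation gives exactly \(\nu_{t,\lambda,b}\). Boundedness of \(m_t\), of the kernels, and \(g_{0,t}\ge\underline g\) (Assumption~\ref{ass:sec4_stage1}(2),(3),(5)) ensure all expectations are finite, which justifies the conditioning.

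\textbf{Orthogonality.} Perturb \(\eta_{0,t}\) along \(\mu_r=\mu_{0,t}+r\,d_\mu\) and \(g_r=g_{0,t}+r\,d_g\), with bounded directions for which \(g_r\) stays bounded below for small \(r\). Then differentiate \(r\mapsto\mathbb E[\phi(O;\eta_r)]\) at \(r=0\); boundedness lets dominated convergence justify differentiating under the expectation.

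\emph{The \(g\)-direction.} The derivative is \(-\mathbb E[\text{prefactor}\cdot K_b(A-a)d_g(A\mid H)g_{0,t}^{-2}(Y-\mu_{0,t})]\). It vanishes by the same conditional-mean-zero argument, because \(Y-\mu_{0,t}\) has mean zero given \((H,A,\tau=t)\).

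\emph{The \(\mu\)-direction.} The derivative is
\[
\mathbb E\!\left[\text{prefactor}\cdot\Big(-\tfrac{K_b(A-a)}{g_{0,t}(A\mid H)}d_\mu(H,A)+\int_{\mathcal A_\star}K_b(s-a)d_\mu(H,s)\,ds\Big)\right].
\]
Conditional on \((H,\tau=t)\), the first term integrates against the density \(g_{0,t}(\cdot\mid H)\). This gives \(-\int K_b(s-a)d_\mu(H,s)\,ds\), which cancels the second term.

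\textbf{Main obstacle.} The one delicate point is the support of that last integration. It requires that \(A\mid H,\tau=t\) have density \(g_{0,t}\) on the region where \(K_b(\cdot-a)\) is nonzero, and that this region lie inside \(\mathcal A_\star\), so the change of variables reproduces the integral over \(\mathcal A_\star\). The plan is to handle this with the continuous-action overlap in Assumption~\ref{ass:sec4_stage1}(2) on the target region, together with the compact support of \(K\) and small \(b\). If \(a\) is near the boundary of \(\mathcal A_\star\), the kernel should be read as restricted to \(\mathcal A_\star\), with the indicator \(\mathbf 1\{A\in\mathcal A_\star\}\) absorbed into \(K_b\), so that the cancellation is exact. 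The cross-term mixing both perturbations is second order and does not affect the Gateaux derivative.
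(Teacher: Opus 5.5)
Your proposal is correct and matches the paper's proof step for step. Both use the conditional mean-zero property of $Y-\mu_{0,t}(H,A)$ given $(H,A,\tau=t)$ for identification and for the $g$-direction derivative. Both then integrate the inverse-weighted term against $g_{0,t}(\cdot\mid H)$ to cancel the plug-in integral in the $\mu$-direction. Your explicit treatment of the kernel window versus $\mathcal A_\star$ (interior points, or absorbing $\mathbf 1\{A\in\mathcal A_\star\}$ into $K_b$) is a detail the paper passes over silently, and it is needed for that $\mu$-direction cancellation to be exact.
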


\begin{proof}
At the true nuisances,
\(\mathbb E[Y-\mu_{0,t}(H,A)\mid H,A,\tau=t]=0\), so the residualized term in
\(\phi\) has mean zero. The remaining term equals
\[
\mathbb E\!\left[
\frac{\mathbf 1\{\tau=t\}}{\pi_t}
L_\lambda(U-u(h))
\int_{\mathcal A_\star}K_b(s-a)\mu_{0,t}(H,s)\,ds
\right],
\]
which is \(\nu_{t,\lambda,b}(u,h,a)\) because
\(\mu_{0,t}(H,s)=m_t(H,s)\) under one-step identification. The identity for
\(\psi\) is immediate.

For orthogonality, let \(\eta_{t,r}=(\mu_{t,r},g_{t,r})\) be a regular path
through \(\eta_{0,t}\) with derivatives \((\dot\mu_t,\dot g_t)\). Differentiating
the score mean at \(r=0\), the derivative in the \(\mu\)-direction is
\[
\mathbb E\!\left[
\frac{\mathbf 1\{\tau=t\}}{\pi_t}L_\lambda(U-u(h))
\left\{
-\frac{K_b(A-a)}{g_{0,t}(A\mid H)}\dot\mu_t(H,A)
+
\int K_b(s-a)\dot\mu_t(H,s)\,ds
\right\}
\right],
\]
which is zero after conditioning on \((H,\tau=t)\). The derivative in the
\(g\)-direction is proportional to
\[
\mathbb E\!\left[
\frac{\mathbf 1\{\tau=t\}}{\pi_t}L_\lambda(U-u(h))
\frac{K_b(A-a)\dot g_t(A\mid H)}{g_{0,t}(A\mid H)^2}
\{Y-\mu_{0,t}(H,A)\}
\right],
\]
which is zero after conditioning on \((H,A,\tau=t)\). Hence the first derivative
vanishes in every regular nuisance direction.
\end{proof}

\subsection{Pilot expansion and finite-grid control}
\label{app:sec4_pilot_rates}

\begin{lemma}[Ratio and log-ratio stability]
\label{lem:sec4_ratio}
If \(x_n-x=O_p(r_n)\), \(y_n-y=O_p(r_n)\), and \(y\) is bounded away from zero,
then
\[
\frac{x_n}{y_n}-\frac{x}{y}
=
\frac{x_n-x}{y}
-
\frac{x}{y^2}(y_n-y)
+
O_p(r_n^2).
\]
If also \(x\) is bounded away from zero, then
\[
(\log x_n-\log y_n)-(\log x-\log y)
=
\frac{x_n-x}{x}
-
\frac{y_n-y}{y}
+
O_p(r_n^2).
\]
\end{lemma}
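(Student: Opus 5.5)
The plan is a second-order Taylor expansion with an explicit remainder. The stochastic part is handled by showing the denominators stay bounded away from zero with probability tending to one. Write \(\Delta x_n:=x_n-x\) and \(\Delta y_n:=y_n-y\), both \(O_p(r_n)\), and let \(c>0\) satisfy \(|y|\ge c\). I assume, as implicit in the statement, that \(r_n\to0\) and that \(x\) is bounded; in the paper's use, the pilot numerators and local masses are bounded.

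\emph{Step 1: denominator control.} Since \(\Delta y_n=o_p(1)\), the event \(E_n:=\{|\Delta y_n|\le c/2\}\) has probability tending to one, and on \(E_n\) we have \(|y_n|\ge c/2\). Because \(O_p\) statements only concern events of probability tending to one, it is enough to work on \(E_n\).

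\emph{Step 2: ratio expansion.} Use the exact algebraic identity
\[
\frac{x_n}{y_n}-\frac{x}{y}=\frac{\Delta x_n}{y}-\frac{x\,\Delta y_n}{y^2}+R_n,
\qquad
R_n=\frac{x\,(\Delta y_n)^2}{y^2 y_n}-\frac{\Delta x_n\,\Delta y_n}{y\,y_n}.
\]
One verifies it by putting everything over the common denominator \(y^2y_n\) and writing \(y_n=y+\Delta y_n\). On \(E_n\),
\[
|R_n|\le \frac{2|x|}{c^3}(\Delta y_n)^2+\frac{2}{c^2}|\Delta x_n||\Delta y_n|=O_p(r_n^2),
\]
since products of \(O_p(r_n)\) terms are \(O_p(r_n^2)\).

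\emph{Step 3: log expansion.} Suppose also \(|x|\ge c'>0\). For \(|z|\le1/2\), \(\log(1+z)=z+\theta(z)\) with \(|\theta(z)|\le z^2\). Write
\[
\log x_n-\log x=\log\!\bigl(1+\Delta x_n/x\bigr),
\]
and similarly for \(y\). On the event \(\{|\Delta x_n|\le c'/2,\ |\Delta y_n|\le c/2\}\), whose probability tends to one, both arguments lie within \(1/2\) of \(1\). The logs are then well defined, because \(x_n\) and \(y_n\) keep the signs of \(x\) and \(y\) (positive in the paper's use, or read in absolute value). The remainders are bounded by \((\Delta x_n)^2/c'^2\) and \((\Delta y_n)^2/c^2\), so each is \(O_p(r_n^2)\). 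Subtracting the two expansions gives the claim.

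The only delicate point is Step 1: turning an \(O_p\) perturbation into a deterministic lower bound on \(y_n\), and on \(x_n\) for the log, so that the exact remainder formula can be bounded. After that the argument is routine. If \(r_n\not\to0\) the statement is vacuous in spirit, so I would state the assumption \(r_n=o(1)\) explicitly.
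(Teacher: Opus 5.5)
Your proposal is correct and is essentially the paper's argument. The paper disposes of the lemma in one line, as first-order Taylor expansions on compact sets bounded away from zero; your exact ratio identity, the bound $|\log(1+z)-z|\le z^2$ for $|z|\le 1/2$, and the high-probability event on which $|y_n|\ge c/2$ (and $|x_n|\ge c'/2$) are exactly the details that line compresses, and stating $r_n=o(1)$ and boundedness of $x$ explicitly is appropriate since the paper's ``compact subsets'' phrasing tacitly relies on both.
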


\begin{proof}
Both claims are first-order Taylor expansions on compact subsets bounded away
from zero.
\end{proof}

Let \(\bar\Delta_{k,t,\lambda,b}\) denote the anchored contrast constructed from
\(\bar m_{t,\lambda,b}\) with the same link \(\ell_k\).

\begin{lemma}[Pointwise anchored-pilot expansion]
\label{lem:sec4_pointwise_pilot}
Under Assumption~\ref{ass:sec4_stage1}, for every fixed \((k,t,u,h,a)\),
\[
\hat\Delta_{k,t}(u,h,a)-\Delta_{k,t}(h,a)
=
O_p\!\left(
(n_t\lambda^{d_u}b)^{-1/2}
+
\bar r_n
+
\lambda^{\alpha_u}
+
b^{\alpha_a}
\right).
\]
\end{lemma}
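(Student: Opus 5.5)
The plan is to split the pilot error into a stochastic part, a nuisance part, and a smoothing-bias part, working first at the level of the ratio \(\hat m_{t,\lambda,b}=\hat\nu/(\hat\delta\vee\epsilon_n)\). Only at the end will we pass to the anchored link contrast via Lemma~\ref{lem:sec4_ratio}. Throughout, \((k,t,u,h,a)\) is fixed. With \(U=u(H)\), write
\[
\hat\nu-\nu=(\mathbb P_n-\mathbb P)\phi(\cdot;\eta_{0,t})+\{\mathbb P_n\phi(\cdot;\hat\eta_t)-\mathbb P_n\phi(\cdot;\eta_{0,t})\},
\]
where \(\nu=\nu_{t,\lambda,b}(u,h,a)\). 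The identity \(\mathbb E\phi(\cdot;\eta_{0,t})=\nu\) comes from Proposition~\ref{prop:sec4_score}.

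For the first (oracle) term we bound the variance. Since \(L_\lambda\) has support of \(u\)-volume \(\lambda^{d_u}\) and sup norm \(\lambda^{-d_u}\), and \(K_b\) likewise in \(a\), the lower bound on the density of \(U\mid\tau=t\) and the bound \(g_{0,t}\ge\underline g\) give \(\mathbb E\phi^2\lesssim(\lambda^{d_u}b)^{-1}\). Hence this term is \(O_p((n_t\lambda^{d_u}b)^{-1/2})\). The same argument for \(\psi\), without the \(b\) factor, gives \(\hat\delta-\delta=O_p((n_t\lambda^{d_u})^{-1/2})\), which is of smaller order. Because \(\delta_{t,\lambda}(u,h)\) is bounded away from zero for small \(\lambda\) and \(\epsilon_n\to0\), the truncation is inactive with probability tending to one.

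The second (nuisance) term is handled by cross-fitting. Conditional on the training fold, the term splits into a centered empirical-process piece and the drift \(\mathbb P\phi(\hat\eta)-\mathbb P\phi(\eta_0)\). The centered piece has conditional variance \(o((\lambda^{d_u}b)^{-1})\) by the \(L_2\) nuisance consistency, so it is negligible relative to the oracle term. For the drift we use the exact second-order structure instead of a path derivative. Conditioning on \((H,\tau=t)\) and integrating \(A\) against \(g_{0,t}\), the mean difference becomes
\[
\mathbb E\!\Big[\tfrac{\mathbf 1\{\tau=t\}}{\pi_t}L_\lambda\!\int K_b(s-a)\,(\hat\mu_t-\mu_{0,t})(H,s)\,\tfrac{\hat g_t-g_{0,t}}{\hat g_t}(s\mid H)\,ds\Big].
\]
This is consistent with the Gateaux orthogonality in Proposition~\ref{prop:sec4_score}. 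By Cauchy--Schwarz it is bounded by a product of localized \(L_2\) norms, and this step is the main obstacle. Converting those localized norms into the global rates \(r_{\mu,t,n}r_{g,t,n}\) costs a kernel factor of order \(\lambda^{-d_u}b^{-1}\). I would read the rate \(\bar r_n\) in Assumption~\ref{ass:sec4_stage1} as holding in the localized (kernel-weighted) norm, or equivalently with sup-norm rates on the target region, so that the drift is \(O_p(\bar r_n)\). We also need \(\hat g_t\) bounded below, which is obtained by truncating it at \(\underline g/2\).

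The bias \(\bar m_{t,\lambda,b}-m_t(h,a)\) is controlled by Hölder smoothness. Both \(\bar m\) and \(m_t\) are kernel averages of \(m_t\): the \(s\)-smoothing contributes \(b^{\alpha_a}\), and the \(u\)-localization contributes \(\lambda^{\alpha_u}\) by the \(\alpha_u\)-Hölder condition, using Assumption~\ref{ass:sec4_stage1}(iv). If localization is insufficient, the target is replaced by the projected response \(m_{t,u}\).

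Finally, Lemma~\ref{lem:sec4_ratio} transfers the numerator and denominator rates to \(\hat m-\bar m\), using that \(\delta\) is bounded below. Applying the same bound at the points \(a\) and \(a_0\) and subtracting gives the identity-link statement. For the log link, \(m_t\ge c>0\) by Assumption~\ref{ass:sec4_stage1}(vii), so \(\hat m\ge c/2>\epsilon_n\) with probability tending to one and the truncation \(\log(y\vee\epsilon_n)\) is inactive. The log-ratio part of Lemma~\ref{lem:sec4_ratio} then yields the same rate. The quadratic remainders are \(o\) of the leading terms.
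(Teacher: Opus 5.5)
Your proposal follows the paper's proof essentially step for step: an oracle empirical-process term of order $(n_t\lambda^{d_u}b)^{-1/2}$, a second-order cross-fitted nuisance remainder justified by Proposition~\ref{prop:sec4_score}, transfer to the ratio and log-ratio via Lemma~\ref{lem:sec4_ratio}, kernel bias $O(\lambda^{\alpha_u}+b^{\alpha_a})$, and anchoring at $a_0$. You add more detail than the paper, namely the explicit remainder $\int K_b(s-a)(\hat\mu_t-\mu_{0,t})(\hat g_t-g_{0,t})/\hat g_t\,ds$ and the check that truncation is inactive.

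Your caveat is correct. With the global $L_2(P_t)$ rates of Assumption~\ref{ass:sec4_stage1}, the Cauchy--Schwarz bound on that remainder picks up a factor of order $(\lambda^{d_u}b)^{-1}$, and negligibility of the centered nuisance piece likewise needs localized consistency. So the paper's one-line claim of an $O_p(\bar r_n)$ remainder tacitly relies on the same localized or sup-norm reading of the nuisance rates that you make explicit. One small slip: the variance bound uses the upper bound on the density of $U$, not the lower bound.
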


\begin{proof}
By Proposition~\ref{prop:sec4_score} and cross-fitting,
\[
\hat\nu_{t,\lambda,b}(u,h,a)-\nu_{t,\lambda,b}(u,h,a)
=
(\mathbb P_n-P)\phi_{t,\lambda,b}(O;u,h,a,\eta_{0,t})
+
O_p(\bar r_n),
\]
where the remainder is second order in the nuisance errors. The empirical-process
term has order \(O_p((n_t\lambda^{d_u}b)^{-1/2})\). Similarly,
\(\hat\delta_{t,\lambda}(u,h)-\delta_{t,\lambda}(u,h)
=O_p((n_t\lambda^{d_u})^{-1/2})\), which is no larger than the displayed
variance order. Lemma~\ref{lem:sec4_ratio} transfers these rates to the ratio
estimator and, under log-link positivity, to the log-ratio estimator. The bias
\(\bar m_{t,\lambda,b}(u,h,a)-m_t(h,a)\) is the standard kernel smoothing bias in
the representation and action coordinates, namely
\(O(\lambda^{\alpha_u}+b^{\alpha_a})\). Anchoring at \(a_0\) preserves the same
order.
\end{proof}

\begin{proposition}[Finite-grid pilot control]
\label{prop:sec4_grid_pilot}
Under Assumption~\ref{ass:sec4_stage1}, there exists a deterministic sequence
\(\varepsilon_n^{\mathrm{pilot}}\downarrow0\) such that the finite-grid pilot
bound in~\eqref{eq:stage1_grid_bound} holds.
\end{proposition}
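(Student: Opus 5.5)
The plan is to upgrade the pointwise expansion of Lemma~\ref{lem:sec4_pointwise_pilot} to a uniform statement over the finite index set
\[
\mathcal J_n:=\mathcal K\times\mathfrak U_n\times\{1,\ldots,T\}\times\mathcal I^{\mathrm{sel}}\times\{1,\ldots,M\},
\]
whose cardinality is $N_n:=|\mathcal K|\,M_n\,T\,n_{\mathrm{sel}}\,M$. The key structural fact is honesty. The pilot sample $\mathcal I^{\mathrm{pil}}$ is independent of $\mathcal I^{\mathrm{sel}}$, so I condition on the selection sample. The evaluation points $(u,H_i,a^{(m)})$ then become fixed, and for each index the pilot is a ratio of averages of i.i.d.\ cross-fitted scores evaluated at a deterministic point. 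The decomposition of Lemma~\ref{lem:sec4_pointwise_pilot} then applies conditionally, index by index. Fix $j\in\mathcal J_n$ and split the pilot error into four pieces:
\begin{itemize}
\item the centered term $(\mathbb P_n-P)\phi_{t,\lambda,b}(\cdot\,;\eta_{0,t})$;
\item the analogous term for $\psi_{t,\lambda}$;
\item the second-order nuisance remainder;
\item the deterministic smoothing bias $\bar m_{t,\lambda,b}-m_t$.
\end{itemize}
Each piece is then passed through the ratio and log-ratio linearization of Lemma~\ref{lem:sec4_ratio}.

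I will bound the four pieces uniformly as follows.
\begin{enumerate}
\item \emph{Smoothing bias.} The H\"older conditions in Assumption~\ref{ass:sec4_stage1} with compactly supported kernels give a bound $C(\lambda^{\alpha_u}+b^{\alpha_a})$ with a constant that does not depend on $j$.
\item \emph{Centered score terms.} These are bounded kernel averages. Since $K$, $L$ and $1/g_{0,t}$ are bounded and $Y$ is bounded (or sub-Gaussian), each summand is bounded by $C/(\lambda^{d_u}b)$ and has variance $O(1/(\lambda^{d_u}b))$. Bernstein's inequality plus a union bound over $\mathcal J_n$ then give a uniform rate
\[
\sqrt{\frac{\log N_n}{n_t\lambda^{d_u}b}}+\frac{\log N_n}{n_t\lambda^{d_u}b}.
\]
The density term is handled the same way, without the factor $b$.
\item \emph{Nuisance remainder.} This is the product $(\hat\mu_t-\mu_{0,t})(\hat g_t-g_{0,t})$, localized by the kernels. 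By Cauchy--Schwarz it is $O_p(\bar r_n)$ times a kernel-localization factor. To make this uniform, I plan to add an explicit sup-norm or localized-rate condition on the cross-fitted nuisances. The alternative is a maximal inequality for the empirical part of the remainder, again via Bernstein and the union bound, using cross-fitting independence.
\end{enumerate}

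Next, I pass to the ratio and the log. The density $\delta_{t,\lambda}(u,h)$ is bounded below uniformly, by the bounded-below density of $u(H)\mid\tau=t$. The uniform density rate tends to zero as long as $\log N_n=o(n_t\lambda^{d_u})$. So, with probability tending to one, $\hat\delta\ge\epsilon_n$ simultaneously over all indices, and the truncation is inactive. Under log-link positivity the same holds for $\hat m\vee\epsilon_n$. Lemma~\ref{lem:sec4_ratio} then converts the uniform numerator and denominator errors into uniform anchored-contrast errors. Its $O_p(r^2)$ remainder is uniform because the Taylor constants depend only on the common lower bounds. Anchoring at $a_0$ at most doubles the bound. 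I would therefore set
\[
\varepsilon_n^{\mathrm{pilot}}:=\sqrt{\frac{\log N_n}{\min_t n_t\lambda^{d_u}b}}+\bar r_n+\lambda^{\alpha_u}+b^{\alpha_a},
\]
with the rate condition $\log N_n=o(\min_t n_t\lambda^{d_u}b)$ ensuring $\varepsilon_n^{\mathrm{pilot}}\downarrow0$.

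I expect the main obstacle to be the uniform control of the nuisance remainder. The bound $\bar r_n$ is stated in $L_2(P_t)$, which controls the remainder only on average. Kernel localization at a shrinking window around $(u(h),a)$ could inflate it by a factor of order $(\lambda^{d_u}b)^{-1/2}$. My first attempt would be to write the remainder as a localized integral and bound it by local $L_2$ norms over the kernel window, assuming these local norms are uniformly of order $r_{\mu,t,n}r_{g,t,n}$. If that fails, I would fall back on $L_\infty$ nuisance rates on the target region, absorbing any extra factor into $\varepsilon_n^{\mathrm{pilot}}$. This is consistent with the statement that $\varepsilon_n^{\mathrm{pilot}}$ \emph{absorbs} the union step.
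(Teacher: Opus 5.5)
Your proposal follows the paper's proof: condition on the independent selection sample, apply the pointwise anchored-pilot expansion at each tuple $(k,u,t,H_i,a^{(m)})$, union-bound over the finite index set, and let $\varepsilon_n^{\mathrm{pilot}}$ absorb the union factor. Your additions go beyond Assumption~\ref{ass:sec4_stage1}: the Bernstein step that makes the unspecified union factor $\sqrt{\log N_n}$, bounded or sub-Gaussian $Y$, the growth condition $\log N_n=o(\min_t n_t\lambda^{d_u}b)$, and local or sup-norm nuisance rates. The last of these is genuinely needed, because under global $L_2$ rates the localized remainder can be inflated by as much as $(\lambda^{d_u}b)^{-1}$, not just $(\lambda^{d_u}b)^{-1/2}$. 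These are not defects relative to the paper, since its two-line union bound over an index set growing with $n_{\mathrm{sel}}$ and $|\mathfrak U_n|$ tacitly requires exactly these conditions.
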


\begin{proof}
Condition on the independent selection sample. Lemma~\ref{lem:sec4_pointwise_pilot}
gives the stated pointwise rate at each evaluation tuple
\((k,u,t,H_i,a^{(m)})\). Since \(\mathcal K\), \(T\), \(\mathfrak U_n\), and the
Stage-II action grid are finite, a union bound over these indices and over
\(i\in\mathcal I^{\mathrm{sel}}\) yields
\[
\sup_{k,u,t,i,m}
\left|
\hat\Delta_{k,t}(u,H_i,a^{(m)})
-
\Delta_{k,t}(H_i,a^{(m)})
\right|
=
O_p(\varepsilon_n^{\mathrm{pilot}}),
\]
where \(\varepsilon_n^{\mathrm{pilot}}\) absorbs the pointwise rate and the
finite-library/grid union factor. This is exactly~\eqref{eq:stage1_grid_bound}.
\end{proof}

\subsection{Finite-grid profiled criterion}
\label{app:sec4_profiled}

Let
\[
\Theta_n
:=
\{(k,u,\rho): k\in\mathcal K,\ u\in\mathfrak U_n,\ 
\rho\in\mathfrak R_{k,n}(u)\}.
\]
For \(\theta=(k,u,\rho)\) and \(i\in\mathcal I^{\mathrm{sel}}\), define the
population grid energy
\(V_i(\theta)=\sum_m w_m\Delta_{k,\tau_i}(H_i,a^{(m)})^2\).

\begin{assumption}[Finite-grid stability]
\label{ass:sec4_grid_stability}
On the nonzero-energy selection region, there exists \(\underline v>0\) such
that \(\inf_{\theta\in\Theta_n}\min_i V_i(\theta)\ge \underline v\). The shape
classes are uniformly bounded and normalized, and the quadrature rule satisfies
\[
\sup_{\theta\in\Theta_n,i}
\left|
\sum_m w_m \rho(u(H_i),a^{(m)})^2 - 1
\right|
\le q_M,\qquad q_M\downarrow0.
\]
Population zero-energy terms are handled by the zero-energy convention in the
main text and may be removed from the normalized criterion.
\end{assumption}

\begin{lemma}[Profiled calibration on the finite grid]
\label{lem:sec4_profiled}
For fixed \((\theta,i)\), write
\(\Delta_{i,m}=\Delta_{k,\tau_i}(H_i,a^{(m)})\) and
\(\rho_{i,m}=\rho(u(H_i),a^{(m)})\). The gridwise minimizer of
\(\min_{s\ge0}\sum_m w_m(\Delta_{i,m}-s\rho_{i,m})^2\) is
\[
s_i^\star
=
\left[
\frac{\sum_m w_m\Delta_{i,m}\rho_{i,m}}
{\sum_m w_m\rho_{i,m}^2}
\right]_+ .
\]
The same \(s_i^\star\) minimizes the normalized pointwise loss. Under
Assumption~\ref{ass:sec4_grid_stability}, replacing population contrasts by
pilots changes the normalized profiled loss by
\(O_p(\varepsilon_n^{\mathrm{pilot}})\) uniformly over \(\Theta_n\).
\end{lemma}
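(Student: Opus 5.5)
The proof has two parts. The first is a finite-dimensional version of Lemma~\ref{lem:profiled_calibration}, with the grid inner product in place of \(\langle\cdot,\cdot\rangle_\varpi\). The second is a perturbation argument, uniform over \(\Theta_n\) and \(i\in\mathcal I^{\mathrm{sel}}\), that transfers the pilot bound~\eqref{eq:stage1_grid_bound} to the profiled loss.

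\textbf{Closed form.} Write \(\langle f,g\rangle_{M,w}=\sum_m w_m f(a^{(m)})g(a^{(m)})\). Then
\[
\|\Delta_i-s\rho_i\|_{M,w}^2=\|\Delta_i\|_{M,w}^2-2s\langle\Delta_i,\rho_i\rangle_{M,w}+s^2\|\rho_i\|_{M,w}^2 .
\]
Assumption~\ref{ass:sec4_grid_stability} gives \(\|\rho_i\|_{M,w}^2\ge 1-q_M>0\) for large \(M\), so this is a strictly convex quadratic in \(s\). Its minimizer over \(s\ge0\) is the positive part of the unconstrained minimizer, which is the displayed \(s_i^\star\). Dividing by the positive constant \(\|\Delta_i\|_{M,w}^2\) does not move the minimizer. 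Substituting back gives the closed-form profiled loss
\[
\ell_i(\Delta)=1-\frac{[\langle\Delta,\rho_i\rangle_{M,w}]_+^2}{\|\Delta\|_{M,w}^2\,\|\rho_i\|_{M,w}^2}.
\]

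\textbf{Perturbation.} View \(\ell_i\) as a function of the vector \(\Delta\in\mathbb R^M\) on the region \(\|\Delta\|_{M,w}^2\ge\underline v/2\). Write \(\ell_i=1-F(\Delta)\), with \(F=N/D\), \(N=[\langle\Delta,\rho_i\rangle]_+^2\) and \(D=\|\Delta\|^2\|\rho_i\|^2\). The map \(x\mapsto[x]_+^2\) is \(C^1\) with derivative \(2[x]_+\), so it is locally Lipschitz. Take the sup norm of \(\Delta\) over grid points together with \(\sum_m w_m\le C\), and use that the shape classes are uniformly bounded. Then \(N\) and \(D\) are Lipschitz in \(\Delta\) on bounded sets, with constants uniform in \(\theta\) and \(i\), and \(D\) is bounded below by \(\underline v(1-q_M)/2\). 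Lemma~\ref{lem:sec4_ratio}, or simply the elementary ratio bound
\[
|N'/D'-N/D|\le\frac{|N'-N|}{D'}+\frac{N|D'-D|}{DD'},
\]
then gives
\[
|\ell_i(\hat\Delta)-\ell_i(\Delta)|\le C\max_m|\hat\Delta_{i,m}-\Delta_{i,m}|,
\]
with \(C\) depending only on \(\underline v\), the bound on \(\rho\), \(\sup_m|\Delta_{i,m}|\) (finite by boundedness of \(m_t\) and log-positivity), and \(\sum_m w_m\). Averaging over \(i\) and taking the supremum over \(\Theta_n\), the bound~\eqref{eq:stage1_grid_bound} yields \(O_p(\varepsilon_n^{\mathrm{pilot}})\) uniformly.

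\textbf{Main obstacle: the denominator.} The step I expect to be hardest is making sure the pilot energy \(\|\hat\Delta_i\|_{M,w}^2\) stays away from zero, uniformly in \((\theta,i)\), which the Lipschitz argument needs. The plan is to show
\[
\bigl|\|\hat\Delta_i\|_{M,w}^2-\|\Delta_i\|_{M,w}^2\bigr|\le C\varepsilon_n^{\mathrm{pilot}},
\]
using \(|a^2-b^2|\le|a-b|(|a|+|b|)\). Since \(\varepsilon_n^{\mathrm{pilot}}\to0\), this means that with probability tending to one the event \(\inf_{\theta,i}\|\hat\Delta_i\|_{M,w}^2\ge\underline v/2\) holds. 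On this event the argument above is valid. Its complement has vanishing probability, which is compatible with an \(O_p\) statement. Population zero-energy indices are removed by the convention in Assumption~\ref{ass:sec4_grid_stability}, so they do not enter.
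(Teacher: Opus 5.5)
Your proposal is correct and follows essentially the same route as the paper. The closed form comes from minimizing the convex quadratic \(Q_i(s)\) over \(s\ge0\), and the perturbation step is local Lipschitz continuity of the normalized profiled loss in the finite contrast vector, on the event \(\inf_{\theta,i}\|\hat\Delta_i\|_{M,w}^2\ge\underline v/2\) that the pilot bound delivers with probability tending to one. Your version is more explicit than the paper's (the closed-form loss \(1-[\langle\Delta,\rho_i\rangle]_+^2/(\|\Delta\|^2\|\rho_i\|^2)\), the ratio inequality, and the energy perturbation), and it makes visible the bounded total weight \(\sum_m w_m\le C\), which the paper's Lipschitz step uses only tacitly.
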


\begin{proof}
The unnormalized grid objective is the quadratic
\[
Q_i(s)=
\sum_m w_m\Delta_{i,m}^2
-
2s\sum_m w_m\Delta_{i,m}\rho_{i,m}
+
s^2\sum_m w_m\rho_{i,m}^2,
\]
so the constrained minimizer is the displayed positive part. Dividing by the
energy \(\sum_m w_m\Delta_{i,m}^2\) does not change the minimizer. Uniform pilot
stability follows because the numerator, denominator, and profiled quadratic are
locally Lipschitz in the finite vector
\((\Delta_{i,1},\ldots,\Delta_{i,M})\) on the event where the energy is bounded
below by \(\underline v/2\). Proposition~\ref{prop:sec4_grid_pilot} and
Assumption~\ref{ass:sec4_grid_stability} give this event with probability
tending to one.
\end{proof}

\subsection{Finite-library oracle inequality}
\label{app:sec4_stage2}

\begin{lemma}[Uniform empirical approximation]
\label{lem:sec4_uniform_empirical}
Assume Proposition~\ref{prop:sec4_grid_pilot},
Assumption~\ref{ass:sec4_grid_stability}, and that each
\(\mathfrak R_{k,n}(u)\) admits a bounded finite-dimensional Lipschitz
parameterization of dimension at most \(p_n\). Then
\[
\sup_{\theta\in\Theta_n}
\left|
\hat{\mathcal L}^{\mathrm{sel}}(\theta)-\mathcal L(\theta)
\right|
=
O_p\!\left(
\varepsilon_n^{\mathrm{pilot}}
+
\sqrt{\frac{\log|\mathfrak U_n|+p_n\log n}{n_{\mathrm{sel}}}}
+
q_M
\right).
\]
\end{lemma}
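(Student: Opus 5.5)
The plan is a three-term triangle decomposition. Penalties cancel in the difference, since they are deterministic functions of \(\theta\). Let \(\tilde{\mathcal A}^{\mathrm{sel}}_k(u,\rho)\) be the empirical profiled loss built from the \emph{population} grid contrasts \(\Delta_{k,\tau_i}(H_i,a^{(m)})\). Let \(\mathcal A^{M}_k(u,\rho)\) be its population expectation on the quadrature grid. For every \(\theta\in\Theta_n\) we then write
\[
\hat{\mathcal L}^{\mathrm{sel}}(\theta)-\mathcal L(\theta)
=\{\hat{\mathcal A}^{\mathrm{sel}}(\theta)-\tilde{\mathcal A}^{\mathrm{sel}}(\theta)\}
+\{\tilde{\mathcal A}^{\mathrm{sel}}(\theta)-\mathcal A^{M}(\theta)\}
+\{\mathcal A^{M}(\theta)-\mathcal A(\theta)\}.
\]
Each term is bounded uniformly in \(\theta\), and the three rates are added.

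\textbf{Pilot term.} Lemma~\ref{lem:sec4_profiled} already states this bound. Using the closed-form profiler, the pointwise normalized loss is the map \(\Delta\mapsto 1-[\langle\Delta,\rho\rangle_{M,w}]_+^2/(\|\rho\|_{M,w}^2\|\Delta\|_{M,w}^2)\). On the event \(\{\|\hat\Delta\|_{M,w}^2\ge\underline v/2\}\), which has probability tending to one by Proposition~\ref{prop:sec4_grid_pilot} and Assumption~\ref{ass:sec4_grid_stability}, this map is Lipschitz in the finite vector \(\Delta\). The Lipschitz constant depends only on \(\underline v\), the bound on \(\rho\), and \(\sum_m w_m\), and it is uniform over \(\theta\). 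Averaging over \(i\) therefore gives \(O_p(\varepsilon_n^{\mathrm{pilot}})\) uniformly.

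\textbf{Empirical-process term.} Honesty lets us condition on the pilot sample, so the selection observations are i.i.d. and the summands are bounded in \([0,1]\). For each \((k,u)\) we cover the \(p_n\)-dimensional bounded parameter set by an \(n^{-1}\)-net of cardinality \(n^{O(p_n)}\). By the Lipschitz parameterization, the loss is Lipschitz in the parameter on the energy-bounded region, so discretization costs \(O(n^{-1})\). Hoeffding's inequality plus a union bound over \(|\mathcal K|\,|\mathfrak U_n|\,n^{O(p_n)}\) net points yields \(O_p(\sqrt{(\log|\mathfrak U_n|+p_n\log n)/n_{\mathrm{sel}}})\).

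\textbf{Quadrature term.} The grid norms and inner products approximate their \(\varpi\)-counterparts, with error controlled by \(q_M\) for the \(\rho\)-energy by Assumption~\ref{ass:sec4_grid_stability}. We need the same control for the cross term and for the \(\Delta\)-energy. The natural route is to read the quadrature condition as uniform over the bounded, Hölder-continuous integrands that arise, namely \(\Delta^2\), \(\Delta\rho\) and \(\rho^2\). The energy lower bound then again makes the ratio Lipschitz, giving \(O(q_M)\).

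I expect the main obstacle to be the quadrature step. As stated, the assumption controls only \(\sum_m w_m\rho^2-1\), so we must either invoke the smoothness in Assumption~\ref{ass:sec4_stage1} to extend \(q_M\) to \(\Delta^2\) and \(\Delta\rho\), or treat this as part of the quadrature condition. A secondary subtlety is the zero-energy convention, which makes the loss discontinuous. We handle it by restricting to the nonzero-energy region, as Assumption~\ref{ass:sec4_grid_stability} permits, and showing that the pilot energies do not cross \(\underline v/2\) with probability tending to one.
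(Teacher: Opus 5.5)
Your decomposition matches the paper's proof: pilot error (Lipschitz closed-form profiler on the high-probability energy event), a bounded empirical process over the finite library and $p_n$-dimensional shape nets, and a quadrature term, with the penalties cancelling. You simply flesh out steps the paper states in one line each. Your caveat on the quadrature step is apt: Assumption~\ref{ass:sec4_grid_stability} only controls the grid energy of $\rho$, yet the paper just asserts that the finite-grid approximation contributes $q_M$. The paper therefore implicitly reads the quadrature condition as also covering the $\Delta^2$ and $\Delta\rho$ integrands, which is exactly the strengthening you propose.
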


\begin{proof}
The difference decomposes into pilot error, sampling error, and quadrature error.
By Lemma~\ref{lem:sec4_profiled}, replacing population contrasts by pilot
contrasts changes the profiled normalized loss by
\(O_p(\varepsilon_n^{\mathrm{pilot}})\) uniformly. Conditional on the pilots being
replaced by population contrasts, the criterion is a bounded empirical process
over a finite representation library and \(p_n\)-dimensional Lipschitz shape
sieves, giving
\[
O_p\!\left(
\sqrt{\frac{\log|\mathfrak U_n|+p_n\log n}{n_{\mathrm{sel}}}}
\right).
\]
The finite-grid approximation contributes \(q_M\). The deterministic penalties
are identical in empirical and population criteria, so they do not affect the
supremum difference.
\end{proof}

\begin{proof}[Proof of Theorem~\ref{thm:main_learnability}]
Let \(\hat\theta=(\hat k,\hat u,\hat\rho)\) and let
\(\theta_n^\star\in\arg\min_{\theta\in\Theta_n}\mathcal L(\theta)\). Empirical
optimality gives
\(\hat{\mathcal L}^{\mathrm{sel}}(\hat\theta)
\le
\hat{\mathcal L}^{\mathrm{sel}}(\theta_n^\star)\). Hence
\[
\mathcal L(\hat\theta)-\mathcal L(\theta_n^\star)
\le
2\sup_{\theta\in\Theta_n}
|\hat{\mathcal L}^{\mathrm{sel}}(\theta)-\mathcal L(\theta)|.
\]
Lemma~\ref{lem:sec4_uniform_empirical} yields the claimed \(O_p(r_n)\) bound.
\end{proof}

\begin{corollary}[Link selection consistency]
\label{cor:main_morphology_consistency}
Let \(V_n(k):=\inf_{u,\rho}\mathcal L_k(u,\rho)\), where
\(u\in\mathfrak U_n\) and \(\rho\in\mathfrak R_{k,n}(u)\). Suppose \(k^\star\)
is the unique minimizer and
\(\gamma_n:=\min_{k\neq k^\star}\{V_n(k)-V_n(k^\star)\}\). If
\(r_n=o(\gamma_n)\), then \(\mathbb P(\hat k=k^\star)\to1\).
\end{corollary}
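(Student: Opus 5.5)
The approach is the standard argmin-with-a-gap argument applied at the level of the link index. I would combine the uniform criterion bound from the proof of Theorem~\ref{thm:main_learnability} with the separation $\gamma_n$ between the population link values. Write $\hat V_n(k):=\inf_{u\in\mathfrak U_n,\rho\in\mathfrak R_{k,n}(u)}\hat{\mathcal L}^{\mathrm{sel}}_k(u,\rho)$ for the empirical profiled value of link $k$. By construction $\hat k\in\arg\min_k \hat V_n(k)$.

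First, I would transfer the uniform approximation to the profiled link values. Lemma~\ref{lem:sec4_uniform_empirical} gives $D_n:=\sup_{\theta\in\Theta_n}|\hat{\mathcal L}^{\mathrm{sel}}(\theta)-\mathcal L(\theta)|=O_p(r_n)$. Taking infima over $(u,\rho)$ for fixed $k$ is $1$-Lipschitz in sup-norm, since $|\inf f-\inf g|\le\sup|f-g|$. Hence $\max_{k\in\mathcal K}|\hat V_n(k)-V_n(k)|\le D_n$, with no extra cost because $\mathcal K$ is finite and the bound is already uniform over $\Theta_n$.

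Second, I would run the gap argument on the event $\{\hat k=k\}$ for some $k\ne k^\star$. On this event $\hat V_n(k)\le\hat V_n(k^\star)$, so
\[
V_n(k)-V_n(k^\star)\le \{V_n(k)-\hat V_n(k)\}+\{\hat V_n(k^\star)-V_n(k^\star)\}\le 2D_n .
\]
Since $V_n(k)-V_n(k^\star)\ge\gamma_n$, this gives $\{\hat k\ne k^\star\}\subseteq\{D_n\ge\gamma_n/2\}$.

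Third, I would conclude from the rate condition. Because $D_n=O_p(r_n)$ and $r_n=o(\gamma_n)$, we have $D_n/\gamma_n\to0$ in probability. Indeed, for any $M$, $\mathbb P(D_n\ge\gamma_n/2)\le\mathbb P(D_n>Mr_n)$ for $n$ large enough that $Mr_n<\gamma_n/2$, and the right-hand side can be made arbitrarily small by the choice of $M$. Therefore $\mathbb P(\hat k\ne k^\star)\to0$.

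The main point needing care is that $\gamma_n$ may depend on $n$ and even vanish. The argument must therefore use the $O_p$ statement in its tight form, with $r_n$ a deterministic sequence, rather than $D_n\to0$ alone. A second point is that the infimum defining $V_n(k)$ need not be attained. The Lipschitz property of the infimum handles this without requiring minimizers, and ties within $\hat k$'s argmin are harmless because the inclusion above covers any selector.
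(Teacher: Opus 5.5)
Your proof is correct and uses the same argmin-with-a-gap argument as the paper. The paper runs the argument one level up: on \(\{\hat k\neq k^\star\}\) it notes \(\mathcal L(\hat k,\hat u,\hat\rho)-\inf_{\theta\in\Theta_n}\mathcal L(\theta)\ge V_n(\hat k)-V_n(k^\star)\ge\gamma_n\) and then invokes the \(O_p(r_n)\) oracle inequality of Theorem~\ref{thm:main_learnability} directly. You instead pass through the profiled link values and the uniform deviation \(D_n\) of Lemma~\ref{lem:sec4_uniform_empirical}, which is the same \(2D_n\) step underlying the theorem, and you also spell out the \(O_p(r_n)\) versus \(o(\gamma_n)\) step that the paper leaves implicit.
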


\begin{proof}
If \(\hat k\neq k^\star\), then
\(\mathcal L(\hat k,\hat u,\hat\rho)-\inf_{\theta\in\Theta_n}\mathcal L(\theta)
\ge \gamma_n\). Theorem~\ref{thm:main_learnability} bounds the left-hand side by
\(O_p(r_n)\). Therefore \(r_n=o(\gamma_n)\) implies
\(\mathbb P(\hat k\neq k^\star)\to0\).
\end{proof}

\subsection{Scope of the finite-library guarantee}
\label{app:sec4_scope}

Theorem~\ref{thm:main_learnability} is a finite-library, criterion-level result.
It should be read as follows. The population object is the restricted oracle criterion from Section~\ref{sec:stable_reduction}. The statistical statement controls the excess of this criterion over the admissible library \(\Theta_n\). It does not claim identification of a unique latent representation, a globally stable identity response surface, or convergence of an unrestricted nonconvex neural representation learner. In implementations with flexible encoders, the theorem applies to the selected finite candidate library
or to a sieve approximation of that library; additional optimization error can
be added to \(r_n\) in the usual way.

\subsection{Decision consequences}
\label{app:sec4_policy}

The oracle inequality controls a geometry criterion. We next state sufficient
conditions under which this criterion control yields policy regret. The result is
stated for the finite decision grid used by the estimator. Continuous-action
extensions require the corresponding continuous-action margin or curvature
condition.

Let \(\mathcal A_{\mathrm{dec}}\subset\mathcal A_\star\) be the finite decision
grid. Let \(\mathcal L_n^\star:=\inf_{\theta\in\Theta_n}\mathcal L(\theta)\) and
\(\Theta_n^\star:=\arg\min_{\theta\in\Theta_n}\mathcal L(\theta)\). For
\(\theta=(k,u,\rho)\) and \(\theta'=(k',u',\rho')\), define
\[
d_\infty(\theta,\theta')
:=
\operatorname*{ess\,sup}_{h}
\max_{a\in\mathcal A_{\mathrm{dec}}}
|\rho(u(h),a)-\rho'(u'(h),a)|,
\quad
d_\infty(\theta,\Theta_n^\star)
:=
\inf_{\theta'\in\Theta_n^\star}d_\infty(\theta,\theta').
\]

\begin{assumption}[Criterion-to-geometry modulus]
\label{ass:criterion_to_geometry}
There exists a nondecreasing deterministic modulus \(\Psi_n\), with
\(\Psi_n(\delta)\to0\) as \(\delta\downarrow0\), such that
\[
d_\infty(\theta,\Theta_n^\star)
\le
\Psi_n\{\mathcal L(\theta)-\mathcal L_n^\star\}
\qquad
\forall \theta\in\Theta_n .
\]
\end{assumption}

\begin{proposition}[Geometry recovery from criterion learnability]
\label{prop:geometry_from_oracle}
Under Theorem~\ref{thm:main_learnability} and
Assumption~\ref{ass:criterion_to_geometry},
\[
d_\infty\{(\hat k,\hat u,\hat\rho),\Theta_n^\star\}
=
O_p\{\Psi_n(r_n)\}.
\]
\end{proposition}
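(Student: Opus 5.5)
The plan is to compose the criterion-level rate from Theorem~\ref{thm:main_learnability} with the deterministic modulus in Assumption~\ref{ass:criterion_to_geometry}. Write \(\hat\theta=(\hat k,\hat u,\hat\rho)\in\Theta_n\). Since the selector ranges over the same admissible library \(\Theta_n\) on which the modulus is assumed, Assumption~\ref{ass:criterion_to_geometry} applies pointwise at \(\theta=\hat\theta\):
\[
d_\infty(\hat\theta,\Theta_n^\star)\le \Psi_n\{\mathcal L(\hat\theta)-\mathcal L_n^\star\}.
\]
This holds for every realization, because the inequality is deterministic and uniform over \(\Theta_n\).

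The second step is to convert the \(O_p(r_n)\) statement of~\eqref{eq:finite_library_oracle} into a statement about \(\Psi_n\). Fix \(\epsilon>0\). By Theorem~\ref{thm:main_learnability} there are \(C_\epsilon\) and \(n_\epsilon\) such that, for \(n\ge n_\epsilon\), the event
\[
E_n:=\{0\le \mathcal L(\hat\theta)-\mathcal L_n^\star\le C_\epsilon r_n\}
\]
has probability at least \(1-\epsilon\). The lower bound is automatic from the definition of \(\mathcal L_n^\star\). On \(E_n\), monotonicity of \(\Psi_n\) gives
\[
d_\infty(\hat\theta,\Theta_n^\star)\le \Psi_n(C_\epsilon r_n).
\]

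The main obstacle is replacing \(\Psi_n(C_\epsilon r_n)\) by a constant multiple of \(\Psi_n(r_n)\). The statement is in terms of \(\Psi_n(r_n)\), but a generic nondecreasing modulus need not satisfy \(\Psi_n(C\delta)\lesssim\Psi_n(\delta)\). I would try to handle this in two ways. First, I would impose, or read into the assumption, a mild doubling condition \(\Psi_n(C\delta)\le c(C)\Psi_n(\delta)\) uniformly in \(n\). This holds for the typical power-type moduli \(\Psi_n(\delta)=c\,\delta^{\beta}\) arising from curvature or margin conditions, and it yields \(d_\infty\le c(C_\epsilon)\Psi_n(r_n)\) on \(E_n\), which is exactly \(O_p\{\Psi_n(r_n)\}\). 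Second, as a fallback, I would interpret \(r_n\) as the deterministic rate sequence that already absorbs the stochastic constant, as is implicit in how \(\varepsilon_n^{\mathrm{pilot}}\) was defined. Under this reading the claim states \(d_\infty=O_p\{\Psi_n(C r_n)\}\) for some constant \(C\), which is the same order. Under either reading, \(\Psi_n(\delta)\to0\) as \(\delta\downarrow0\) together with \(r_n\to0\) gives consistency of the selected geometry toward the oracle set.
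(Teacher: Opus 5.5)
Your argument is the paper's: apply the deterministic modulus of Assumption~\ref{ass:criterion_to_geometry} at $\hat\theta\in\Theta_n$ and insert the $O_p(r_n)$ excess-criterion bound of Theorem~\ref{thm:main_learnability}; the paper does this in one line. You are right that passing from $\Psi_n(C_\epsilon r_n)$ to $O_p\{\Psi_n(r_n)\}$ needs a doubling-type property of $\Psi_n$, which the paper uses silently. Your fallback reading only gives $d_\infty\le\Psi_n\{O_p(r_n)\}$, and calling that ``the same order'' assumes that same property, so your first fix (imposing the doubling condition) is the one that actually closes the step.
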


\begin{proof}
Let \(\hat\theta=(\hat k,\hat u,\hat\rho)\). By
Theorem~\ref{thm:main_learnability},
\(\mathcal L(\hat\theta)-\mathcal L_n^\star=O_p(r_n)\). Applying
Assumption~\ref{ass:criterion_to_geometry} gives the result.
\end{proof}

Fix an oracle representative
\(\theta_n^\star=(k_n^\star,u_n^\star,\rho_n^\star)\in\Theta_n^\star\), and let
\(a_n^\star(h)\in\arg\max_{a\in\mathcal A_{\mathrm{dec}}}
\rho_n^\star(u_n^\star(h),a)\).

\begin{assumption}[Deployment bridge and margin]
\label{ass:sec4_policy_bridge}
For every deployment period \(t\in\mathcal T_{\mathrm{dep}}\), assume
\[
\Delta_{k_n^\star,t}(h,a)
=
s_t(h)\rho_n^\star(u_n^\star(h),a)+\xi_t(h,a),
\qquad
0<s_t(h)\le \overline s ,
\]
for all \(a\in\mathcal A_{\mathrm{dec}}\). Also assume a linked-to-raw comparison:
for some \(C_{\mathrm{lr}}<\infty\),
\[
m_t(h,a_t^{\mathrm{opt}}(h))-m_t(h,a)
\le
C_{\mathrm{lr}}
\{\Delta_{k_n^\star,t}(h,a_t^{\mathrm{opt}}(h))
-\Delta_{k_n^\star,t}(h,a)\}.
\]
Finally, define the grid margin
\[
\Gamma_n(h)
:=
\rho_n^\star(u_n^\star(h),a_n^\star(h))
-
\max_{a\in\mathcal A_{\mathrm{dec}},\,a\neq a_n^\star(h)}
\rho_n^\star(u_n^\star(h),a),
\]
and assume
\(\sup_{t\in\mathcal T_{\mathrm{dep}}}
\mathbb P\{\Gamma_n(H_t)\le\epsilon\mid\tau=t\}\le C\epsilon^\kappa\).
\end{assumption}

For the identity link, one may take \(C_{\mathrm{lr}}=1\). For the log link, the
linked-to-raw comparison follows from a uniform upper bound on \(m_t\) over the
deployment region.

\begin{corollary}[Deployment regret from finite-library geometry error]
\label{cor:main_policy_regret}
Suppose Proposition~\ref{prop:geometry_from_oracle} and
Assumption~\ref{ass:sec4_policy_bridge} hold. Define
\[
\mathrm{Pers}_t
:=
\mathbb E_t\!\left[
\max_{a\in\mathcal A_{\mathrm{dec}}}|\xi_t(H_t,a)|
\right].
\]
Then, for each \(t\in\mathcal T_{\mathrm{dep}}\),
\[
\mathrm{Regret}_t(\hat a)
:=
\mathbb E_t[
m_t(H_t,a_t^{\mathrm{opt}}(H_t))
-
m_t(H_t,\hat a(H_t))
]
=
O_p\!\left(\Psi_n(r_n)^{\kappa+1}\right)
+
O(\mathrm{Pers}_t).
\]
\end{corollary}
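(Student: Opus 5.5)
The plan is to reduce raw-scale regret to linked-scale regret, and then split the linked regret into a persistence term and a geometry-estimation term; the margin condition converts the latter into the power $\kappa+1$.

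\textbf{Step 1: reduce to the linked scale.} Condition on the pilot and selection samples, so that $\hat\theta=(\hat k,\hat u,\hat\rho)$ and hence $\hat a$ are fixed functions, independent of a fresh deployment draw $H_t$. Write $\hat d:=d_\infty(\hat\theta,\Theta_n^\star)$. For the argument I take the oracle representative $\theta_n^\star$ to be one that (nearly) attains this infimum, so that
\[
\operatorname*{ess\,sup}_h\max_{a\in\mathcal A_{\mathrm{dec}}}|\hat\rho(\hat u(h),a)-\rho_n^\star(u_n^\star(h),a)|\le 2\hat d .
\]
If the margin and bridge in Assumption~\ref{ass:sec4_policy_bridge} are meant only for one fixed representative, I will need them to hold uniformly over $\Theta_n^\star$; this is the point I would flag. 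By the linked-to-raw comparison, pointwise in $h$,
\[
m_t(h,a_t^{\mathrm{opt}})-m_t(h,\hat a)\le C_{\mathrm{lr}}\{\Delta_{k_n^\star,t}(h,a_t^{\mathrm{opt}})-\Delta_{k_n^\star,t}(h,\hat a)\}.
\]

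\textbf{Step 2: split the linked regret.} Add and subtract $\Delta_{k_n^\star,t}(h,a_n^\star(h))$. This gives two differences:
\[
\text{(I)}=\Delta_{k_n^\star,t}(h,a_t^{\mathrm{opt}})-\Delta_{k_n^\star,t}(h,a_n^\star),\qquad
\text{(II)}=\Delta_{k_n^\star,t}(h,a_n^\star)-\Delta_{k_n^\star,t}(h,\hat a).
\]
Term (I) is handled by the argument of Theorem~\ref{thm:approximate_deployability_main}, repeated on the grid $\mathcal A_{\mathrm{dec}}$ with the bridge decomposition. It gives $\text{(I)}\le 2\max_a|\xi_t(h,a)|$, and taking $\mathbb E_t$ yields $O(\mathrm{Pers}_t)$. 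For term (II), substitute the bridge decomposition:
\[
\text{(II)}=s_t(h)\{\rho_n^\star(u_n^\star(h),a_n^\star)-\rho_n^\star(u_n^\star(h),\hat a)\}+\xi_t(h,a_n^\star)-\xi_t(h,\hat a).
\]
The $\xi$ part again contributes at most $2\max_a|\xi_t|$, which is absorbed into $O(\mathrm{Pers}_t)$.

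\textbf{Step 3: margin argument, the main step.} Let $D(h):=\rho_n^\star(u_n^\star(h),a_n^\star)-\rho_n^\star(u_n^\star(h),\hat a)\ge 0$. If $\hat a(h)=a_n^\star(h)$, then $D(h)=0$. Otherwise, $\hat a$ maximizes $\hat\rho(\hat u(h),\cdot)$, and the sup-distance bound from Step 1 gives $D(h)\le 4\hat d$. On the other hand, the definition of $\Gamma_n$ gives $D(h)\ge\Gamma_n(h)$. Hence
\[
s_t(h)D(h)\le 4\overline s\,\hat d\,\mathbf 1\{\Gamma_n(h)\le 4\hat d\}.
\]
Taking $\mathbb E_t$ with $\hat d$ fixed by the conditioning, the margin condition gives
\[
\mathbb E_t[s_tD]\le 4\overline s\,C\,(4\hat d)^{\kappa}\hat d=O(\hat d^{\kappa+1}).
\]
Finally, Proposition~\ref{prop:geometry_from_oracle} gives $\hat d=O_p(\Psi_n(r_n))$, and the map $x\mapsto x^{\kappa+1}$ is monotone, so this term is $O_p(\Psi_n(r_n)^{\kappa+1})$. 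Multiplying the bounds from Steps 2 and 3 by $C_{\mathrm{lr}}$ completes the claimed bound.

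The delicate points are in the last step. One is the representative issue noted in Step 1. The other is keeping the conditioning clean: the deployment expectation must be over $H_t$ independent of the data that produced $\hat\theta$, so that the margin probability is applied at the deterministic level $4\hat d$ before passing to $O_p$.
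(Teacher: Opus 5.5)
Your argument is correct and is essentially the paper's proof: the same uniform-perturbation bound on the grid maximizer, the bridge with $s_t\le\overline s$, the margin condition applied at a level that is deterministic given the sample, and the linked-to-raw transfer. Your explicit split through $a_n^\star$ only inflates constants ($4\max_a|\xi_t(h,a)|$ and $4\hat d$ where the paper has $2\max_a|\xi_t(h,a)|$ and $2\delta_n$).

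The representative issue you flag is real, and the paper's proof glosses over it. It defines $\delta_n$ relative to a fixed $\theta_n^\star$ and cites Proposition~\ref{prop:geometry_from_oracle}, although that result only controls the distance to the set $\Theta_n^\star$. Either version becomes rigorous under your uniformity caveat, with $\mathrm{Pers}_t$ then read as a worst case over $\Theta_n^\star$. Alternatively, it suffices to assume that all elements of $\Theta_n^\star$ induce the same geometry on $\mathcal A_{\mathrm{dec}}$.
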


\begin{proof}
Let
\[
\delta_n
:=
\operatorname*{ess\,sup}_{h}
\max_{a\in\mathcal A_{\mathrm{dec}}}
|\hat\rho(\hat u(h),a)-\rho_n^\star(u_n^\star(h),a)|.
\]
By Proposition~\ref{prop:geometry_from_oracle},
\(\delta_n=O_p\{\Psi_n(r_n)\}\). On the event \(\{\delta_n\le\delta\}\), if
\(\hat a(h)\neq a_n^\star(h)\), then \(\Gamma_n(h)\le2\delta\); otherwise the
uniform perturbation could not change the maximizer. Hence
\[
0\le
\rho_n^\star(u_n^\star(h),a_n^\star(h))
-
\rho_n^\star(u_n^\star(h),\hat a(h))
\le
2\delta\,\mathbf 1\{\Gamma_n(h)\le2\delta\}.
\]
Using the deployment bridge and \(s_t(h)\le\overline s\),
\[
\Delta_{k_n^\star,t}(h,a_t^{\mathrm{opt}}(h))
-
\Delta_{k_n^\star,t}(h,\hat a(h))
\le
2\overline s\,\delta\,\mathbf 1\{\Gamma_n(h)\le2\delta\}
+
2\max_{a\in\mathcal A_{\mathrm{dec}}}|\xi_t(h,a)|.
\]
Taking conditional expectation and applying the margin condition gives
\(O(\delta^{\kappa+1})+O(\mathrm{Pers}_t)\) for linked-response regret. The
linked-to-raw comparison transfers the bound to raw regret. Substituting
\(\delta=\delta_n\) completes the proof.
\end{proof}

\newpage
\section{Implementation Algorithm}
\label{app:Implement}

\begin{algorithm}[htp]
\caption{Selection and optional honest refit for deployable action geometry}
\label{alg:compact_two_stage}
\begin{algorithmic}[1]
\REQUIRE Data $\mathcal D=\{(T_i,H_i,A_i,Y_i)\}_{i=1}^n$, link set $\mathcal K$, representation library $\mathfrak U_n$, normalized shape classes $\{\mathfrak R_{k,n}(u)\}$, kernels $(K_b,L_\lambda)$, grid $\{(a^{(m)},w_m)\}_{m=1}^M$, penalties, truncation $\tau_n$.
\ENSURE $(\hat k,\hat u,\hat\rho)$ and rule $\hat a(h)\in\arg\max_{a\in\mathcal A_\star}\hat\rho(\hat u(h),a)$.

\STATE Split $\mathcal D$ into $\mathcal I^{\mathrm{pil}}$, $\mathcal I^{\mathrm{sel}}$ (and optionally $\mathcal I^{\mathrm{ref}}$), preserving temporal proportions.
\STATE Fit cross-fitted nuisance estimates $\hat\mu_t,\hat g_t$ on $\mathcal I^{\mathrm{pil}}$.

\FOR{each $u\in\mathfrak U_n$ and $i\in\mathcal I^{\mathrm{sel}}$}
    \STATE Compute $\hat m_{t,\lambda,b}(u,H_i,a)=\hat\nu_{t,\lambda,b}(u,H_i,a)/(\hat\delta_{t,\lambda}(u,H_i)\vee\tau_n)$ on the grid.
    \STATE Define anchored pilots:
    \[
    \hat\Delta_{0,T_i,i}^{(m)}(u)=\hat m(u,H_i,a^{(m)})-\hat m(u,H_i,a_0),
    \]
    \[
    \hat\Delta_{1,T_i,i}^{(m)}(u)=\log(\hat m(u,H_i,a^{(m)})\vee\tau_n)-\log(\hat m(u,H_i,a_0)\vee\tau_n).
    \]
    \STATE For each $k\in\mathcal K$, compute
    \[
    \hat V_{k,i}(u)=\sum_{m=1}^M w_m\bigl(\hat\Delta_{k,T_i,i}^{(m)}(u)\bigr)^2.
    \]
\ENDFOR

\FOR{each $(k,u,\rho)$ with $k\in\mathcal K$, $u\in\mathfrak U_n$, $\rho\in\mathfrak R_{k,n}(u)$}
    \FOR{each $i\in\mathcal I^{\mathrm{sel}}$}
        \STATE Compute
        \[
        \hat s_{k,i}(u,\rho)=\Big[\sum_{m=1}^M w_m \hat\Delta_{k,T_i,i}^{(m)}(u)\rho(u(H_i),a^{(m)})\Big]_+.
        \]
        \STATE Define
        \[
        \hat R_{k,i}(u,\rho)=
        \begin{cases}
        \dfrac{\sum_{m=1}^M w_m\bigl(\hat\Delta_{k,T_i,i}^{(m)}(u)-\hat s_{k,i}(u,\rho)\rho(u(H_i),a^{(m)})\bigr)^2}{\hat V_{k,i}(u)},
        & \hat V_{k,i}(u)>0,\\[1.2ex]
        0, & \hat V_{k,i}(u)=0.
        \end{cases}
        \]
    \ENDFOR
    \STATE Form
    \[
    \hat{\mathcal L}^{\mathrm{sel}}_k(u,\rho)
    =|\mathcal I^{\mathrm{sel}}|^{-1}\sum_{i\in\mathcal I^{\mathrm{sel}}}\hat R_{k,i}(u,\rho)
    +\lambda_{u,n}\Omega_u(u)+\lambda_{\rho,n}\Omega_{k,\rho}(\rho).
    \]
\ENDFOR

\STATE Select
\[
(\hat k,\hat u,\hat\rho^{\mathrm{sel}})
\in
\arg\min_{k,u,\rho}\hat{\mathcal L}^{\mathrm{sel}}_k(u,\rho).
\]

\IF{no honest refit is used}
    \STATE Set $\hat\rho\leftarrow \hat\rho^{\mathrm{sel}}$.
\ELSE
    \STATE Rebuild pilots on $\mathcal I^{\mathrm{ref}}$ only for $(\hat k,\hat u)$ and minimize the corresponding refit criterion over $\rho\in\mathfrak R_{\hat k,n}(\hat u)$.
\ENDIF

\STATE Return $\hat a(h)\in\arg\max_{a\in\mathcal A_\star}\hat\rho(\hat u(h),a)$.
\end{algorithmic}
\end{algorithm}

\section{Additional Synthetic Experiments}
\label{app:sim_additional}

This appendix provides the simulation design and additional diagnostics supporting Section~\ref{sec:Synthetic_Experiments}. Table~\ref{tab:sim_roadmap} summarizes the role of each synthetic experiment. We use \(R1\) and \(R2\) as exact regimes, and \(R3\) and \(R4\) as boundary regimes.

\begin{table}[h]
\centering
\small
\caption{Synthetic experiment roadmap.}
\label{tab:sim_roadmap}
\setlength{\tabcolsep}{4.5pt}
\begin{tabular}{llll}
\toprule
Experiment & Regimes & Purpose & Reported evidence \\
\midrule
Link recovery & \(R1,R2\) & recover identity/log links & Figure~\ref{fig:synthetic_main}(a,b), Tab.~\ref{tab:exact_main} \\
Future deployment & \(R1,R2\) & improve future decisions & Figure~\ref{fig:synthetic_main}(c,d), Tab.~\ref{tab:future_main} \\
Orthogonal ablation & \(R1,R2\) & nuisance robustness & Tab.~\ref{tab:orthogonal_misspec_appendix} \\
Approximate drift & \(R3\) & persistence violation & Fig.~\ref{fig:sim_boundary} \\
No stable geometry & \(R4\) & structural failure & Fig.~\ref{fig:sim_boundary} \\
Overlap stress & \(R1,R2\) & support degradation & Fig.~\ref{fig:sim_overlap} \\
Grid sensitivity & \(R1,R2\) & action discretization robustness & Fig.~\ref{fig:sim_grid_anchor} \\
Anchor sensitivity & \(R1,R2\) & reference-action robustness & Fig.~\ref{fig:sim_grid_anchor} \\
End-to-end ablation & \(R1,R2\) & selection versus refitting & Tab.~\ref{tab:sim_e2e_orthogonal} \\
Persistence magnitude & \(R3\) & structural deployment loss & Tab.~\ref{tab:sim_persistence_scale} \\
Link-selection stress & Transport shift & response-scale selection & Tab.~\ref{tab:sim_link_stress} \\
\bottomrule
\end{tabular}
\end{table}

\subsection{Data-generating process}
\label{app:sim_dgp}

We simulate \(n\) units over \(T\) periods. Each unit has latent state \(S_{i,t}=(U_{i,t},Z_{i,t})\), where \(U_{i,t}\) determines the stable action-sensitive component and \(Z_{i,t}\) drives temporal drift and treatment assignment. The observed history is a nonlinear, time-varying transformation \[H_{i,t}=\phi_t(U_{i,t},Z_{i,t},Y_{i,t-1},A_{i,t-1},\varepsilon^H_{i,t}),\] so the deployable representation is not directly observed. Latent states follow first-order dynamics
\[
U_{i,t+1}=0.82U_{i,t}+\xi^U_{i,t+1},
\qquad
Z_{i,t+1}=0.72Z_{i,t}+\Lambda_A A_{i,t}+\xi^Z_{i,t+1}.
\]
Treatment assignment is observational and time-varying. Unless stated otherwise,
\[
A_{i,t}\mid H_{i,t}\sim
\mathrm{Beta}\!\left(\kappa\mu_{i,t},\kappa(1-\mu_{i,t})\right),
\]
where \(\mu_{i,t}\) depends on latent coordinates, a compressed lagged outcome, and period index. The global confounding-strength parameter multiplies the assignment signal. In \(R1\), we use a bridge-style assignment rule that makes future action ranking more challenging while preserving the raw-scale stable geometry.

Let \(a_0\in(0,1)\) be the anchor and \(d=a-a_0\). The shared action shape is
\[
\rho^\star(u,a)
=
(0.35+0.25\tanh u_1)d
+
(0.18+0.10\tanh u_2)\sin(2\pi a)
+
(-1.10-0.20\tanh(u_1-u_2))d^2,
\]
with \(\rho^\star(u,a_0)=0\). The four regimes are
\[
\begin{array}{ll}
R1: & m_t(h,a)=B_t^{(1)}(h)+\rho^\star(u^\star(h),a),\\[2pt]
R2: & m_t(h,a)=D_t(h)\exp\{\rho^\star(u^\star(h),a)\},\\[2pt]
R3: & m_t(h,a)=B_t^{\rm add}(h)+\rho^\star(u^\star(h),a)
      +\epsilon_{\rm drift}q_t(u^\star(h),a),\\[2pt]
R4: & m_t(h,a)=3.0+B_t^{\rm add}(h)+\rho^\star(u^\star(h),a)
      +\Delta_t\widetilde q_t(u^\star(h),a).
\end{array}
\]
Thus \(R1\) has an exact additive reduction, \(R2\) has an exact multiplicative reduction, \(R3\) introduces controlled action-dependent drift, and \(R4\) breaks stable geometry through strong time-varying shape drift.
For \(R1,R3,R4\), outcomes are generated as clipped additive Gaussian outcomes. For \(R2\), noise is added on the log scale, so the observation model is log-normal. All oracle actions and regrets are evaluated on a dense grid.

\subsection{Candidate library and evaluation metrics}
\label{app:sim_protocol}

The learner selects over a finite representation library
\(\mathcal U_n=\{u^{(1)},u^{(2)},u^{(3)},u^{(4)}\}\):
\[
\begin{array}{lll}
u^{(1)}(H)=U, & \text{oracle representation},\\
u^{(2)}(H)=U+0.4\eta(H), & \text{deterministically perturbed oracle},\\
u^{(3)}(H)=U_1, & \text{under-specified representation},\\
u^{(4)}(H)=f_{\rm rand}(H), & \text{fixed nonlinear random summary}.
\end{array}
\]
This library tests whether the profiled criterion can recover a deployable
low-dimensional geometry rather than a time-entangled summary. The primary metrics are future-period stable IMSE (SIMSE) and regret. SIMSE measures recovery of the stable target on the future split; regret compares the oracle action with the learned action on the evaluation grid. Factual RMSE is reported only as a diagnostic and is not used for model selection.

\subsection{Exact regimes: recovery and deployment}
\label{app:sim_exact}

Table~\ref{tab:exact_main} summarizes the exact-regime sweep. Link recovery is substantially easier than full representation recovery, which is consistent with the paper's target: deployment requires the induced action geometry, not a unique latent representation. Figure~\ref{fig:sim_exact_heatmap} reports link selection accuracy and regret across sample size and confounding strength. Fixed-correct is supplied with the generating link, but still estimates the response shape and profiles its amplitude. Close decision performance relative to this reference indicates low adaptation cost.

\begin{table}[h]
\centering
\small
\caption{
\textbf{Exact-regime recovery summary.} Fixed-correct denotes raw in \(R1\) and log in \(R2\); Fixed-wrong uses the other link. 
}
\label{tab:exact_main}
\setlength{\tabcolsep}{5pt}
\begin{tabular}{lccccc}
\toprule
Regime & LinkAcc & ReprAcc & Ours SIMSE & Fixed-correct SIMSE & Fixed-wrong SIMSE \\
\midrule
\(R1\)
& \(0.786 \pm 0.178\)
& \(0.352 \pm 0.200\)
& \(0.181 \pm 0.019\)
& \(0.160 \pm 0.006\)
& \(0.259 \pm 0.003\) \\
\(R2\)
& \(0.833 \pm 0.148\)
& \(0.195 \pm 0.162\)
& \(0.011 \pm 0.002\)
& \(0.012 \pm 0.002\)
& \(0.0140 \pm 0.004\) \\
\bottomrule
\end{tabular}
\end{table}

\begin{figure}[h]
\centering
\includegraphics[width=0.92\linewidth]{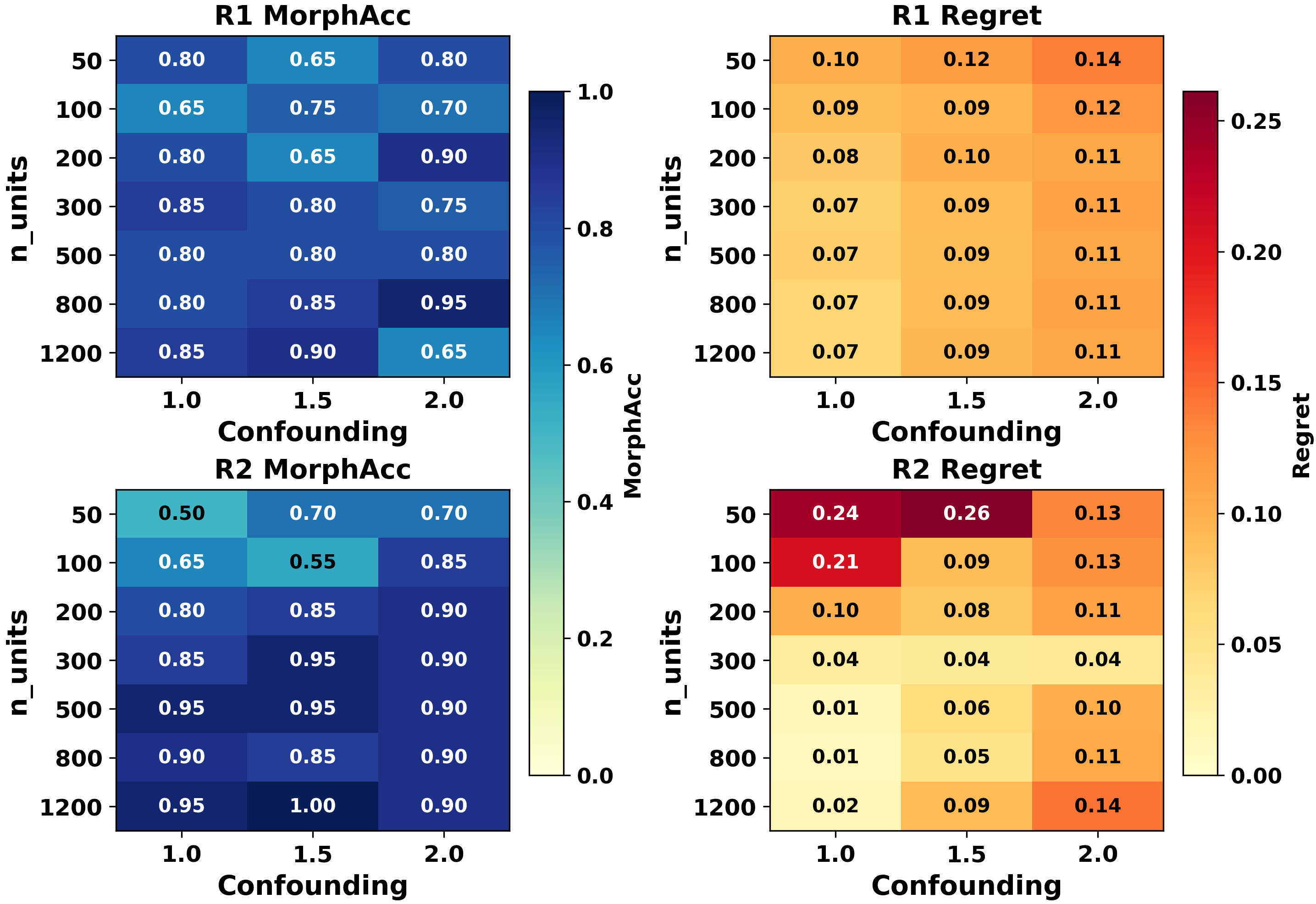}
\caption{
\textbf{Full exact-regime sweep.}
Heatmaps over sample size and confounding strength for \(R1\) and \(R2\). Panels show link selection accuracy and future regret.}
\label{fig:sim_exact_heatmap}
\end{figure}

\subsection{Post-selection and end-to-end orthogonalization}
\label{app:sim_orthogonal}

We freeze the selected link-representation pair and compare the orthogonal refit with direct plug-in under nuisance perturbations. This is a post-selection estimator ablation rather than a second target-selection experiment. The orthogonal estimator consistently yields lower regret and smaller action error than the plug-in estimator. The effect is strongest in the log-linked regime \(R2\): under outcome-side misspecification, the plug-in estimator collapses, whereas the orthogonal estimator remains usable despite increased action error. In \(R1\), SIMSE is similar across estimators, but the orthogonal estimator gives substantially lower regret, showing that predictive curve error and decision quality can diverge.

\begin{table}[h]
\centering
\small
\caption{Post-selection estimator ablation under nuisance misspecification.}
\label{tab:orthogonal_misspec_appendix}
\setlength{\tabcolsep}{5pt}
\begin{tabular}{llccc}
\toprule
Regime / Misspec. & Method & SIMSE \(\downarrow\) & Regret \(\downarrow\) & Action MAE \(\downarrow\) \\
\midrule
\multicolumn{5}{l}{\textbf{R1}} \\
\midrule
none
  & \textsc{Ours}     & \(0.1732 \pm 0.0173\) & \(\mathbf{0.0669 \pm 0.0088}\) & \(\mathbf{0.1205 \pm 0.0072}\) \\
  & \textsc{Plug-in}  & \(\mathbf{0.1697 \pm 0.0163}\) & \(0.2438 \pm 0.0492\) & \(0.2446 \pm 0.0249\) \\
outcome
  & \textsc{Ours}     & \(0.1737 \pm 0.0173\) & \(\mathbf{0.0683 \pm 0.0095}\) & \(\mathbf{0.1231 \pm 0.0075}\) \\
  & \textsc{Plug-in}  & \(\mathbf{0.1689 \pm 0.0174}\) & \(0.1583 \pm 0.0196\) & \(0.1920 \pm 0.0148\) \\
treatment
  & \textsc{Ours}     & \(0.1779 \pm 0.0165\) & \(\mathbf{0.0710 \pm 0.0050}\) & \(\mathbf{0.1250 \pm 0.0046}\) \\
  & \textsc{Plug-in}  & \(\mathbf{0.1697 \pm 0.0163}\) & \(0.2438 \pm 0.0492\) & \(0.2446 \pm 0.0249\) \\
\midrule
\multicolumn{5}{l}{\textbf{R2}} \\
\midrule
none
  & \textsc{Ours}     & \(\mathbf{0.0081 \pm 0.0011}\) & \(\mathbf{0.0154 \pm 0.0091}\) & \(\mathbf{0.0367 \pm 0.0168}\) \\
  & \textsc{Plug-in}  & \(0.0145 \pm 0.0124\) & \(0.3834 \pm 0.0361\) & \(0.4732 \pm 0.0427\) \\
outcome
  & \textsc{Ours}     & \(\mathbf{0.0066 \pm 0.0024}\) & \(\mathbf{0.1326 \pm 0.0560}\) & \(\mathbf{0.2076 \pm 0.0779}\) \\
  & \textsc{Plug-in}  & \(4.8947 \pm 2.3391\) & \(1.2555 \pm 0.0052\) & \(0.4151 \pm 0.0035\) \\
treatment
  & \textsc{Ours}     & \(\mathbf{0.0082 \pm 0.0011}\) & \(\mathbf{0.0163 \pm 0.0101}\) & \(\mathbf{0.0386 \pm 0.0183}\) \\
  & \textsc{Plug-in}  & \(0.0145 \pm 0.0124\) & \(0.3834 \pm 0.0361\) & \(0.4732 \pm 0.0427\) \\
\bottomrule
\end{tabular}
\end{table}

\paragraph{Controlled end-to-end ablation.}
For component attribution, we additionally use a controlled plug-in pilot that omits only the residual-correction term. Localization, smoothing, shape fitting, penalties, and data splits are otherwise held fixed. We independently vary the pilot type at selection and honest refitting: \(O\) denotes orthogonal and \(P\) controlled plug-in, with the first letter indicating selection and the second refitting. Link and representation selection are repeated for each selection-stage pilot type.

\begin{table}[t]
\centering
\small
\setlength{\tabcolsep}{5pt}
\caption{\textbf{End-to-end orthogonalization ablation.}
Mean dense-grid regret under outcome-model misspecification over 30 paired
runs using the full representation library.}
\label{tab:sim_e2e_orthogonal}
\begin{tabular}{@{}lcc@{}}
\toprule
Selection \(\to\) refit & \(R1\) regret \(\downarrow\) & \(R2\) regret \(\downarrow\) \\
\midrule
\(O\to O\) & 0.0754 & 0.1220 \\
\(P\to O\) & 0.0754 & 0.1220 \\
\(O\to P\) & 0.1292 & 0.1444 \\
\(P\to P\) & 0.1292 & 0.1445 \\
\midrule
\(PP-OO\) & 0.0538 [0.0505, 0.0569] & 0.0225 [0.0092, 0.0422] \\
\bottomrule
\end{tabular}
\end{table}

The close regrets of \(OO\) and \(PO\), and of \(OP\) and \(PP\), locate most of the average improvement at orthogonal refitting in these settings. Selection-stage effects are regime-dependent: under moderate \(R2\) outcome-model misspecification, link recovery is 0.57 with orthogonal selection and 0.33 with controlled plug-in selection. Both pilot types select the under-specified representation in the full-library ablation; this comparison does not establish recovery of the generating representation.

\subsection{Boundary regimes: approximate and failed geometry}
\label{app:sim_boundary}

Figure~\ref{fig:sim_boundary} reports the two boundary regimes. In \(R3\), increasing \(\epsilon_{\rm drift}\) adds action-dependent temporal drift to an otherwise stable geometry. Link recovery weakens as the persistence residual grows, which is the expected cost of misspecification. In \(R4\), the action-sensitive shape itself changes over time. This is a structural failure regime; the method should not be expected to uniformly dominate. The purpose of \(R4\) is to show where the stable-geometry target ceases to be appropriate.

\begin{figure}[h]
\centering
\includegraphics[width=0.92\linewidth]{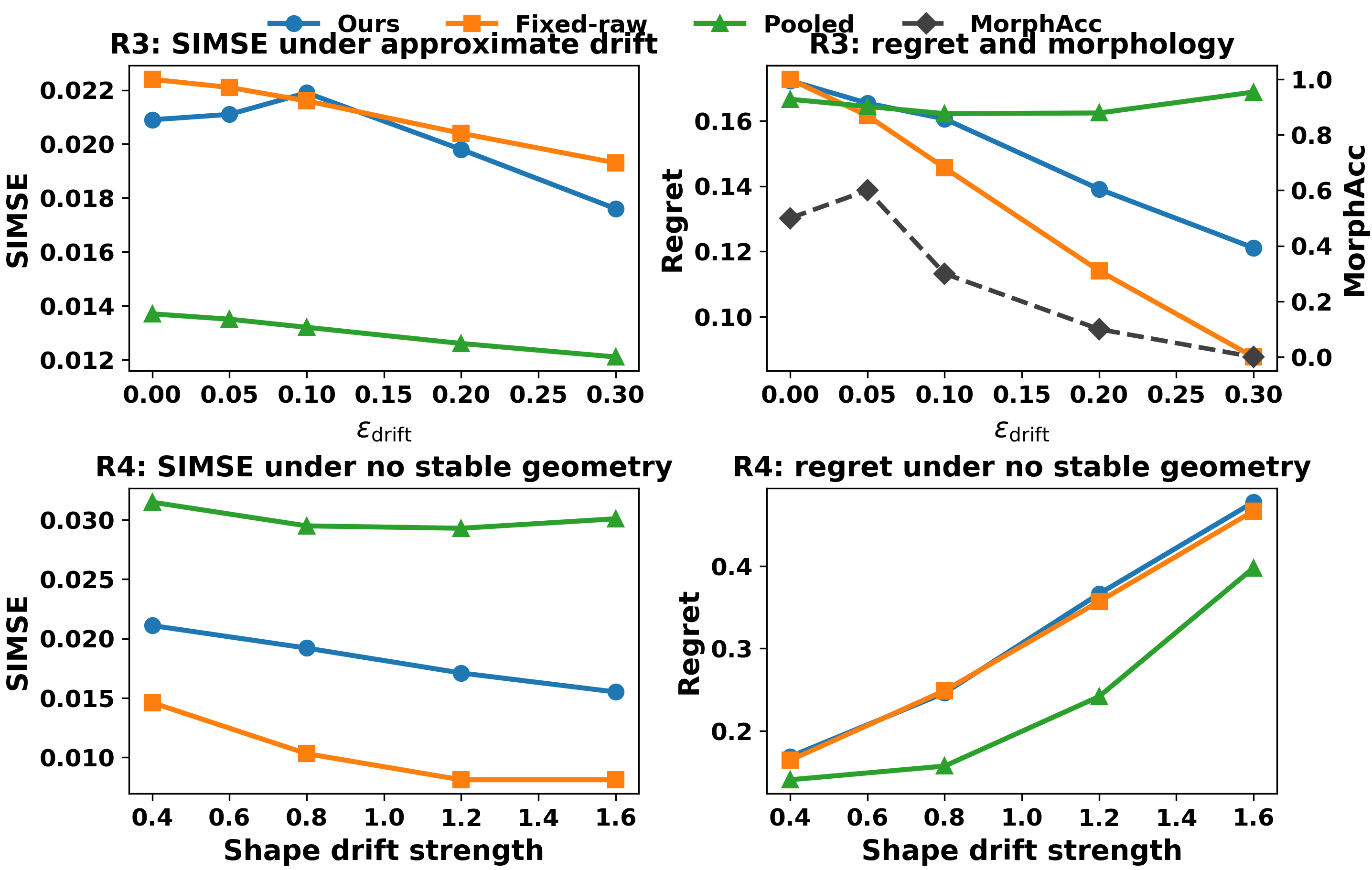}
\caption{
\textbf{Boundary diagnostics.}
Left: approximate persistence violation in \(R3\) as \(\epsilon_{\rm drift}\) increases.
Right: no-stable-geometry stress in \(R4\) as shape drift increases.}
\label{fig:sim_boundary}
\end{figure}

\paragraph{Quantifying persistence violations.}
We additionally distinguish the structural cost of persistence failure from estimation error. For the identity-link \(R3\) sweep, let \(s_t^\dagger(h)\) denote the best positive rescaling of the true shared shape and set \(\xi_t^\dagger(h,a)=\Delta_{0,t}(h,a)-s_t^\dagger(h) \rho^\star(u^\star(h),a)\). The relative violation is
\[
\operatorname{Rel}\xi_\infty
=
\operatorname{median}_{t,h}
\frac{\|\xi_t^\dagger(h,\cdot)\|_\infty}
{\|s_t^\dagger(h)\rho^\star(u^\star(h),\cdot)\|_\infty+\epsilon},
\]
where \(\epsilon>0\) stabilizes the denominator. The oracle-geometry policy maximizes the true shared shape.

\begin{table}[t]
\centering
\small
\setlength{\tabcolsep}{8pt}
\caption{\textbf{Controlled persistence-violation sweep in \(R3\).}
Relative persistence violation and the regret of the oracle-geometry
policy as action-dependent drift increases.}
\label{tab:sim_persistence_scale}
\begin{tabular}{@{}ccc@{}}
\toprule
\(\epsilon_{\mathrm{drift}}\)
& \(\operatorname{Rel}\xi_\infty\)
& Oracle-geometry regret \\
\midrule
0.00 & 0.000 & 0.0000 \\
0.05 & 0.027 & 0.0004 \\
0.10 & 0.055 & 0.0018 \\
0.20 & 0.114 & 0.0095 \\
0.30 & 0.179 & 0.0242 \\
\bottomrule
\end{tabular}
\end{table}

Table~\ref{tab:sim_persistence_scale} shows increasing oracle-geometry regret as the persistence residual grows in this controlled sweep. It measures the cost of transporting the true shared geometry.

\subsection{Overlap stress}
\label{app:sim_overlap}

We stress overlap by increasing assignment severity. This decreases empirical support fraction
and conditional treatment variance. Figure~\ref{fig:sim_overlap} reports support diagnostics
together with SIMSE and regret. Severe support loss harms all estimators; the goal is not to
claim that the method solves positivity violation, but to show predictable degradation and to
contrast it with pooled prediction.

\begin{figure}[h]
\centering
\includegraphics[width=0.8\linewidth]{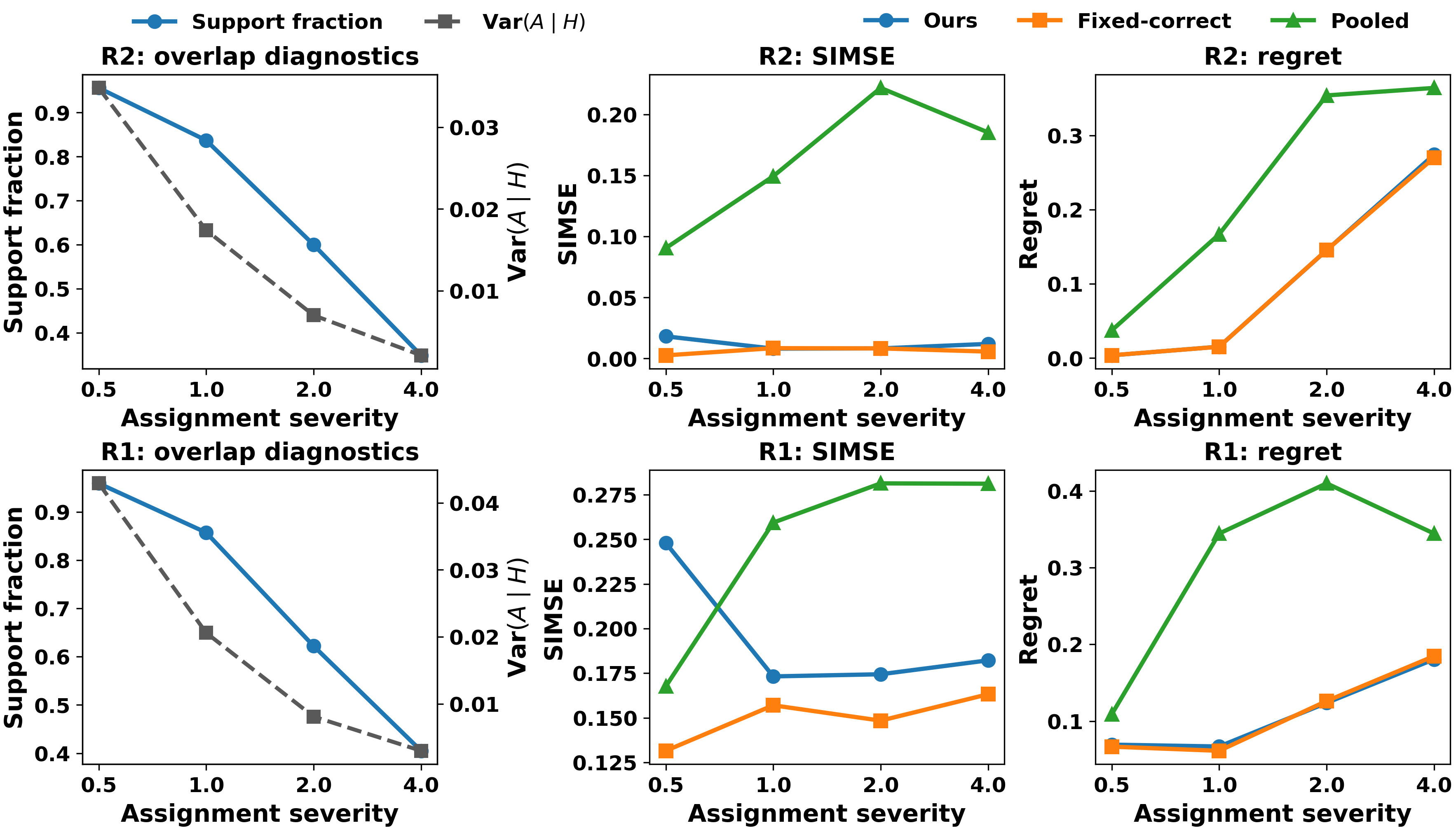}
\caption{Overlap stress.}
\label{fig:sim_overlap}
\end{figure}

\subsection{Grid and anchor sensitivity}
\label{app:sim_grid_anchor}

We further examine whether the conclusions depend on two implementation choices: the action grid used in Stage-II selection and the reference action used for anchoring. For grid sensitivity, the learner selects actions on grids of size \(G\in\{21,51,101,201\}\), while all metrics are evaluated against a fixed 501-point dense grid oracle. This separates optimization discretization from evaluation. For anchor sensitivity, we keep the data-generating process and true response surface fixed, and vary only the reference action used to construct anchored contrasts, \(a_0\in\{0.4,0.5,0.6\}\). Thus raw regret is always computed relative to the same underlying response oracle, while anchored SIMSE may vary with the reference point. Figure~\ref{fig:sim_grid_anchor} reports the results. Grid sensitivity is benign: in both \(R1\) and \(R2\), regret is nearly unchanged once the grid is moderately dense, and the oracle discretization gap is already negligible at the 51-point grid used in the main experiments. Anchor sensitivity is more diagnostic. In \(R1\), decision quality remains stable across the tested anchors. In \(R2\), \(a_0=0.4\) and the default \(a_0=0.5\) both outperform pooled prediction, whereas \(a_0=0.6\) leads to substantially worse decisions even for the fixed-correct log link. This suggests that the anchor should be chosen as a semantically meaningful, high-support reference action rather than treated as an arbitrary tuning parameter.

\begin{figure}[t]
\centering
\includegraphics[width=0.8\linewidth]{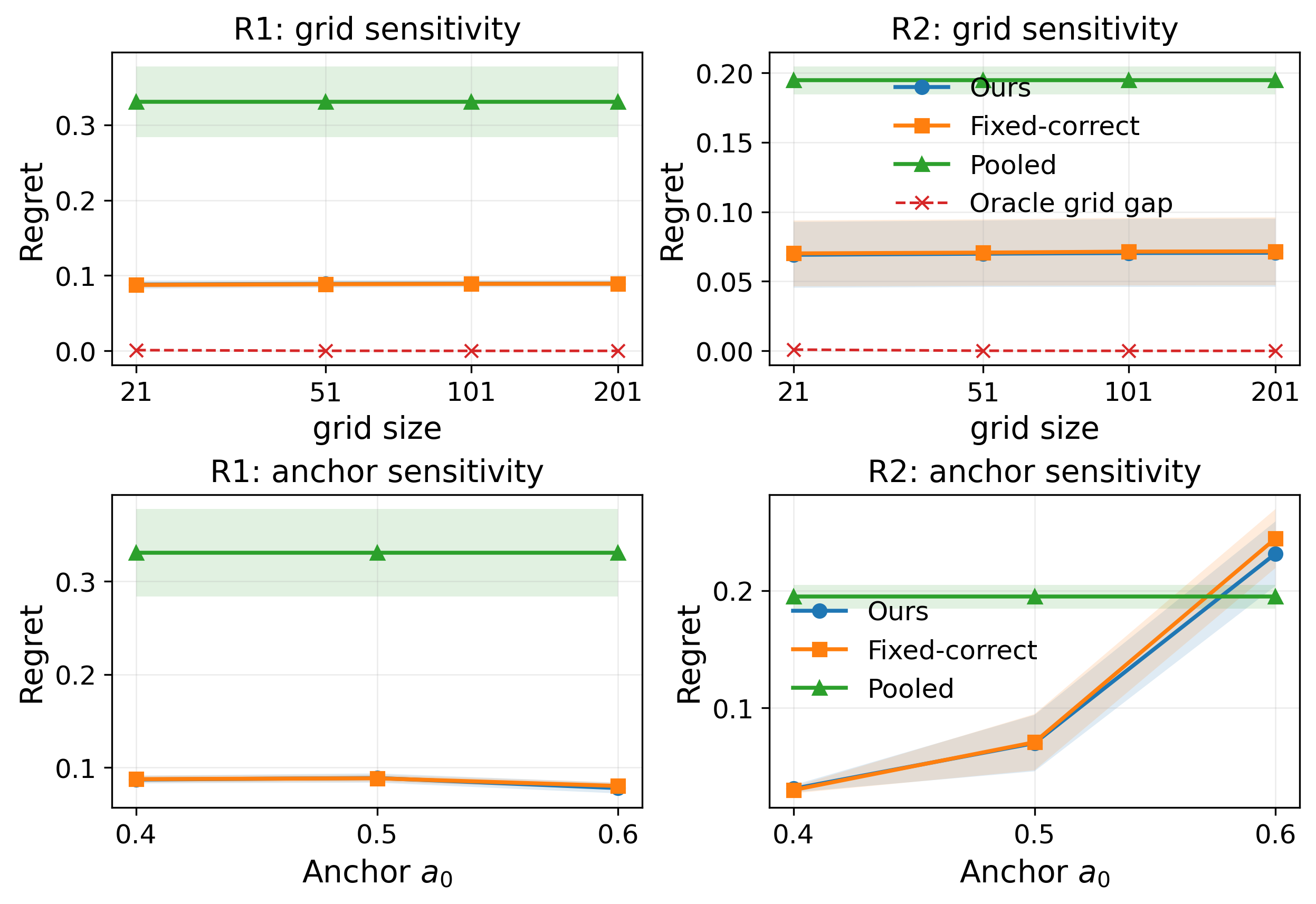}
\caption{
\textbf{Grid and anchor sensitivity.}
Top: sensitivity to the Stage-II action grid. Bottom: sensitivity to the anchor \(a_0\) with the data-generating process and raw-response oracle fixed.}
\label{fig:sim_grid_anchor}
\end{figure}

\subsection{Link selection under transport shift}
\label{app:sim_link_stress}

To examine when selecting the response scale benefits deployment, we consider a supplementary transport-shift setting. The history distribution changes across periods while the generating link admits exact shared geometry, \(\xi_t(h,a)=0\). Under a wrong fixed link, the induced history-dependent curvature lies outside the shared shape class. The given-link reference is supplied with the generating link, not the true response surface or the optimal policy; its amplitude remains profiled as for Ours.

\begin{table}[t]
\centering
\small
\setlength{\tabcolsep}{5pt}
\caption{\textbf{Link selection under transport shift.}
Lower NRegret is better. Paired differences are method minus Ours;
a positive interval excluding zero favors Ours.}
\label{tab:sim_link_stress}
\begin{tabular}{@{}lccc@{}}
\toprule
Method & NRegret & Difference & Paired 95\% CI \\
\midrule
Given-link reference & 0.1384 & \(-0.00008\) & [\(-0.00025\), 0.00008] \\
Ours                 & 0.1385 & ---          & --- \\
Plug-in selector     & 0.1412 & 0.00269      & [0.00160, 0.00378] \\
Fixed identity       & 0.1416 & 0.00309      & [0.00205, 0.00413] \\
Fixed log            & 0.1422 & 0.00368      & [0.00291, 0.00445] \\
\bottomrule
\end{tabular}
\end{table}

Ours has lower mean NRegret than the plug-in selector and both fixed-link alternatives, while the paired interval for the given-link reference includes zero. This complements the exact-regime comparison by examining selection benefits under transport shift, rather than only the cost of adapting to an unknown generating link.

\section{Additional details of the real-world experiment}
\label{app:real_world}

\subsection{Dataset Construction and Evaluation Protocol}
\label{app:dataset}
\noindent\textbf{Data and Action Definition.} The panel spans December 1, 2025 to March 1, 2026, containing $\sim 546.3$M observations across $8.3$M unique items ($\sim 6.0$M daily). The continuous action is the realized effective price normalized by the 14-day median of positive prices, yielding an item-specific anchor $a_0 \approx 1$. The full-sample action distribution is centered near $1$ with a 5th--95th percentile range of $[0.922, 1.095]$.

\noindent\textbf{Support Region and Cohort Filtering.} Analysis is restricted to the empirical common-support region $A^\star = [0.93, 1.08]$ ($[0.90, 1.10]$ for sensitivity), requiring $\geq 10$ positive-price days in the historical window. To control cross-item heterogeneity and ensure sufficient variation, we dynamically retain items satisfying: (i) $\geq 28$ observed days, (ii) $\geq 5$ distinct rounded action bins, (iii) action span $\mathrm{P}90(A) - \mathrm{P}10(A) \geq 0.04$, and (iv) zero-sales rate $\leq 0.10$.

\noindent\textbf{Evaluation Protocol.} We use a rolling-origin chronological setup with six main splits (28-day train, 7-day validation, 7-day test, advanced weekly). An additional split after February 16, 2026, evaluates robustness under a shifted demand regime. After filtering, each split contains $\sim 56$--$58$M training observations ($\sim 2.0$M items) and $12$--$14$M validation/test observations.

\subsection{Metric definition}

We evaluate each method along three complementary axes: factual prediction, future-time deployability, and decision quality.

\paragraph{Factual prediction error.}
For observed item-day samples \((H_i, A_i, Y_i)\) in the evaluation period, we report the RMAE
\begin{equation}
\mathrm{RMAE}= \frac{\sum_{i=1}^n \lvert \hat{y_i} - y_i \rvert }{\sum_{i=1}^{n} y_i},
\end{equation}
where \(\hat m(H,a)\) denotes the learned response surface. RMAE is included as a diagnostic predictive metric, but it is not our primary deployment criterion.

\paragraph{Deployment stability loss.}
To measure whether the learned response geometry remains stable when transported to a future period, we compare the predicted anchored response curve with a future-period anchored target curve on a common action grid \(\mathcal A^\star=\{a_1,\dots,a_K\}\). For a history \(H_i\), define the anchored predicted curve
\begin{equation}
\hat \rho_i(a)
=
\frac{\hat m(H_i,a)}{\hat m(H_i,a_0)},
\qquad a_0=1.
\end{equation}
Let \(\tilde \rho_i^{\,\mathrm{fut}}(a)\) denote the future-period anchored target curve (estimated from held-out future data or its pilot surrogate). We allow a nonnegative scalar recalibration \(\alpha_i \ge 0\) and define the per-sample stability loss as
\begin{equation}
\mathcal L_{\mathrm{stab},i}
=
\min_{\alpha_i \ge 0}
\frac{1}{K}
\sum_{k=1}^K
\Bigl(
\tilde \rho_i^{\,\mathrm{fut}}(a_k)
-
\alpha_i \hat \rho_i(a_k)
\Bigr)^2.
\end{equation}
The reported deployment stability loss is the average over the evaluation set,
\begin{equation}
\mathrm{StabLoss}
=
\frac{1}{n}
\sum_{i=1}^n
\mathcal L_{\mathrm{stab},i}.
\end{equation}
A lower value indicates that the learned shape is more transportable to future environments.

\paragraph{Offline policy value.}
Given a learned surface \(\hat m(H,a)\), the induced pricing policy recommends
\begin{equation}
\hat \pi(H)
=
\arg\max_{a \in \mathcal A^\star}
\hat m(H,a),
\end{equation}
or the corresponding revenue/profit objective if the reward is not sales. We evaluate the policy on future data using a localized doubly robust estimator. Let \(R_i\) denote the observed reward, \(\hat q(H_i,a)\) the nuisance outcome model, \(\hat g(a \mid H_i)\) the estimated generalized propensity score, and \(K_h(\cdot)\) a kernel centered at the recommended action. The value estimator is
\begin{equation}
\widehat V_{\mathrm{DR}}(\hat \pi)
=
\frac{1}{n}
\sum_{i=1}^n
\left[
\hat q\!\left(H_i,\hat \pi(H_i)\right)
+
\frac{
K_h\!\left(A_i-\hat \pi(H_i)\right)
}{
\hat g(A_i \mid H_i)
}
\Bigl(
R_i-\hat q(H_i,A_i)
\Bigr)
\right].
\end{equation}
Higher value indicates a better induced decision rule.

In the real-world study, RMSE measures factual fit, \(\mathrm{StabLoss}\) measures future-time transportability of the learned response geometry, and \(\widehat V_{\mathrm{DR}}\) measures the downstream quality of the induced pricing policy. Our primary emphasis is on the latter two metrics rather than predictive fit alone.

\subsection{Public-data Evaluation on Dominick's}
\label{app:dominicks}

\paragraph{Data and protocol.}
To provide a reproducible complement to the proprietary Taobao study,
we use the 11-brand orange-juice panel distributed with
\texttt{bayesm} 3.1-7 \citep{dominicks}, retaining store--brand--week
records with \texttt{deal=0} and \texttt{feat=0}.
The action is price per ounce divided by its median over the preceding
13 weeks, requiring at least eight observed weeks, with anchor \(a_0=1\).
Observed histories, factual prices, temporal environments, and missingness
are retained; simulated outcomes are not fed back into subsequent histories.
The semi-synthetic evaluation uses four expanding-window folds.
The first uses weeks 53--91 for training, 92--104 for validation, and
105--117 for testing; subsequent folds advance validation and testing
by 13 weeks. The 83 stores are partitioned into disjoint pilot,
selection, and refit sets of 42, 21, and 20 stores, respectively.

The estimator uses brand-based and history-clustered representations
with 11, 22, or 33 cells and a Gaussian action kernel.
A 31-point grid, including \(a_0\), is constructed from the intersection
of training-period 5th--95th percentile action ranges and
\([0.85,1.15]\).
Bandwidth \(b\in\{0.04,0.06,0.08\}\) is selected on the chronological
validation split. We use a representation penalty of \(0.002\) per
log-cell and select the shape-ridge coefficient from
\(\{0,10^{-3},10^{-2}\}\) on the same validation split.
Nuisances are cross-fitted within pilot stores; selection uses validation
data from selection stores, and the selected geometry is refitted using
training and validation data from refit stores.

\paragraph{Semi-synthetic outcome construction.}
We generate only the outcome law, so counterfactual response curves are
known while the observed temporal and treatment-assignment design is
preserved. Let \(u_1(H)\) and \(u_2(H)\) be the standardized lagged
13-week mean sales and one-week-lagged mean competitor price,
respectively. Let \(z_t\) be the standardized weekly market mean
log-sales index from the original panel, where \(t\) denotes calendar
week. Standardization constants and median imputation for missing
competitor prices are fixed using weeks 53--91.
With \(x=x(a)=(a-0.85)/0.30\), \(x_0=0.5\), and \(d=x-x_0\),
the shared anchored shape is
\[
\begin{aligned}
\rho^\star(u,a)
={}&(0.35+0.25\tanh u_1)d \\
&+(0.18+0.10\tanh u_2)
  [\sin(2\pi x)-\sin(2\pi x_0)] \\
&-(1.10+0.20\tanh(u_1-u_2))d^2.
\end{aligned}
\]
Define
\(\eta_t(H)=0.35u_1+0.15u_2+0.60z_t\) and
\(s_t(H)=0.60+1.40\,\sigma(0.80z_t+0.25u_1)\),
where \(\sigma(v)=(1+e^{-v})^{-1}\).
The noiseless response surfaces are
\[
\begin{aligned}
m_t^A(H,a)
&=4.0+1.2\tanh\eta_t(H)
  +s_t(H)\rho^\star(u(H),a), \\
m_t^M(H,a)
&=\exp\!\left\{
  2.5+0.35\tanh\eta_t(H)
  +s_t(H)\rho^\star(u(H),a)
  \right\}, \\
m_t^V(H,a)
&=m_t^M(H,a)
  \exp\!\left\{
  \epsilon_{\mathrm{drift}}q_t(u(H),a)
  \right\},
\end{aligned}
\]
where
\(q_t(u,a)=[\sin(2\pi t/13)+0.5\tanh u_1]d^2\) and
\(\epsilon_{\mathrm{drift}}\in\{0,0.25,0.5,1\}\).
The noiseless surfaces in A and M satisfy exact persistence on the
identity and log scales, respectively; V adds time-varying,
action-dependent curvature on the log scale.

Observed outcomes are generated as
\[
\begin{aligned}
Y^A
&=\max\{m_t^A(H,A)+e_A,10^{-3}\},
& e_A&\sim\mathcal N(0,\sigma_A^2), \\
Y^r
&=m_t^r(H,A)\exp(e_L-0.15^2/2),
& e_L&\sim\mathcal N(0,0.15^2),
\quad r\in\{M,V\},
\end{aligned}
\]
with
\(\sigma_A=0.20\,\mathrm{SD}\{m_t^A(H,A)\}\)
computed once on the earliest training block.
Evaluation uses the noiseless response surfaces.

\paragraph{Counterfactual decision quality.}
For each test row \(i\), let \(m_i(a)\) denote its noiseless response
and \(\widehat a_i\) the recommended action on the fold-specific
evaluation grid \(\mathcal A_{\mathrm{eval}}\). We compute
\[
\operatorname{NRegret}
=
\frac{1}{n}\sum_{i=1}^{n}
\frac{
\max_{a\in\mathcal A_{\mathrm{eval}}}m_i(a)
-m_i(\widehat a_i)
}{
\max_{a\in\mathcal A_{\mathrm{eval}}}m_i(a)
-\min_{a\in\mathcal A_{\mathrm{eval}}}m_i(a)
+10^{-8}
}.
\]
A and M use 10 seeds across four folds; V uses five seeds across
the same folds and four drift levels.
Table~\ref{tab:dominicks_semisynth} reports true future NRegret,
with V averaged across drift levels.
Ours achieves \(0.123\), \(0.063\), and \(0.075\) in A, M, and V,
respectively, and has lower reported mean regret than pooled prediction,
time-aware pooled prediction, VCNet, and Causal Transformer in all
three settings.

\begin{table}[t]
\centering
\small
\caption{\textbf{Semi-synthetic evaluation on Dominick's.}
Each entry reports mean true future NRegret
(reported uncertainty in parentheses); lower is better.
A and M denote exact additive and log-linked persistence.
V averages over four levels of action-dependent shape drift.}
\label{tab:dominicks_semisynth}
\setlength{\tabcolsep}{6pt}
\begin{tabular}{lccc}
\toprule
Method & A & M & V \\
\midrule
Ours
& \(\mathbf{0.123}\;(0.007)\)
& \(\mathbf{0.063}\;(0.003)\)
& \(\mathbf{0.075}\;(0.003)\) \\
Pooled
& \(0.259\;(0.008)\)
& \(0.072\;(0.005)\)
& \(0.089\;(0.004)\) \\
Time-aware pooled
& \(0.248\;(0.008)\)
& \(0.067\;(0.004)\)
& \(0.091\;(0.005)\) \\
VCNet
& \(0.761\;(0.015)\)
& \(0.730\;(0.012)\)
& \(0.697\;(0.009)\) \\
Causal Transformer
& \(0.772\;(0.013)\)
& \(0.731\;(0.012)\)
& \(0.726\;(0.009)\) \\
\bottomrule
\end{tabular}
\end{table}

\paragraph{Link-selection and persistence diagnostics.}
The exact regimes also allow us to assess the cost of selecting the
response scale. In A, Ours has NRegret \(0.1227\), compared with
\(0.1223\) for the fixed-correct identity link and \(0.1258\) for the
fixed-wrong log link. In M, the corresponding values are
\(0.0633\), \(0.0629\), and \(0.0654\), with the log link now
fixed-correct. These comparisons indicate low adaptation cost relative
to a learner supplied with the generating link.

On the original observed panel, we additionally evaluate next-week
factual prediction and rolling persistence diagnostics.
Across 78 weekly origins, a larger held-out profiled residual is
associated with worse next-week geometry loss
(Spearman correlation \(0.37\)).
We use this residual only as a prospective risk-ranking and
falsification diagnostic, rather than as a certificate that persistence
will hold in an unseen future period.



\end{document}